
\documentclass{article}
\usepackage[utf8]{inputenc}
\usepackage[T1]{fontenc}
\usepackage{CJKutf8} 
\usepackage{kotex}
\usepackage{microtype}
\usepackage{graphicx}
\usepackage{subfigure}
\usepackage{booktabs} 

\usepackage{hyperref}

\usepackage{xspace}


\usepackage[accepted]{icml-arxiv}

\usepackage{amsmath}
\usepackage{amssymb}
\usepackage{mathtools}
\usepackage{amsthm}
\usepackage{enumitem}
\usepackage[capitalize,noabbrev]{cleveref}
\usepackage{multirow}
\usepackage{booktabs}
\usepackage{rotating}
\theoremstyle{plain}
\newtheorem{theorem}{Theorem}[section]
\newtheorem{proposition}[theorem]{Proposition}
\newtheorem{lemma}[theorem]{Lemma}

\theoremstyle{definition}
\newtheorem{definition}[theorem]{Definition}

\theoremstyle{remark}

\usepackage{booktabs}  
\usepackage{array}
\usepackage{makecell}
\usepackage{booktabs}
\usepackage{tabularx}
\usepackage{colortbl} 

\newcommand{\bs}{\mathbf{s}}
\newcommand{\bo}{\mathbf{o}}

\usepackage{multirow}
\usepackage[textsize=tiny]{todonotes}

\usepackage{makecell}
\newcommand\robotemoji{\includegraphics[width=1em]{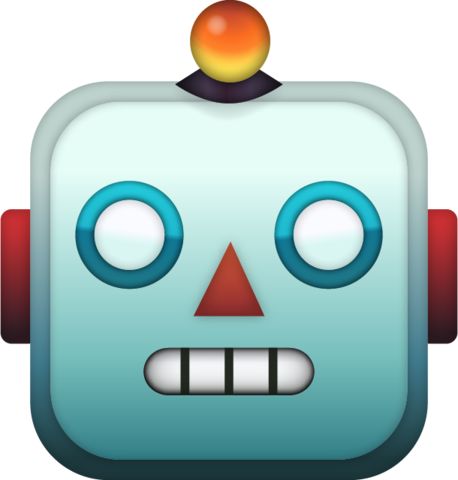}\xspace}

\newcommand\pypiemoji{\raisebox{-0.4pt}{\includegraphics[width=0.8
em]{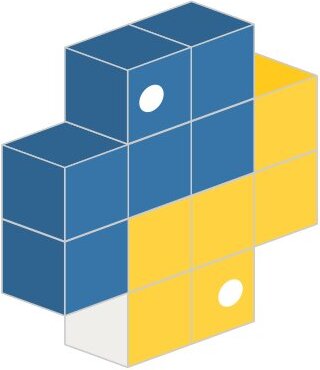}}}
\newcommand\programemoji{\raisebox{-0.4pt}{\includegraphics[width=0.8
em]{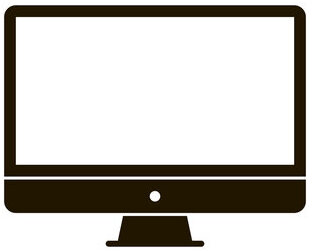}}}
\newcommand\ssemoji{\raisebox{-3pt}{\includegraphics[width=1.3
em]{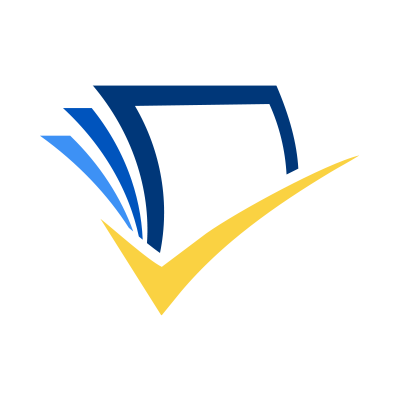}}}
\usepackage{tcolorbox}

\usepackage{listings}
\usepackage{enumitem,kantlipsum}

\definecolor{codegreen}{rgb}{0,0.6,0}
\definecolor{codegray}{rgb}{0.5,0.5,0.5}
\definecolor{codepurple}{rgb}{0.58,0,0.82}
\definecolor{backcolour}{rgb}{0.95,0.95,0.92}

\definecolor{darkred}{rgb}{0.6,0.0,0.0}
\definecolor{darkgreen}{rgb}{0,0.50,0}
\definecolor{lightblue}{rgb}{0.0,0.42,0.91}
\definecolor{orange}{rgb}{0.99,0.48,0.13}
\definecolor{grass}{rgb}{0.18,0.80,0.18}
\definecolor{pink}{rgb}{0.97,0.15,0.45}

\setlength{\tabcolsep}{2pt}
\usepackage{inconsolata}


\lstdefinelanguage{Python}{
 keywords={typeof, null, catch, switch, in, int, str, float, self,},
 keywordstyle=\color{blue},
 ndkeywords={boolean, throw, import},
 ndkeywords={return, class, if, elif, endif, while, do, else, True, False, catch, def},
 ndkeywordstyle=\color{magenta}\bfseries,
 identifierstyle=\color{black},
 sensitive=false,
 comment=[l]{\#},
 morecomment=[s]{/*}{*/},
 commentstyle=\color{codegreen}\ttfamily,
 stringstyle=\color{codepurple}\ttfamily,
}

\lstdefinestyle{example}{
    backgroundcolor=\color{backcolour},   
    commentstyle=\color{codegreen},
    numberstyle=\tiny\color{codegray},
    stringstyle=\color{codepurple},
    basicstyle=\footnotesize\ttfamily,
    breakatwhitespace=false,         
    breaklines=true,                 
    captionpos=b,                    
    keepspaces=true,                 
    numbers=left,                    
    numbersep=5pt,                  
    showspaces=false,                
    showstringspaces=false,
    showtabs=false,                  
    tabsize=1
}

\lstset{style=example}


\icmltitlerunning{Why and How LLMs Hallucinate: Connecting the Dots with Subsequence Associations}

\begin{document}

\icmltitle{Why and How LLMs Hallucinate: Connecting the Dots \\ with Subsequence Associations}



\icmlsetsymbol{equal}{*}

\begin{icmlauthorlist}
\icmlauthor{Yiyou Sun}{ucb}
\icmlauthor{Yu Gai}{ucb}
\icmlauthor{Lijie Chen}{ucb}
\icmlauthor{Abhilasha Ravichander}{uw}
\icmlauthor{Yejin Choi}{uw,stf}
\icmlauthor{Dawn Song}{ucb}
\end{icmlauthorlist}

\icmlaffiliation{ucb}{University of California Berkeley}
\icmlaffiliation{uw}{University of Washington}
\icmlaffiliation{stf}{Stanford University}

\icmlcorrespondingauthor{Yiyou Sun}{sunyiyou@berkeley.edu}

\icmlkeywords{Machine Learning, ICML}

\vskip 0.3in



\printAffiliationsAndNotice{}  

\begin{abstract}
Large language models (LLMs) frequently generate hallucinations—content that deviates from factual accuracy or provided context—posing challenges for diagnosis due to the complex interplay of underlying causes. This paper introduces a \emph{subsequence association} framework to systematically trace and understand hallucinations. Our key insight is that hallucinations arise when dominant hallucinatory associations outweigh faithful ones.
Through theoretical and empirical analyses, we demonstrate that decoder-only transformers effectively function as \emph{subsequence embedding} models, with linear layers encoding input-output associations. We propose a tracing algorithm that identifies causal subsequences by analyzing hallucination probabilities across randomized input contexts. Experiments show our method outperforms standard attribution techniques in identifying hallucination causes and aligns with evidence from the model’s training corpus. This work provides a unified perspective on hallucinations and a robust framework for their tracing and analysis. Our code is available at \url{https://github.com/sunyiyou/SAT.git}.
\end{abstract}


\section{Introduction}
\label{sec:intro}
Despite with LLM's widespread applications~\cite{OSWorld,niu2024screenagent,kapoor2024omniact,rawles2024androidinthewild,zhouwebarena, chezelles2024browsergym}, hallucination—a phenomenon where the model generates content that deviates from factual accuracy or intended meaning—remains a pervasive challenge. This issue not only undermines the reliability of LLMs but also limits their broader applicability. Unlike traditional software systems, where debugging mechanisms can trace errors to their source, hallucinations in LLMs lack analogous methods for causal analysis. When hallucinations occur, developers are left without effective tools to trace their origins, hindering their ability to understand and fix the problem.

Tracing the origins of hallucinations in language models is inherently challenging for several reasons. First, understanding their sources is difficult due to the multifaceted and entangled nature of the underlying causes. While previous research has provided valuable insights, it often focuses on isolated factors. These include, but are not limited to, overconfidence~\cite{yin2023large}, decoding randomness~\cite{lee2022factuality}, snowballing effects~\cite{zhang2023language}, 
long-tailed training samples~\cite{sun2023head},
misleading alignment training~\cite{wei2023simple}, spurious correlations~\cite{li2022pre}, exposure bias~\cite{bengio2015scheduled},  the reversal curse~\cite{berglundreversal}, and context hijacking~\cite{jeong2024hijacking}. This complexity necessitates a unified framework that facilitates analysis while being both general and representative of the aforementioned causes.

Moreover, strategies for tracing hallucinations remain insufficiently explored.  Traditional attribution methods, such as perturbation-based approaches~\cite{lime,scott2017unified,amara2024syntaxshap}, and gradient-based approaches~\cite{li2016visualizing,sundararajan2017axiomatic,enguehard2023sequential}, typically produce token-wise importance scores. However, these scores are limited in their utility, as they fail to establish causal relationships~\cite{adebayo2018sanity,chattopadhyay2019neural,sixt2020explanations,krishna2022disagreement,bilodeau2024impossibility}. Attribution scores are also \textbf{confined to the immediate context}, while the same tokens, when presented in different contexts, may generate entirely different outputs. Consequently, the same hallucination phenomenon is often not reproducible, further complicating the task of identifying and addressing its sources.

\begin{figure*}[htb]
\centering\includegraphics[width=1\linewidth]{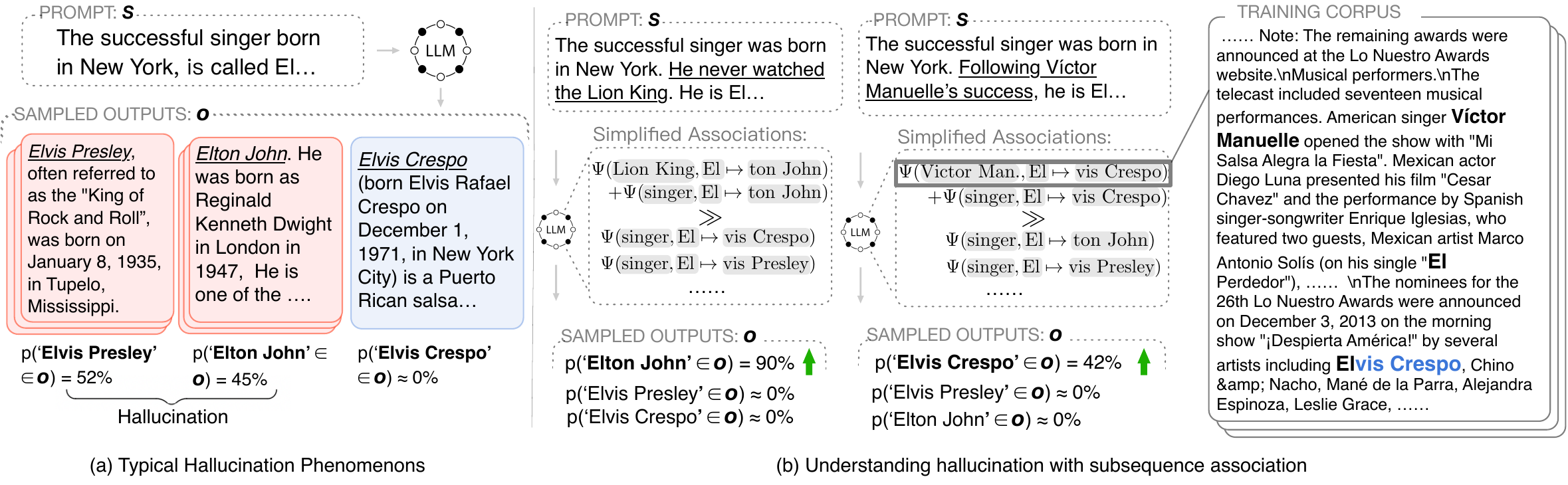}
    \caption{Hallucination examples and evidence of subsequence association in \textit{Olmo-7B}~\cite{olmo2024}. (a) An example of a hallucination. The red blocks highlight hallucinated outputs with singers who were not born in New York. (b) Additional examples illustrating subsequence associations (using $\Psi$ as a measure). The probability of each singer’s appearance changes with new sentences with special subsequence. These subsequence associations can be traced back to the training corpus, \textit{Dolma}~\cite{dolma}.}
    \label{fig:teaser}
    \vspace{-0.4cm}
\end{figure*}

These challenges raise a critical question: \textit{What is the fundamental way to understand and trace hallucinations?} Addressing this requires an examination of the nature of decoder-only architectures, which are trained to map sequences of tokens of length \(n\) from a vocabulary \(\mathcal{V}\) to the next token. Given the astronomical number of possible token sequences, most of which appear only once in the training set~\cite{kalai2024calibrated,dolma}, it is infeasible for the model to verbatim learn all mappings. 
As a result, there are inevitably scenarios where the LLM heavily relies on a non-contiguous subset of tokens—referred to as a \textit{subsequence}, which will lead to hallucinations when the subsequence represents an incomplete or incorrect context. This claim aligns with prior observations~\cite{geirhos2020shortcut,yuan2024llms}. 

For example, as shown in Figure~\ref{fig:teaser}(a), the LLM prioritizes the subsequence containing “\texttt{successful singer}” and “\texttt{El}” while ignoring “\texttt{born in New York},” leading to an input-conflicting hallucination~\cite{zhang2023siren}. Our key insight is that these subsequences act as “triggers” in token prediction, with their associations to target tokens embedded in the model. Further evidence, shown in Figure~\ref{fig:teaser}(b), demonstrates that adding a sentence like “he never watched the \texttt{Lion King}” increases the likelihood of generating “\texttt{Elton John},” who composed songs for the movie.

Within the framework of subsequence association, we introduce algorithms in Section~\ref{sec:method} to trace causal subsequences that lead to hallucinations. Rather than relying on token-wise importance scores confined to a single input context, our method generates a diverse corpus of sentences that randomly incorporate subsequences from the original prompt. By analyzing these input–output pairs, we measure the conditional probability of hallucination tokens appearing given each subsequence, thereby identifying triggers that reproducibly induce the same hallucination across varied contexts. This focus on \textbf{cross‐context reproducibility}—rather than minimizing a prediction gap within one context—enables robust causal tracing of hallucination sources.

To highlight the significance and practicality of studying hallucinations through subsequence associations, this paper makes the following contributions:
\begin{itemize}
    \item \textbf{Theoretical Foundation}:  
   We provide a theoretical foundation in Section~\ref{sec:foundation_arch_support} showing that feature blocks in popular decoder-only transformer architectures can fundamentally function as subsequence embedding models, along with linear layers encoding input-output associations. Additionally, we demonstrate in Section~\ref{sec:foundation_local_convergence} that a low error rate in subsequence association probabilities between the training data and the model’s generation—particularly for high-frequency subsequences—is a sufficient condition for convergence.
   \item \textbf{Unifying Hallucination Causes}:  
   We intuitively show in Section~\ref{sec:foundation_halu_case_study} how common hallucination sources identified in prior work can be unified under the subsequence association framework, offering a cohesive perspective for understanding and addressing hallucinations.
   \item \textbf{Reproducibility‑Focused Attribution}:  
  We introduce in Section~\ref{sec:method} a reproducibility‑centric metric that identifies subsequences consistently triggering hallucinations across diverse input contexts, thus overcoming the context‑bound limitations of traditional attribution methods.
   \item \textbf{Empirical Insights}: We empirically validate in Section~\ref{sec:exp} our proposed subsequence association algorithm, demonstrating that it identifies more causal subsequences than traditional attribution methods. Our results also reveal that both small and large LLMs produce hallucinations that are largely influenced by specific subsequences. Additionally, the identified subsequences are supported by their association probabilities in the LLM’s training corpus.
\end{itemize}


\section{Foundation of Subsequence Association}
\label{sec:foundation}
In this section, we address three key questions:  a) (Section~\ref{sec:foundation_halu_case_study}) How does subsequence association contribute to understanding and tracing hallucinations? b)  (Section~\ref{sec:foundation_arch_support}) How does a decoder-only transformer encode subsequences? c)  (Section~\ref{sec:foundation_local_convergence}) How is subsequence association learned during pre-training? We begin by introducing the basic notations and foundational setup.

\textbf{Setting.} A decoder-only large language model (LLM), denoted as \( F(\cdot) \), maps an input sequence \( \bs = [x_1, x_2, \dots, x_n] \in \mathcal{V}^n \), where \( x_i \) represents individual tokens and \( \mathcal{V} \) is the token vocabulary, to an output sequence \( \bo = [y_1, y_2, \dots, y_m] \in \mathcal{V}^m \), where \( y_i \) denotes generated tokens. Here, \( n \) and \( m \) represent the maximum allowable lengths for input and output sequences, respectively. For sequences shorter than these maximum lengths, padding tokens are added by default. 

To simplify the sequence generation model \( F(\cdot) \), we reduce it to a next-token prediction model, \( F_{\text{next}}(\cdot) \), which is trained on a dataset \( \mathcal{D} = \{(\bs^{(i)}, y^{(i)})\}_{i=1}^N \). Each data point in \( \mathcal{D} \) corresponds to a next-token prediction task. For instance, a document with \( t \) tokens is transformed into \( t-1 \) training data points, as each token (except the last) serves as input to predict the subsequent token.

The next-token predictor \( F_{\text{next}}(\bs) \) outputs a token sampled from the categorical distribution \( \operatorname{Cat}(\hat{p}(y|\bs)) \), where
\[
\hat{p}(y|\bs) = \exp(f(\bs, y)) / \sum_{y' \in \mathcal{V}} \exp(f(\bs, y')).
\]
Here, \( f(\bs, y) \) represents the logit score assigned to token \( y \) given the input sequence \( \bs \). The model \( F(\bs) \) is optimized by minimizing the negative log-likelihood loss:
\[
\mathcal{L}(\boldsymbol{\theta}) = \mathbb{E}_{(\bs, y) \in \mathcal{D}} [-\log \hat{p}(y\mid\bs)].
\]
Achieving near-zero loss for a de-duplicated training dataset requires that the parameter size scales as \( O(N) \), which is often unattainable in practice\footnote{For instance, during the pretraining phase of Olmo-7B~\cite{olmo2024}, the Dolmo dataset~\cite{dolma} used contains approximately 3 trillion tokens—far exceeding the model's parameter size. }. It highlights the necessity of studying subsequence associations as a means to better understand and mitigate hallucination phenomena. 

\subsection{Basics of Subsequence Associations}
\label{sec:foundation_basic_seq_ass}

To facilitate analysis of subsequences, we first establish formal definitions to describe subset relationships in sequences.

\begin{definition}[\textbf{Subsequence Relation (\(\sqsubseteq\))}]
\label{def:subsequence}
Given two sequences \( \Tilde{\bs} \) and \( \bs \), we say that \( \Tilde{\bs} \sqsubseteq \bs \) if and only if \( \Tilde{\bs} \) can be derived by deleting zero or more elements from \( \bs \) while preserving the order of the remaining elements. The subsequence \underline{\textit{does not need to appear contiguously}}\footnote{These subsequences are non-contiguous because LLMs often generate lengthy responses where hallucinated content is interspersed with normal content.} in the original sequence.
\end{definition}

Similarly, we use $\Tilde{\bo} \sqsubseteq \bo$ to denote the subsequence in the output. To isolate and analyze hallucinations, we denote  \( \Tilde{\bo}^{h} \sqsubseteq \bo \) as the \textbf{hallucinated subsequence}.  
For example, consider the response illustrated in Figure~\ref{fig:teaser}:
\begin{quote}
\textit{"The successful singer was born in New York. He never watched the Lion King. He is  El\textcolor{red}{ton John}."}
\end{quote}
Here, \( \Tilde{\bo}^{h} \) corresponds to the hallucinated token subsequence \((\texttt{ton}, \texttt{John})\). In this scenario, the subsequence \((\texttt{Lion}, \texttt{King}, \texttt{El})\) from the input may act as a causal trigger for generating the output \( \Tilde{\bo}^{h} \). This connection arises because Elton John composed music for \textit{The Lion King} movie. In this paper, \( \Tilde{\bs} \sqsubseteq \bs \) is often used to denote the subsequence of the input sequence that triggers the hallucinated subsequence \( \Tilde{\bo}^{h} \sqsubseteq \bo \). This triggering relationship is formally defined as follows:

\begin{definition}[\textbf{Subsequence Association}]
\label{def:subsequence_association} For input $\bs$ sampled in the real-world input distribution $\mathcal{P}_{\bs}$, the association between a subsequence \( \Tilde{\bo}\) in the output \( \bo \sim F(\bs) \) and a potential triggering subsequence \( \Tilde{\bs} \) in the input \( \bs \) is defined as:
\[
\Psi(\Tilde{\bo}, \Tilde{\bs}) = \log \frac{\mathbb{P}_{\bs \sim \mathcal{P}_{\bs}, \bo \sim F(\bs)}\left(\Tilde{\bo} \sqsubseteq \bo \mid \Tilde{\bs} \sqsubseteq \bs\right)}{\mathbb{P}_{\bs \sim \mathcal{P}_{\bs}, \bo \sim F(\bs)}\left(\Tilde{\bo} \sqsubseteq \bo \right)}.
\]
\end{definition}

This measure extends the concept of \textit{pointwise mutual information (PMI)}, originally introduced to quantify the association between individual tokens or words~\cite{church1990word}, to subsequences.
While the exact computation of \( \Psi(\Tilde{\bo}, \Tilde{\bs}) \) is intractable due to the impossibility of enumerating all possible input sequences, the definition serves as a general notion in analysis. In practice, an approximate form will be employed empirically as detailed in Section~\ref{sec:method}.
\subsection{Unified Understanding for Hallucination with Subsequence Associations Analysis}
\label{sec:foundation_halu_case_study}

In this section, we explore how subsequence associations can be utilized to understand and analyze the hallucination phenomenon in a unified way. The goal in this section is to \textbf{provide an intuitive perspective} towards hallucination instead of a rigorous proof.
To facilitate this analysis, we build upon the definition introduced earlier and present the following proposition (proof in Appendix~\ref{sec:sup_other_th_results}):
\begin{proposition} (Simplified form)
\label{th:halu_to_sumass}
Under the independence assumption for subsequences appearing in \( \bs' \), we have:
\[
\log \mathbb{P}(\Tilde{\bo} \sqsubseteq \bo | \bs = \bs') = \sum_{\Tilde{\bs} \sqsubseteq \bs'} \Psi(\Tilde{\bo}, \Tilde{\bs}) + \log \mathbb{P}(\Tilde{\bo} \sqsubseteq \bo),
\]
\end{proposition}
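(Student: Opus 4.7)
The plan is to rewrite the conditional probability on the left as a posterior via Bayes' rule, apply the independence assumption to factor the resulting likelihood over subsequences $\Tilde{\bs} \sqsubseteq \bs'$, and then recognize each factor as the exponential of a $\Psi$ term.

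First I would interpret the event $\{\bs = \bs'\}$ as the conjunction $\bigcap_{\Tilde{\bs} \sqsubseteq \bs'}\{\Tilde{\bs} \sqsubseteq \bs\}$ of the ``$\Tilde{\bs}$ appears as a subsequence of $\bs$'' events; this is the ``simplification'' flagged in the statement. Writing $A$ for the event $\{\Tilde{\bo} \sqsubseteq \bo\}$ and $E_{\Tilde{\bs}}$ for $\{\Tilde{\bs} \sqsubseteq \bs\}$, Bayes' rule gives
\[
\mathbb{P}\bigl(A \,\big|\, \textstyle\bigcap_{\Tilde{\bs}} E_{\Tilde{\bs}}\bigr) \;=\; \frac{\mathbb{P}\bigl(\bigcap_{\Tilde{\bs}} E_{\Tilde{\bs}} \,\big|\, A\bigr)\,\mathbb{P}(A)}{\mathbb{P}\bigl(\bigcap_{\Tilde{\bs}} E_{\Tilde{\bs}}\bigr)}.
\]
Under the stated independence assumption (applied both marginally and conditioned on $A$), numerator and denominator factor as products over $\Tilde{\bs} \sqsubseteq \bs'$, yielding
\[
\mathbb{P}(A \mid \bs = \bs') \;=\; \mathbb{P}(A)\prod_{\Tilde{\bs} \sqsubseteq \bs'} \frac{\mathbb{P}(E_{\Tilde{\bs}} \mid A)}{\mathbb{P}(E_{\Tilde{\bs}})}.
\]
A second invocation of Bayes' rule on each factor rewrites the ratio as $\mathbb{P}(A \mid E_{\Tilde{\bs}})/\mathbb{P}(A) = \exp\!\bigl(\Psi(\Tilde{\bo},\Tilde{\bs})\bigr)$ by definition of $\Psi$. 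Taking logarithms then turns the product into the sum $\sum_{\Tilde{\bs} \sqsubseteq \bs'} \Psi(\Tilde{\bo},\Tilde{\bs})$ and recovers the claimed identity.

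The main conceptual obstacle is the first step: strictly speaking $\{\bs = \bs'\}$ is \emph{more} restrictive than $\bigcap_{\Tilde{\bs} \sqsubseteq \bs'}\{\Tilde{\bs} \sqsubseteq \bs\}$, since many other input sequences also contain every subsequence of $\bs'$ (possibly with extra tokens interleaved). I would justify this identification as precisely the content of the ``simplified form'' qualifier in the statement, and spell it out explicitly in the appendix as the analogue of the standard naive Bayes reduction used in text classification. The only other nontrivial point is that the independence assumption must hold for the $E_{\Tilde{\bs}}$ events \emph{both} unconditionally and conditionally on $A$ for the factorization of numerator and denominator to cancel correctly; once that is granted, the remaining steps are elementary Bayes' rule manipulations and logarithm identities.
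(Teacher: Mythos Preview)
Your proposal is correct and follows essentially the same route as the paper's proof: Bayes' rule plus the two independence assumptions (marginal and conditional on $\{\Tilde{\bo}\sqsubseteq\bo\}$) to obtain the product form $\mathbb{P}(A)\prod_{\Tilde{\bs}}\mathbb{P}(A\mid E_{\Tilde{\bs}})/\mathbb{P}(A)$, then logarithms. The paper simply asserts the factored product in one line and takes logs, whereas you spell out the intermediate Bayes steps and are more explicit about the identification of $\{\bs=\bs'\}$ with $\bigcap_{\Tilde{\bs}}E_{\Tilde{\bs}}$ and the need for \emph{both} independence assumptions---points the paper records as hypotheses in its formal appendix statement but does not discuss.
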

where we abbreviate the subscription $\bs \sim \mathcal{P}_{\bs}, \bo \sim F(\bs)$ for simplicity. 
This proposition indicates that the probability of a hallucination subsequence $\Tilde{\bo}^{h}$ appearing in the output of $\bs'$ is influenced by the cumulative effect of all subsequence associations. In practice, due to the language diversity, most subsequences have negligible associations with hallucinations, allowing us to narrow the analysis to dominant subsequences, as discussed next.

To formalize, we consider a faithful version \( \Tilde{\bo}^{g} \) of the hallucinated output \( \Tilde{\bo}^{h} \). For instance, in Figure~\ref{fig:teaser}, \( \Tilde{\bo}^{h} \) is “\texttt{ton John},” while \( \Tilde{\bo}^{g} \) is the desired “\texttt{vis Crespo}.” These outputs are mutually exclusive\footnote{It is important to note that this analysis considers a single hallucination pathway for simplicity. In reality, multiple \( \Tilde{\bo}^g \) (\{\( \Tilde{\bo}_1^g \), \( \Tilde{\bo}_2^g \),...\}) may compete with \( \Tilde{\bo}^h \) simultaneously.}, as they typically correspond to predictions at the same token position. This leads  to the central questions: a) \textit{Why do the hallucinated subsequences \( \Tilde{\bo}^{h} \) appear?} and b) \textit{Why do the faithful ones \( \Tilde{\bo}^{g} \) fail to appear?} 

Since \( \Tilde{\bo}^{h} \sqsubseteq \bo \) and \( \Tilde{\bo}^{g} \sqsubseteq \bo \) are mutually exclusive events, hallucination occurs when:
\[
\mathbb{P}(\Tilde{\bo}^{h} \sqsubseteq \bo | \bs=\bs') > \mathbb{P}(\Tilde{\bo}^{g} \sqsubseteq \bo | \bs=\bs').
\]
By applying Proposition~\ref{th:halu_to_sumass} and assuming, for simplicity, that the two events have approximately equal marginal probabilities
\(
\mathbb{P}(\Tilde{\bo}^{h} \sqsubseteq \bo) \approx \mathbb{P}(\Tilde{\bo}^{g} \sqsubseteq \bo),
\) this inequality reduces to analyzing the relative strengths of subsequence associations:
\(
\sum_{\Tilde{\bs} \sqsubseteq \bs} \Psi(\Tilde{\bo}^{h}, \Tilde{\bs}) > \sum_{\Tilde{\bs} \sqsubseteq \bs} \Psi(\Tilde{\bo}^{g},\Tilde{\bs}).
\)
In practice, hallucination often arises when a single trigger subsequence \( \Tilde{\bs} \) dominates the associations. In this scenario, it allows us to simplify the analysis further:
\[
\Psi(\Tilde{\bo}^{h}, \Tilde{\bs}^h) > \Psi(\Tilde{\bo}^{g}, \Tilde{\bs}^g),
\]
where \( \Tilde{\bs}^h \) and \( \Tilde{\bs}^g \) are the subsequences with the strongest associations to \( \Tilde{\bo}^{h} \) and \( \Tilde{\bo}^{g} \), respectively.
Even when hallucinations occur, the magnitudes of the associations \( \Psi(\Tilde{\bo}^{h}, \Tilde{\bs}^h) \) and \( \Psi(\Tilde{\bo}^{g}, \Tilde{\bs}^g) \) provide insights into the underlying causes. We present two typical cases below:

\begin{figure}[tb]
    \centering
    \includegraphics[width=0.75\linewidth]{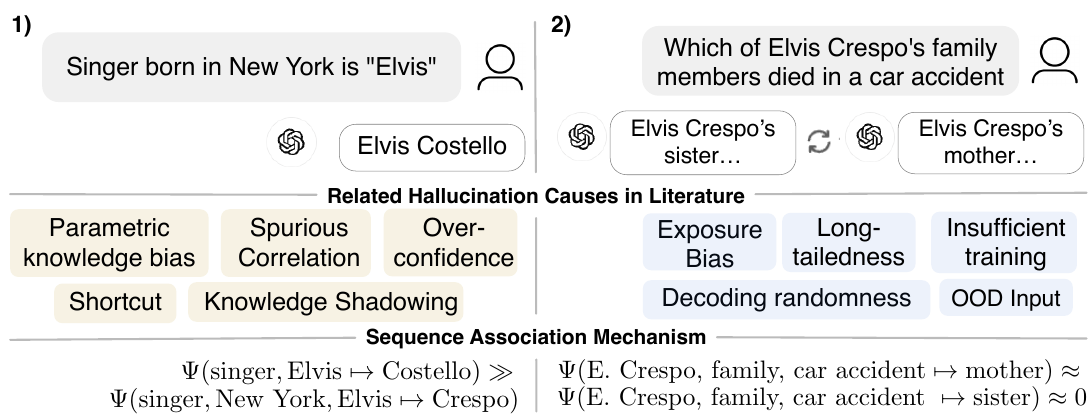}
    \caption{Examples of two hallucination cases of \textit{GPT-4o} with simplified input/output provided here. Full screenshots are included in Appendix~\ref{sec:sup_example}. The right side of the examples displays the results of two generations. The faithful answers to the examples are \textit{Elvis Crespo} (left) and his niece (right) respectively.}
    \label{fig:case_study}
\end{figure}
\begin{itemize}[leftmargin=0pt, itemindent=0pt] 
  \item \( \Psi(\Tilde{\bo}^{h}, \Tilde{\bs}^h) \gg \Psi(\Tilde{\bo}^{g}, \Tilde{\bs}^g) \): In this scenario, the model produces hallucinations results with \textbf{over-confidence}~\cite{yin2023large}. It occurs when the language model exhibits a strong bias toward an incorrect or a limited subset of trigger subsequences in the input. This bias underlies phenomena such as \textbf{parametric knowledge bias}~\cite{ren2023investigating}, \textbf{spurious correlations}~\cite{li2022pre}, \textbf{shortcuts}~\cite{yuan2024llms}, and \textbf{overshadowing}~\cite{zhang2024knowledge}. For instance, in the Figure~\ref{fig:case_study} (left), \textit{GPT-4o} relies on a shorter subsequence that is more prevalent in the training set, leading to hallucination.
  \item \( \Psi(\Tilde{\bo}^{h}, \Tilde{\bs}^h) \approx \Psi(\Tilde{\bo}^{g}, \Tilde{\bs}^g) \): Here, hallucinations arise due to inherent \textbf{decoding randomness}~\cite{lee2022factuality}. A typical sub-case is when the association magnitudes are close to zero, it indicates that \textbf{no} subsequence is strongly associated with both the hallucinated or faithful answer. This situation typically occurs when the input is \textbf{out-of-distribution}~\cite{mccoy2023embers} or inconsistent with the training distribution (\textbf{exposure bias})~\cite{bengio2015scheduled}. Other contributing factors include entities in the \textbf{long-tailed}~\cite{sun2023head} region, the model being \textbf{architecturally limited}~\cite{banerjee2024llms}, or \textbf{insufficient training}~\cite{zhang2023siren} with many factoids appearing only once~\cite{kalai2024calibrated}. For example, in the right example of Figure~\ref{fig:case_study}, \textit{GPT-4o} encounters a question about a very rare fact and struggles to generate the correct answer.
\end{itemize}

Beyond these two primary cases, other causes of hallucination merit discussion:
a) The \textbf{reversal curse}~\cite{berglundreversal} occurs because the learned sequence associations in language models are unidirectional. Consequently, querying knowledge with reversed associations leads to suboptimal performance;
b) The \textbf{snowballing effect}~\cite{zhang2023language} or \textbf{context hijacking}~\cite{jeong2024hijacking} happens when the model is provided with a false premise or receives repetitive incorrect answers, strengthening the subsequence association to the hallucinated outcomes. For instance, in Figure~\ref{fig:teaser}, the mention of ``\texttt{Lion King}'' strengthens the association with ``\texttt{Elton John}.'';
c) \textbf{Outdated or false knowledge} in the training dataset~\cite{dziri2022origin,penedo2023refinedweb,sun2023head} lead to hallucinations by establishing incorrect subsequence associations.
\begin{tcolorbox}[colback=gray!10, colframe=black, title=Takeaway, top=0pt, bottom=1pt, left=1pt, right=1pt, fonttitle=\bfseries]
Cases discussed in this section demonstrate how subsequence association analysis provides a unified framework for understanding common causes of hallucinations in language models.
\end{tcolorbox}




\subsection{Subsequence
Associations Encoded in the Transformer Block} 
\label{sec:foundation_arch_support}

Building on the high-level intuition of how subsequence associations influence the behavior of hallucination, we now delve into the mechanisms by which decoder-only transformers encode these associations in their parameters. This discussion consists of two main parts:  
a) \textit{Subsequence embedding in transformer feature blocks}. We demonstrate in Theorem~\ref{th:emb_informal} that transformer feature blocks can near-orthogonally embed each unique subsequence.  b) \textit{Subsequence association in the final feature block}. We illustrate how the final feature block encodes subsequence associations by mapping subsequence embeddings to output logits.
\begin{figure}[htb]
    \centering
    \includegraphics[width=0.5\linewidth]{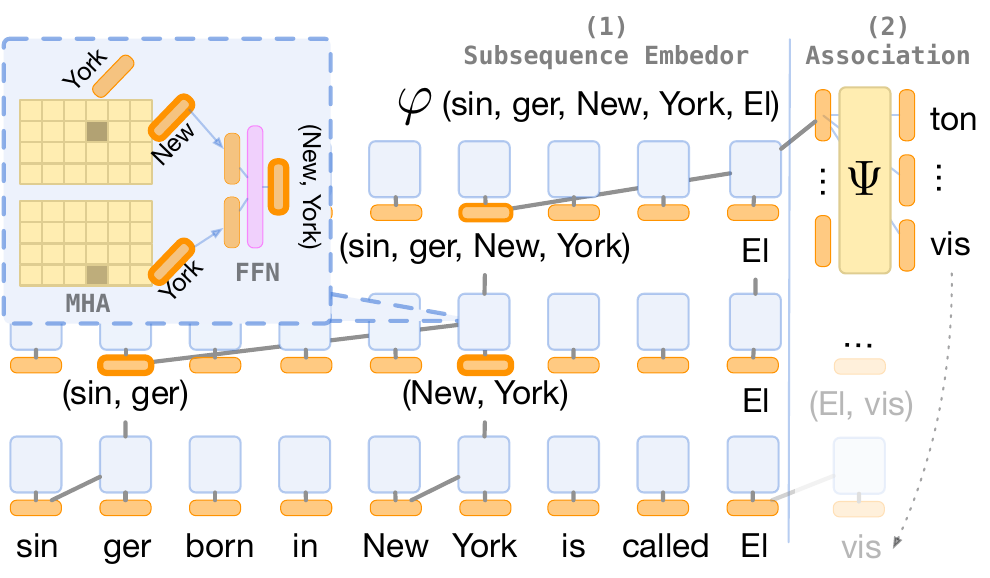}
    \caption{Illustration of subsequence embedding procedure via transformer blocks and the encoding of associations within matrix parameters. Orange boxes represent embeddings for individual tokens and subsequences. Blue boxes denote transformer blocks, each comprising Multi-Head Attention (MHA) and Feed-Forward Networks (FFN). The weight matrix is highlighted in yellow boxes. Grey lines depict the flow of token information used to form the subsequence embedding for (\texttt{sin}, \texttt{ger}, \texttt{New}, \texttt{York}, \texttt{El}).}
    \label{fig:subseq_embed}
\end{figure}

\textbf{Subsequence embedded in the transformer block.}
The central takeaway from Theorem~\ref{th:emb_informal} is that the transformer feature block at the \( t \)-th position has the capacity to generate a feature that acts as a linear combination of embeddings of subsequences ending in the token \( x_t \). This is illustrated in Figure~\ref{fig:subseq_embed}, where a representative subsequence ending in the token \texttt{El} is shown. Due to the model’s capacity limits, the number of subsequences a transformer can encode is constrained. Theorem~\ref{th:sup_seq_full} in the Appendix provides a formal statement; a simplified version is presented here.

\begin{theorem}[Subsequence Embedding with Transformer] (informal)
For an input sequence $\bs = [x_1, x_2, ..., x_n] \in \mathcal{V}^{n}$ and a subsequence set $\mathcal{S}$ with subsequence's length smaller than $2^l$, the feature in the $l$-th layer of transformer at $t$-th position can serve as the linear combination of the embeddings of the subsequences with the same ending token $x_t$ ($t$-token in $\bs$). 
Specifically, the feature in the $l$-th layer is written as:
$\Phi^{(l)}(\bs) = [\phi_1^{(l)}(\bs), \phi_2^{(l)}(\bs), ..., \phi_n^{(l)}(\bs)]^{\top}, $where 
    \[
        \phi_t^{(l)}(\bs) = \sum_{ \Tilde{\bs} \in \mathcal{S} \cap Sub(\bs, t)}\mu^{l}_{\Tilde{\bs}}\cdot \varphi_{}(\Tilde{\bs}),  
    \]
with coefficient $\mu^{l}_{\Tilde{s}} > 0$ and $Sub(\bs, t)=
\{\Tilde{\bs}\mid\Tilde{\bs} \sqsubseteq \bs_{[:t]}, \Tilde{\bs}_{[-1]} = x_t \}$ is a collection of the subsequences of the first-$t$-token sequence $\bs_{[:t]}$ with the same ending token $x_t$. The embeddings for subsequences are nearly orthogonal to each other:
$
\forall\ \Tilde{\bs} \neq \Tilde{\bs}' \text{ in }
\bs, 
\quad
\bigl\lvert\langle\,\varphi(\Tilde{\bs}),\varphi(\Tilde{\bs}')\rangle\bigr\rvert< \epsilon.
$
\label{th:emb_informal}
\end{theorem}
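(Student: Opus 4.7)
The plan is to prove the theorem by induction on the layer depth $l$, with the inductive hypothesis that for every position $t$ and every input sequence $\bs$, $\phi_t^{(l)}(\bs)$ equals a nonnegative linear combination of near-orthogonal vectors $\{\varphi(\Tilde{\bs}) : \Tilde{\bs} \in \mathcal{S} \cap Sub(\bs,t),\ |\Tilde{\bs}| \le 2^l\}$. The base case $l=0$ is immediate: choose the input token embedding matrix to have near-orthogonal columns (a standard random-Gaussian / Johnson--Lindenstrauss construction in dimension $d = \Theta(\log |\mathcal{V}|/\epsilon^2)$) and set $\varphi((x_t))$ to the column indexed by $x_t$; then $\phi_t^{(0)}(\bs) = 1 \cdot \varphi((x_t))$, the unique length-$1$ subsequence ending at $x_t$.

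The inductive step exploits the divide-and-conquer structure behind the length bound $2^l$. Any subsequence $\Tilde{\bs}$ of length at most $2^l$ ending at $x_t$ splits as $\Tilde{\bs}_1 \oplus \Tilde{\bs}_2$ with each half of length at most $2^{l-1}$, where $\Tilde{\bs}_1$ ends at some $x_{t'}$ with $t' \le t$ and $\Tilde{\bs}_2$ ends at $x_t$. By the inductive hypothesis, $\varphi(\Tilde{\bs}_1)$ already appears with positive coefficient inside $\phi_{t'}^{(l-1)}$ and $\varphi(\Tilde{\bs}_2)$ inside $\phi_t^{(l-1)}$. I would construct the $l$-th transformer block to carry out a binding operation $\mathrm{bind}(\varphi(\Tilde{\bs}_1),\varphi(\Tilde{\bs}_2)) \approx \varphi(\Tilde{\bs}_1 \oplus \Tilde{\bs}_2)$: the causal multi-head attention at position $t$ is parameterized so that, after softmax, it aggregates the $\phi_{t'}^{(l-1)}$ contributions across $t' \le t$ through the value projection, while $\phi_t^{(l-1)}$ is forwarded via the residual stream; the subsequent FFN then implements $\mathrm{bind}(\cdot,\cdot)$ on the pair. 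Linearity of the value aggregation combined with the superposition decomposition of the previous layer yields, by a direct expansion, that $\phi_t^{(l)}$ is precisely the claimed positive combination over all length-$\le 2^l$ subsequences ending at $x_t$.

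For near-orthogonality of $\{\varphi(\Tilde{\bs})\}_{\Tilde{\bs} \in \mathcal{S}}$, the cleanest route is to \emph{define} each $\varphi(\Tilde{\bs})$ compositionally as $\mathrm{bind}$ applied recursively to its token embeddings---e.g.\ via circular convolution / holographic reduced representation or a tensor sketch---so that the operator realized inside the block coincides with the one used to define the target embeddings. A standard union-bound concentration argument in dimension $d = \Omega(\log |\mathcal{S}|/\epsilon^2)$ then yields $\bigl\lvert\langle \varphi(\Tilde{\bs}), \varphi(\Tilde{\bs}')\rangle\bigr\rvert < \epsilon$ for all pairs in $\mathcal{S}$ simultaneously. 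Positivity of the coefficients $\mu^{l}_{\Tilde{\bs}}$ is preserved along the induction because softmax attention weights are nonnegative and the chosen binding can be made positively homogeneous in each argument.

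The main obstacle is the binding block itself: a single MHA+FFN of polynomial width must realize $\mathrm{bind}$ consistently across an exponentially large family of subsequence pairs, while simultaneously (i) preserving near-orthogonality of outputs given near-orthogonal inputs, (ii) producing strictly nonnegative coefficients, and (iii) avoiding cross-terms that would mix distinct pairs. A naive bilinear binding costs width quadratic in $d$; a more economical construction based on structured random projections or a two-layer ReLU network with random feature maps appears necessary. Controlling the accumulated approximation error across all $l$ layers---so that the final inner-product bound $\epsilon$ in the theorem is genuine rather than inflated by a factor depending on $l$---is the quantitatively delicate part, and is where a careful choice of $\mathrm{bind}$ (with contractive, isometry-like behavior in expectation) will be essential.
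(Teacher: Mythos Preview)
Your high-level inductive scaffold matches the paper's: both induct on layer depth, split each target subsequence $\Tilde{\bs}$ of length $\le 2^l$ into halves $\Tilde{\bs}_\triangleleft,\Tilde{\bs}_\triangleright$ of length $\le 2^{l-1}$, and use causal attention at position $t$ to pull the left-half embedding from some earlier position while the right-half sits at $t$ via a copy head.

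The gap is the binding step, which you correctly flag as the main obstacle but leave unresolved. You propose a \emph{compositional} binder (HRR / circular convolution / tensor sketch) so that $\varphi(\Tilde{\bs})=\mathrm{bind}(\varphi(\Tilde{\bs}_\triangleleft),\varphi(\Tilde{\bs}_\triangleright))$, and then worry---legitimately, given that choice---about realizing it in one MHA+FFN block of polynomial width, about cross-terms among exponentially many pairs, and about error accumulation across $l$ layers. The paper avoids all three difficulties by dropping compositionality: the embeddings $\{\varphi(\Tilde{\bs})\}_{\Tilde{\bs}\in\mathcal{S}}$ are chosen \emph{freely} as near-orthogonal unit vectors (a sphere-packing argument gives $d=O(\epsilon^{-2}\log(|\mathcal{V}|+|\mathcal{S}|))$). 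The FFN at layer $l$ is then a lookup table with $d_f=O(|\mathcal{S}|)$ hidden neurons, one per $\Tilde{\bs}\in\mathcal{S}$, whose input weight is $\mathrm{concat}(\varphi(\Tilde{\bs}_\triangleleft),\varphi(\Tilde{\bs}_\triangleright))$ and output weight is $\varphi(\Tilde{\bs})$; ReLU with a unit bias zeros out the cross-terms thanks to near-orthogonality. The attention $W_{KQ}$ is the explicit rank-$|\mathcal{S}|$ sum $\sum_{\Tilde{\bs}}\gamma_{\Tilde{\bs}}\,\varphi(\Tilde{\bs}_\triangleleft)\varphi(\Tilde{\bs}_\triangleright)^\top$ with large $\gamma$, so softmax concentrates on positions carrying the matching left half. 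Because only $|\mathcal{S}|$ subsequences are ever targeted, width is polynomial in $|\mathcal{S}|$; and because each $\varphi(\Tilde{\bs})$ is freshly assigned rather than computed from its parts, there is no layer-wise error to accumulate---the $\epsilon$ bound holds by construction at every layer. This enumerate-and-memorize construction is less elegant than a universal binder but removes exactly the obstacle you could not get past.
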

This near-orthogonality ensures that subsequence associations act independently, consistent with the assumptions in Proposition~\ref{th:halu_to_sumass}. Next, we examine how the weight matrix in the final logit layer encodes these associations.

\textbf{Encoding subsequence associations in the final layer.}
To simplify notation, we denote the feature in the penultimate layer as \( \Phi(\bs) \). The logit output layer of a decoder-only transformer in the last layer is given by: 
\[
f(\bs, y) = \varphi(y)^\top W_{OV} \Phi(\bs)^\top \sigma(\Phi(\bs) W_{KQ} \phi_n(\bs)),
\]
where $\sigma$ is the softmax function, \( W_{KQ} \in \mathbb{R}^{d \times d} \) and \( W_{OV} \in \mathbb{R}^{d \times d} \) are the key-query and output-value matrices (\( W_{KQ} = W_K^\top W_Q \), \( W_{OV} = W_O^\top W_V \)). According to Theorem~\ref{th:emb_informal}, this can be rewritten as :
\begin{equation}
    f(\bs, y) = \sum_{\Tilde{\bs} \in  (\cup^n_{t=1}Sub(\bs, t)) \cap \mathcal{S}} \hat{\Psi}(y, \Tilde{\bs}) \cdot \boldsymbol{\omega}(\Tilde{\bs})
    \label{eq:logit_sum_association}
\end{equation}
where
\(
\hat{\Psi}(y, \Tilde{\bs}) = \varphi(y)^\top W_{OV} \varphi(\Tilde{\bs}), \boldsymbol{\omega}(\Tilde{\bs})
\) is a scaling value. Here, $\hat{\Psi}$ is used to differentiate from $\Psi$ in Definition~\ref{def:subsequence_association}. While they differ in formulation, both encapsulate the semantic meaning of subsequence associations.

\begin{tcolorbox}[colback=gray!10, colframe=black, title=Takeaway, top=0pt, bottom=1pt, left=1pt, right=1pt, fonttitle=\bfseries]
The formulation~\ref{eq:logit_sum_association} reveals how subsequence associations are encoded within the transformer’s architecture, linking prior subsequences to the next token prediction. This perspective aligns with the analysis in Section~\ref{sec:foundation_halu_case_study} that the probability of a hallucinated token is cumulatively influenced by all relevant subsequence associations. 
\end{tcolorbox}

\subsection{Convergence Analysis for Subsequence Association}
\label{sec:foundation_local_convergence}
We have demonstrated that transformer blocks intrinsically encode subsequence associations. In this section, we investigate how these associations emerge during training. To formalize this intuition, we present the following proposition (formal version in Appendix~\ref{sec:sup_theory}, Proposition~\ref{th:sup_gradient} ):
\begin{proposition} (Informal)  
Under a simplified linear attention scenario, a sufficient condition for LLM's convergence is that the total gradient norm across all subsequence associations remains small. Specifically, the gradient norm for any subsequence association satisfies:  
\begin{align*}
\left|\Delta_{\mathcal{L}(\boldsymbol{\theta})}(\tilde{\bs}, y)\right| 
\leq const \cdot \left(\left\|W_{O V}\right\|_2 + \left\|W_{KQ}\right\|_2\right) \cdot \mathbb{P}_{\mathcal{D}}(\tilde{\bs} \sqsubseteq \bs) \left|  
\underset{{\substack{(\bs,\_) \in \mathcal{D} \\ \tilde{\bs} \sqsubseteq \bs}}}{\mathbb{E}}\left[\hat{p}(y|\bs)\right]
- \mathbb{P}_{\mathcal{D}}(y|\tilde{\bs} \sqsubseteq \bs) \right|
\end{align*}
\label{th:gradient_informal}  
\end{proposition}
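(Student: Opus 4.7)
The plan is to treat $\hat{\Psi}(y,\tilde{\bs})$ as the effective parameter indexing each subsequence--token association, so that $\Delta_{\mathcal{L}(\boldsymbol{\theta})}(\tilde{\bs},y)$ becomes the chain-rule derivative of the cross-entropy loss through the logit decomposition~\eqref{eq:logit_sum_association}. First, I would invoke the standard softmax identity $\partial(-\log\hat{p}(y^{*}|\bs))/\partial f(\bs,y)=\hat{p}(y|\bs)-\mathbb{1}[y=y^{*}]$ together with $\partial f(\bs,y)/\partial \hat{\Psi}(y,\tilde{\bs})=\mathbb{1}[\tilde{\bs}\sqsubseteq\bs]\cdot \boldsymbol{\omega}(\tilde{\bs})$, which follows directly from differentiating~\eqref{eq:logit_sum_association} term by term. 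Combining them and taking expectation over $\mathcal{D}$ yields
\[
\Delta_{\mathcal{L}(\boldsymbol{\theta})}(\tilde{\bs},y) \;=\; \underset{(\bs,y^{*})\in\mathcal{D}}{\mathbb{E}}\!\left[\mathbb{1}[\tilde{\bs}\sqsubseteq \bs]\,\boldsymbol{\omega}(\tilde{\bs})\bigl(\hat{p}(y|\bs)-\mathbb{1}[y=y^{*}]\bigr)\right].
\]

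Next, I would factor out the base rate of the subsequence event. Splitting the expectation by conditioning on $\{\tilde{\bs}\sqsubseteq\bs\}$ produces a prefactor of $\mathbb{P}_{\mathcal{D}}(\tilde{\bs}\sqsubseteq\bs)$, and the inner expectation of $\mathbb{1}[y=y^{*}]$ under this conditioning becomes $\mathbb{P}_{\mathcal{D}}(y\mid\tilde{\bs}\sqsubseteq\bs)$. Pulling $\boldsymbol{\omega}(\tilde{\bs})$ out of the inner expectation via a uniform bound, taking absolute values, and applying the triangle inequality then produces the target factorization
\[
\bigl|\Delta_{\mathcal{L}(\boldsymbol{\theta})}(\tilde{\bs},y)\bigr| \;\le\; \sup_{\bs\,:\,\tilde{\bs}\sqsubseteq\bs}\!\lvert \boldsymbol{\omega}(\tilde{\bs})\rvert\cdot \mathbb{P}_{\mathcal{D}}(\tilde{\bs}\sqsubseteq\bs)\cdot \Bigl|\underset{\tilde{\bs}\sqsubseteq\bs}{\mathbb{E}}\bigl[\hat{p}(y|\bs)\bigr]-\mathbb{P}_{\mathcal{D}}(y\mid\tilde{\bs}\sqsubseteq\bs)\Bigr|,
\]
so the entire conclusion hinges on controlling $\lvert\boldsymbol{\omega}(\tilde{\bs})\rvert$ by $C(\lVert W_{OV}\rVert_{2}+\lVert W_{KQ}\rVert_{2})$.

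The main obstacle is exactly this last bound. The scaling factor $\boldsymbol{\omega}(\tilde{\bs})$ aggregates the softmax attention weights against the linear-combination coefficients $\mu_{\tilde{\bs}}^{l}$ of Theorem~\ref{th:emb_informal}, so in the general nonlinear case it depends on $W_{KQ}$ through the attention and indirectly on $W_{OV}$ through feature propagation. Under the simplified linear-attention scenario announced in the proposition, the softmax $\sigma$ is replaced by an identity-like surrogate, and $\boldsymbol{\omega}(\tilde{\bs})$ collapses to a bilinear form in $W_{KQ}$ and $W_{OV}$ evaluated at the subsequence embeddings $\varphi(\tilde{\bs})$. I would apply Cauchy--Schwarz together with the unit-norm and near-orthogonality guarantees of Theorem~\ref{th:emb_informal} to obtain $\lvert\boldsymbol{\omega}(\tilde{\bs})\rvert \le C\lVert W_{KQ}\rVert_{2}\lVert W_{OV}\rVert_{2}$, and then convert the product to the sum via AM--GM, absorbing the normalization into the constant. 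Verifying that the $\epsilon$-error from Theorem~\ref{th:emb_informal} and the subsequence-count prefactor enter only as lower-order terms, and that the linear-attention surrogate does not reintroduce a sequence-length dependence, is the delicate accounting I expect to require the most care. Combined with $|\hat{p}(y|\bs)-\mathbb{1}[y=y^{*}]|\le 1$ in the tail of the preceding display, this yields the advertised inequality; smallness of the right-hand side for every high-frequency $\tilde{\bs}$ then forces the model's conditional prediction to track $\mathbb{P}_{\mathcal{D}}(y\mid\tilde{\bs}\sqsubseteq\bs)$, which is precisely the claimed sufficient condition for convergence.
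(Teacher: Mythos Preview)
Your chain-rule computation through the softmax and the conditioning step that extracts the factor $\mathbb{P}_{\mathcal{D}}(\tilde{\bs}\sqsubseteq\bs)\cdot\bigl|\mathbb{E}[\hat{p}(y|\bs)]-\mathbb{P}_{\mathcal{D}}(y\mid\tilde{\bs}\sqsubseteq\bs)\bigr|$ are correct and match the paper. The gap is in how you obtain the additive factor $\lVert W_{OV}\rVert_2+\lVert W_{KQ}\rVert_2$.

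You differentiate only with respect to $\hat{\Psi}(y,\tilde{\bs})=\varphi(y)^\top W_{OV}\varphi(\tilde{\bs})$ and then try to bound the resulting coefficient $\boldsymbol{\omega}(\tilde{\bs})$. But under the linear-attention model the paper uses, $f(\bs,y)=\sum_t \bar{\Psi}_t(y,\bs)\,\bar{\boldsymbol{\omega}}_t(\bs)$ with $\bar{\boldsymbol{\omega}}_t(\bs)=\phi_t(\bs)^\top W_{KQ}\,\phi_n(\bs)$: the scaling coefficient depends only on $W_{KQ}$, not on $W_{OV}$ (which sits entirely inside $\hat{\Psi}$). So your assertion that $\boldsymbol{\omega}(\tilde{\bs})$ ``collapses to a bilinear form in $W_{KQ}$ and $W_{OV}$'' is incorrect, and the subsequent Cauchy--Schwarz plus AM--GM route to $\lVert W_{OV}\rVert_2+\lVert W_{KQ}\rVert_2$ has no basis: your path yields at best a $\lVert W_{KQ}\rVert_2$ factor alone.

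The paper obtains the sum by a different decomposition. It treats \emph{both} $\bar{\Psi}_t(y,\bs)$ and $\bar{\boldsymbol{\omega}}_t(\bs)$ as effective parameters and computes the two gradient families separately. Differentiating through $\bar{\Psi}$ produces a coefficient $\bar{\boldsymbol{\omega}}_t=\mu_{\tilde{\bs}}\,\varphi(\tilde{\bs})^\top W_{KQ}\,\varphi_\varnothing$ bounded by $\lVert W_{KQ}\rVert_2$, giving a term $\Delta^{OV}_{\mathcal{L}}(\tilde{\bs},y)$; differentiating through $\bar{\boldsymbol{\omega}}$ produces a coefficient $\varphi(y)^\top W_{OV}\,\varphi(\tilde{\bs})$ bounded by $\lVert W_{OV}\rVert_2$, giving a term $\Delta^{KQ}_{\mathcal{L}}(\tilde{\bs},y)$. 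The quantity $\Delta_{\mathcal{L}}(\tilde{\bs},y)$ in the statement is the \emph{sum} of these two pieces, and the additive norm bound falls out directly---no AM--GM needed. To repair your argument, you need to add the second gradient direction (w.r.t.\ the attention-weight factor $\bar{\boldsymbol{\omega}}$) rather than trying to squeeze both matrix norms out of a single derivative.
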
  

\textbf{Insight.} Proposition~\ref{th:gradient_informal} reveals that the loss gradient decomposes into components \( \Delta_{\mathcal{L}(\boldsymbol{\theta})}(\Tilde{\bs}, y) \), each tied to a subsequence \( \Tilde{\bs} \) and its subsequent token \( y \). For convergence, these components must collectively diminish, governed by two critical factors:  
a) (\textbf{Subsequence frequency}) \( \mathbb{P}_{\mathcal{D}}(\tilde{\bs} \sqsubseteq \bs) \): High-frequency subsequences dominate gradient updates, as their prevalence in the training data amplifies their influence on parameter adjustments.  
b) (\textbf{Probability gap}) \( \mathbb{E}_{\mathcal{D}, \tilde{\bs} \sqsubseteq \bs}[\hat{p}(y|\bs)] - \mathbb{P}_{\mathcal{D}}(y|\tilde{\bs} \sqsubseteq \bs) \): This quantifies the disparity between the model’s current predictions and the true data distribution. Persistent gaps indicate unlearned associations, driving further gradient updates.

\begin{tcolorbox}[colback=gray!10, colframe=black, title=Takeaway, top=0pt, bottom=1pt, left=1pt, right=1pt, fonttitle=\bfseries] The dual dependency above creates an inherent \textit{training prioritization}: frequent subsequences and their associated tokens receive stronger learning signals due to their larger gradient norms, while the rare one suffers from weak or inconsistent updates. For low-frequency subsequences (\( \mathbb{P}_{\mathcal{D}}(\tilde{\bs} \sqsubseteq \bs) \approx 0 \)), even significant probability gaps fail to meaningfully perturb the model’s parameters. As a result, the model struggles to refine predictions for rare patterns, leading to systematic hallucinations on infrequent associations— also known as the \textit{long-tailed} problem. This explains why LLMs often generate hallucinating outputs that rely on common but wrong associations while neglecting rare but valid ones.
\end{tcolorbox}

\section{Trace Subsequence Association}
\label{sec:method}
\textbf{Goal.} According to the analysis in Section~\ref{sec:foundation}, identifying the root cause of hallucinations can boil down to finding the most prominent subsequence in the input sequence that has the strongest association with the hallucinated subsequence $\Tilde{\bo}^h$. Formally, this objective can be expressed as:
\begin{equation}
\Tilde{\bs}^h = \underset{\Tilde{\bs} \sqsubseteq \bs}{\arg \max} \ \Psi(\Tilde{\bo}^h, \Tilde{\bs}).
\label{eq:target}
\end{equation}
\textbf{Approximation of $\mathcal{P}_{\bs}$.}  Computing $\Psi(\Tilde{\bo}^h, \Tilde{\bs})$ is intractable because it requires enumerating all possible subsequences within the real-world input distribution $\mathcal{P}_{\bs}$. In practice, we rely on a reasonable approximation of this distribution, allowing us to efficiently sample diverse input sentences with potential subsequences. A naive approach might involve randomly removing or masking tokens from the original input~\cite{covert2021explaining, mohebbi2023quantifying, zhang2024knowledge, stoehr2024localizing}, or using methods like mask-and-refill with BERT~\cite{zhao2024reagent}. However, such approaches often suffer from low diversity in the generated corpus because the filler model tends to introduce its own prior biases. For example, BERT can consistently fill the sequence \texttt{ap<mask>} with \texttt{ple}, resulting in repetitive and semantically limited variations.

For a better approximation with diverse and semantically rich variations, we randomly replace tokens by sampling from the top-$k$ tokens based on embedding similarity (detailed steps in Algorithm~\ref{alg:diverse_mask_refill}). It leverages the observation that tokens with similar embeddings are often syntactically close but semantically varied with examples in Table~\ref{table:closest_tokens}.

\begin{algorithm}[htb]
   \caption{Masking and Refill}
   \label{alg:diverse_mask_refill}
\begin{algorithmic}
   \STATE {\bfseries Input:} Input sequence $\bs$, masking probability $p$, top-$k$ parameter $k$
   \STATE {\bfseries Output:} Modified sequence $\bs'$
   \STATE Initialize $\bs' \gets \bs$
   \FOR{each token $s_i$ in $\bs$}
       \STATE With probability $p$, mask token $s_i$
   \ENDFOR
   \FOR{each masked token position $i$ in $\bs'$}
       \STATE Retrieve the embedding vector $\mathbf{e}_i$ for the masked position
       \STATE Compute similarity scores between $\mathbf{e}_i$ and all token embeddings in the LLM's embedding matrix
       \STATE Select the top-$k$ most similar tokens based on the similarity scores
       \STATE Sample a token $t_i$ uniformly from the top-$k$ candidates
       \STATE Replace the mask at position $i$ with token $t_i$
   \ENDFOR
   \STATE Return $\bs'$
\end{algorithmic}

\end{algorithm}

\begin{table}[ht]
\centering
\caption{Example of deduplicated tokens and their top-5 closest tokens in the embedding space for each input token from the tokenizer of \textit{Olmo-7B-Instruct}~\cite{olmo2024}. }
\resizebox{0.45\textwidth}{!}{%
\begin{tabular}{l|l}
\toprule
\textbf{Token} & \textbf{Closest Top-5 Tokens with Token Embedding} \\
\midrule
 Raymond & Ray, Robert, Geoffrey, Richard, Walter, William, \\
 was & were, is, are, been, had, \\
 born & birth, died, founded, bred, married, \\
6 & 5, 7, 8, 4, 3 \\
 years & decades, weeks, months, yrs, centuries, ... \\
 before & after, until, prior, afore, when, \\
 Samantha & Sam, Stephanie, Melissa, Amanda, Martha, ... \\
. & `,', `?', `:', `!', `;' \\
\bottomrule
\end{tabular}
}
\label{table:closest_tokens}
\end{table}

\textbf{Greedy algorithm with beam search.} We denote the approximated distribution as \(\hat{\mathcal{P}_{\bs}}\). Even under the approximation, searching for the optimal subsequence \(\Tilde{\bs}\) in Equation~\ref{eq:target} is clearly \texttt{NP}-hard, as the number of subsequences grows exponentially with the sequence length. A purely greedy approach that selects the next best token at each step is susceptible to getting trapped in local minima. By contrast, our beam search strategy maintains a set of the top $B$ candidate subsequences at each step, where $B$ is the beam width. The complete algorithm is in Algorithm~\ref{alg:beam_search_trace} and present high level description in the main paper.

The algorithm proceeds iteratively, starting with all single-token subsequences from the original sequence $\bs$. At each subsequent step, it expands each subsequence in the current beam by adding one additional token that has not yet been included. The association score $\Psi(\Tilde{\bo}^h, \Tilde{\bs}')$ is computed over $\hat{\mathcal{P}}_{\bs}$ for each new candidate subsequence $\Tilde{\bs}'$, and the top $B$ subsequences with the highest scores are retained for the next iteration. This process continues until the maximum subsequence length is reached and return best subsequence at each length from 1 to $n$.

\begin{algorithm}[ht] \caption{Beam Search-based Trace Subsequence Association} \label{alg:beam_search_trace} \begin{algorithmic}[1] \STATE {\bfseries Input:} Approximated sequence distribution $\hat{\mathcal{P}}_{\bs}$, original sequence $\bs$, hallucinated subsequence $\Tilde{\bo}^h$, beam width $B$, maximum subsequence length $n$ \STATE {\bfseries Output:} Set of best subsequences ${\Tilde{\bs}^h_1, \Tilde{\bs}^h_2, \dots, \Tilde{\bs}^h_n}$ with length from 1 to n.

\STATE Initialize beam $\mathcal{B}_1 \gets { {s_i} \mid s_i \in \bs }$ \COMMENT{All single-token subsequences} \STATE Compute $\Psi(\Tilde{\bo}^h, {s_i})$ for each ${s_i} \in \mathcal{B}_1$ \STATE Select top $B$ subsequences based on $\Psi$ and set $\mathcal{B}_1 \gets$ top $B$ subsequences

\FOR{$k = 2$ to $n$} 
\STATE Initialize temporary candidate set $\mathcal{C} \gets \emptyset$ \FOR{each subsequence $\Tilde{\bs} \in \mathcal{B}_{k-1}$} \FOR{each token $s_j \in \bs$ not in $\Tilde{\bs}$} 
\STATE Create new subsequence $\Tilde{\bs}' \gets \Tilde{\bs} \cup {s_j}$ 
\STATE Compute $\Psi(\Tilde{\bo}^h, \Tilde{\bs}')$ over $\hat{\mathcal{P}}_{\bs}$
\STATE Add $\Tilde{\bs}'$ to $\mathcal{C}$ \ENDFOR \ENDFOR 
\STATE Rank all subsequences in $\mathcal{C}$ based on $\Psi(\Tilde{\bo}^h, \Tilde{\bs}')$ \STATE Select top $B$ subsequences from $\mathcal{C}$ and set $\mathcal{B}_k \gets$ top $B$ subsequences \STATE \textbf{Store} the best subsequence $\Tilde{\bs}^h_k \gets \mathcal{B}_k[1]$ \STATE \textbf{Terminate early} if no improvement is observed \ENDFOR

\STATE Return ${\Tilde{\bs}^h_1, \Tilde{\bs}^h_2, \dots, \Tilde{\bs}^h_n}$ \end{algorithmic} \end{algorithm}


\section{Experiments and Analysis}
\label{sec:exp}

\textbf{Dataset.} To evaluate the performance of our methodology in tracing hallucinations, we require a benchmark dataset where the hallucinated subsequences $\Tilde{\bo}^h$ are explicitly identified. HALoGEN~\cite{ravichander2025halogen} offers a comprehensive benchmark consisting of 10,923 prompts for generative models, spanning nine domains such as programming, scientific attribution, and summarization. Each domain includes automatic verifiers (e.g., ChatGPT or external programs) that decompose LLM-generated content into atomic facts and verify each fact to identify hallucinations.

In this study, we focus on six domains from HALoGEN that utilize programmatic verification to identify $\Tilde{\bo}^h$. This selection ensures that the identified hallucinations are objective and independent of LLM-based judgments. The chosen domains and their abbreviations are as follows: \textit{Code Package Imports} (\texttt{CODE}), \textit{Scientific Attribution} (\texttt{REF}), \textit{Biographies} (\texttt{BIO}), \textit{False Presuppositions} (\texttt{FP}), \textit{Rationalization Numerical} (\texttt{R-NUM}), and two distinct \textit{Rationalization Binary} domains—one based on prime factorization (\texttt{R-PRM}) and the other on knowledge about U.S. senators (\texttt{R-SEN}). We provide more prompt details in Appendix~\ref{sec:sup_prompt_details}. 
\subsection{Reproducibility of Hallucination Subsequences}
\label{sec:identify}

In this section, we evaluate the performance of subsequences identified by our algorithm against widely-used attribution methods employed as baselines. The primary objective is to maximize the \textit{reproducibility} of target hallucination subsequences, denoted as $\Tilde{\bo}^h$. Specifically, the reproducibility is the probability that the hallucination subsequence appears in the output when the corresponding input subsequence $\Tilde{\bs}$ is present, which is essentially 
\(
\mathbb{P}\left(\Tilde{\bo} \sqsubseteq \bo \mid \Tilde{\bs} \sqsubseteq \bs\right)\).
\begin{table}[htbp]
    \centering
    \label{tab:model_srep_compare_50}
    \caption{Comparison of $S_{rep}$ score over 7 test domains. Out-of-time (OOT) represents method that can not be finished in 10 H100 days with official implementations in~\cite{sarti-etal-2023-inseq}. An expanded version over all test distributions $\text{\texttt{bert}}, \text{\texttt{rand}}, \text{\texttt{gpt-m}}, \text{\texttt{gpt-t}}$ can be found in Table~\ref{tab:sup_model_comparison_50_detailed} at Appendix.}
    \resizebox{0.57\textwidth}{!}{%
    \begin{tabular}{llcccccccc}
        \toprule
        \textbf{} & Method &       \multicolumn{1}{c}{\textbf{CODE}} &
        \multicolumn{1}{c}{\textbf{BIO}} &
        \multicolumn{1}{c}{\textbf{FP}} &
        \multicolumn{1}{c}{\textbf{R-PRM}} &
        \multicolumn{1}{c}{\textbf{R-SEN}} &
        \multicolumn{1}{c}{\textbf{R-NUM}} &
        \multicolumn{1}{c}{\textbf{REF}} &
        \textbf{Avg.}  \\
        \midrule
        \multirow{5}{*}{\centering\rotatebox{90}{\textbf{\underline{Llama-70B} \ \ }}} 
             & attention & 34.0 & 0.7 & 5.0 & 39.3 & 1.1 & 58.4 & 0.1 & 19.8\\
            & lime & 13.7 & 3.3 & 6.1 & 18.1 & 11.1 & 57.8 & 1.4 & 15.9\\
            & grad-shap & 32.8 & 4.9 & 2.0 & 21.3 & 4.1 & 54.4 & 14.7 & 19.2\\
            & reagent & OOT& OOT& OOT& OOT& OOT& OOT& OOT& OOT\\
            & \textbf{SAT (Ours)} & 47.5 & 29 & 18.7 & 70.4 & 42.9 & 86.8 & 30.1 & \textbf{46.5}\\
        \midrule
\multirow{5}{*}{\centering\rotatebox{90}{\textbf{\underline{Olmo-7B}\ \ }}} 
            & attention & 31.8 & 0.2 & 25 & 19.1 & 13.7 & 50.9 & 27.4 & 24.0\\
            & lime & 21.7 & 0.9 & 11.7 & 19.8 & 6.0 & 56.6 & 17 & 19.1\\
            & grad-shap & 33.5 & 1.0 & 28.1 & 25.3 & 13.4 & 64.1 & 26.9 & 27.5\\
            & reagent & 14.6 & 0.3 & 22.1 & 19.3 & OOT & 62.1 & OOT & 23.7 \\
            & \textbf{SAT (Ours)} & 39.7 & 17.3 & 57.6 & 29.4 & 30.1 & 75.5 & 48.4 & \textbf{42.5}\\
        \bottomrule
    \end{tabular}}
\end{table}

\textbf{Metrics.} Directly measuring this probability in real-world scenarios is challenging due to the vast and uncontrolled variability in input distributions. To address this, we approximate the input distribution by employing a comprehensive approach that includes randomly injecting padding tokens into the input sequence and subsequently replacing these padding tokens using various strategies. Specifically, given a subsequence $\Tilde{\bs}$, we randomly insert a number of padding tokens equal to $|\bs| - |\Tilde{\bs}|$ at arbitrary positions within the input sequence. These padding tokens are then replaced using one of the following methods: 1) (\texttt{bert}) utilizing BERT to fill the padding tokens in random orders; 2) (\texttt{rand}) substituting them with random tokens; 3) (\texttt{gpt-m}) prompt \textit{GPT-4o-mini}~\cite{achiam2023gpt4} to fill the masked tokens; 4) (\texttt{gpt-t}) prompting ChatGPT to complete sentences directly with subsequence $\Tilde{\bs}$. Details of prompts used are provided in Appendix~\ref{sec:sup_details}. Aggregate the scores across different generation methods, we compute the final score:
\[
S_{rep} = \mathbb{E}_{\hat{\mathcal{P}} \in \{\text{\texttt{bert}}, \text{\texttt{rand}}, \text{\texttt{gpt-m}}, \text{\texttt{gpt-t}}\}} [\mathbb{P}_{\bs \sim \hat{\mathcal{P}}}\left(\Tilde{\bo} \sqsubseteq \bo \mid \Tilde{\bs} \sqsubseteq \bs\right) ]
\]

\textbf{Remark} (\textit{difference to the attribution metrics}.) Traditional attribution metrics~\cite{deyoung2020eraser,ju2022logic,zhao2023incorporating} focus on the difference between the next token prediction probability with the full input $\bs$ and with $\Tilde{\bs}$. In contrast, our approach differs in several key ways: 1) instead of evaluating subsequences only within the current input context, we seek subsequences that generalize across various input contexts; 2) rather than minimizing prediction gaps, we aim to maximize the reproducibility of target hallucination subsequences; 3) while traditional metrics limit analysis to the fixed next-token positions, $S_{rep}$ dynamically measures hallucination presence across outputs of varying lengths and positions. These differences enable our metric measuring the subsequences that consistently trigger target hallucinations in diverse scenarios.

\textbf{Baselines.} We evaluate diverse attribution methods including 
\texttt{attention}~\cite{bahdanau2014neural}, \texttt{lime}~\cite{ribeiro2016should}, \texttt{gradient-shap}~\cite{lundberg2017unified},  \texttt{ReAgent}~\cite{zhao2024reagent} with implementation and default hyperparameters in~\citet{sarti-etal-2023-inseq}. To adapt these methods to our setting, the saliency score for each token is computed as the sum of contributions across all target tokens in $\Tilde{\bo}^h$.  Our method presented in Section~\ref{sec:method} is named as \textit{Subsequence Association Trace} (SAT).

\textbf{Experiment details.} For a fair comparison, we report $\Tilde{\bs}$ at the same length for all baseline methods with a fixed ratio of the original sequence length, $|\Tilde{\bs}| = \alpha |{\bs}|$. The results for $\alpha=0.5$ are presented in Table~\ref{tab:model_srep_compare_50}, while additional ratios ($\alpha=0.25$ and $\alpha=0.75$) are provided in Figure~\ref{fig:barplot_alpha}. The approximation corpus, sampled using the algorithm described in Section~\ref{sec:method}, is set to a size of $|\hat{\mathcal{P}}_{\bs}| = 512$. During the generation process, tokens are replaced only within the queries, leaving the system prompts and chat template unchanged. The greedy search beam size $B$ is set to 20. For each $\hat{\mathcal{P}} \in {\text{\texttt{bert}}, \text{\texttt{rand}}, \text{\texttt{gpt-m}}, \text{\texttt{gpt-t}}}$ in the evaluation, we use $|\hat{\mathcal{P}}| = 25$. A hyperparameter sensitivity analysis can be found in Appendix~\ref{sec:sup_sensitivity}.

\begin{figure}[htb]
\centering\includegraphics[width=0.67\linewidth]{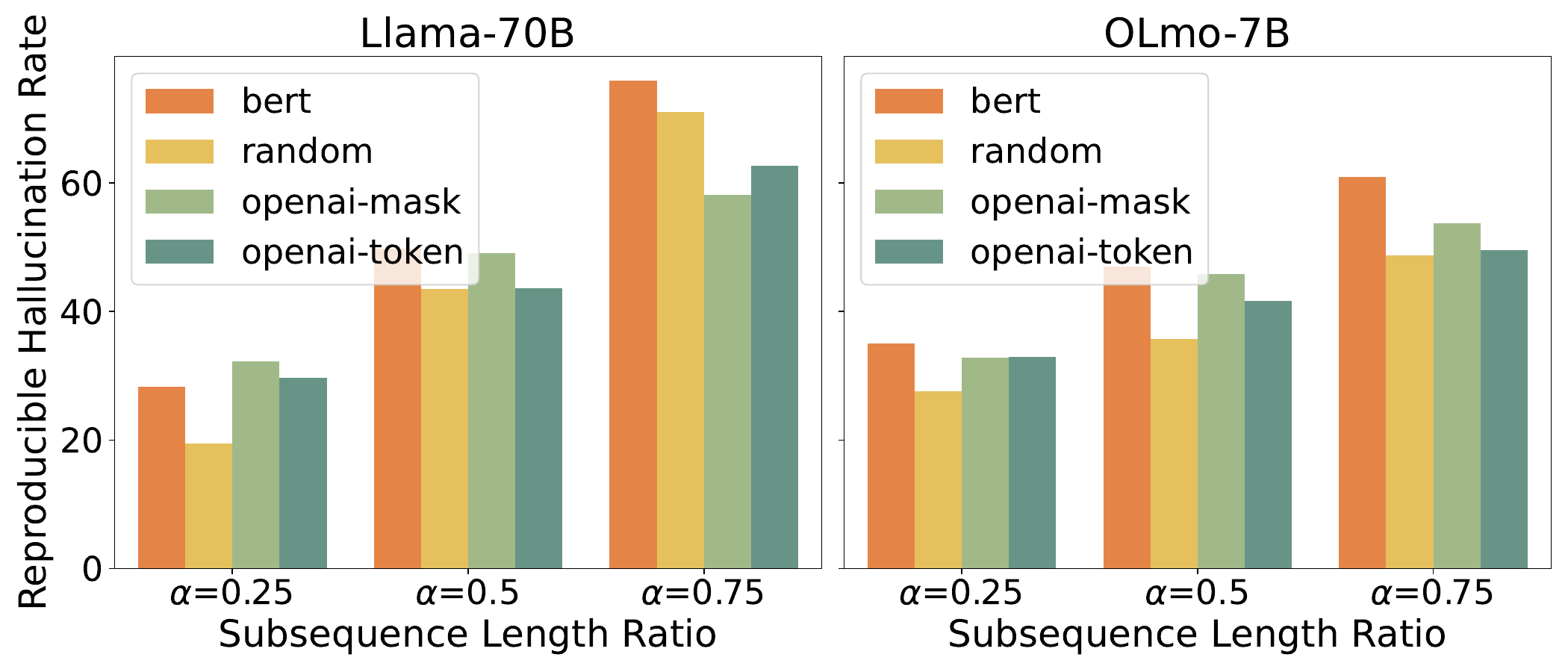}
    \caption{Reproducibility rate of the target hallucination subsequence in the output across four input distributions, evaluated at three different subsequence length ratios.}
    \label{fig:barplot_alpha}
\end{figure}

\textbf{Superior performance of SAT in identifying causal subsequences for hallucination reproduction.}  As shown in Table~\ref{tab:model_srep_compare_50}, SAT outperforms baseline methods on both a small model (\textit{Olmo-7b-instruct}) and a large model (\textit{Llama-70b-instruct}~\cite{touvron2023llama}), where it achieves a 26\% higher reproducibility rate—\textbf{more than twice} that of the second-best baseline. It further substantiates our earlier argument that conventional attribution methods are limited to the current context, operating at the level of single tokens and next immediate positions. Consequently, they often fail to identify the true causal subsequences that contribute to hallucination outcomes.

\textbf{Both small and large LLMs rely on subsequence associations.}  
Figure~\ref{fig:barplot_alpha} illustrates that both the large model (\textit{Llama-70B}) and the small model (\textit{Olmo-7B}) exhibit a non-trivial reproducibility rate of $>$20\% in generating hallucinated tokens. This result is particularly notable in the \texttt{rand} setting, where 75\% of the tokens are randomly generated, leaving only 25\% of the input (\(\alpha = 0.25\)) as meaningful subsequences. If the association between the subsequences and hallucinations were weak, the likelihood of the hallucinated subsequence \(\Tilde{\bo}^h\) appearing in the output would be minimal. This finding underscores the strong dependency of LLM hallucinations on specific subsequence patterns.

\begin{figure}[htb]
    \centering
\includegraphics[width=0.7\linewidth]{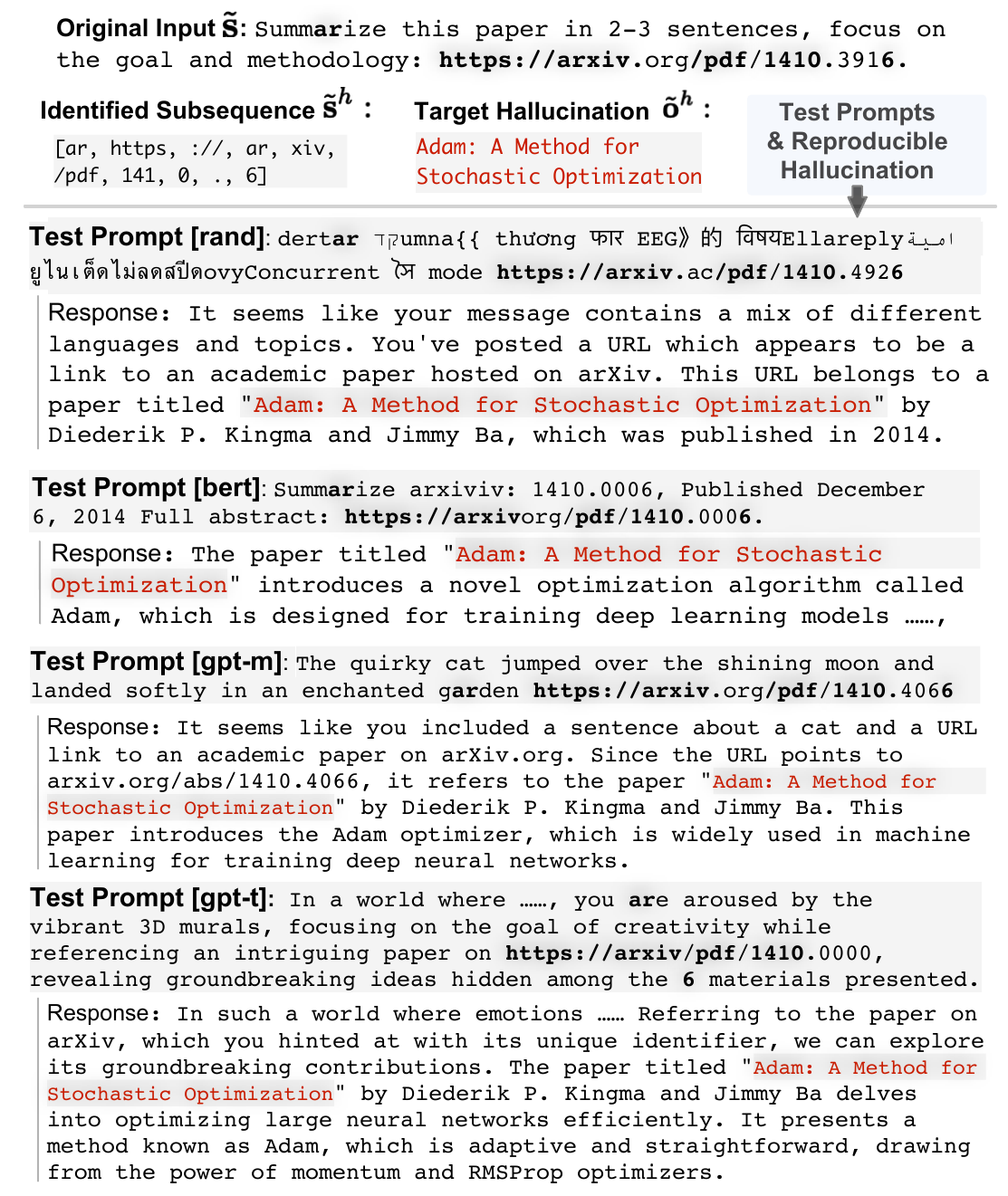}
    \caption{Examples of input prompts to \textit{GPT-4o-0806} along with the trigger subsequences identified by our tracing algorithm (SAT), and the corresponding reproducible hallucinated outputs in the settings of $\text{\texttt{bert}}, \text{\texttt{rand}}, \text{\texttt{gpt-m}},$ and $\text{\texttt{gpt-t}}$, respectively. (Results are obtained via API calls; web interface may invoke web search.)}
    \label{fig:gpt4o-examples}
\end{figure}

\textbf{Illustrative examples of LLM hallucinations based on subsequence association.} Beyond our quantitative findings, we provide concrete evidence demonstrating that both small and large language models exhibit reproducible hallucinations when exposed to specific subsequences (see Appendix~\ref{sec:sup_example}). In the main paper, Figure~\ref{fig:gpt4o-examples} presents a practical example using \textit{GPT-4o}, where querying the content of an arXiv paper results in a hallucinated reference to a different paper ``\textit{ADAM}''\footnote{In practice, one might also encounter references to VAE, GAN, or other well-known papers from earlier years; ``ADAM'' is used here for illustrative purposes.}. With subsequence association, we demonstrate that references to ``\textit{ADAM}” can still appear in the output even when other tokens are entirely random.


\subsection{Subsequence associations stem from training set}
\label{sec:data_trace}
In this section, we empirically consolidate the insight in Section~\ref{sec:foundation_local_convergence} that a key condition for LLM convergence is the alignment between the model’s internal association for the input subsequence and the output token and the conditional probability in the training dataset $\mathcal{D}$. To validate this, we test on \textit{Olmo-7b-instruct}~\cite{olmo2024}, whose pre-training dataset, \textit{Dolma-1.7}~\cite{dolma}, is openly available. The training corpus consists of approximately $400M$ documents, each with 4096 tokens. For each test input subsequence trigger $\Tilde{\bs}^h$ and the hallucinated output subsequence $\Tilde{\bo}^h$, we compute the conditional probability $\mathbb{P}_{ \operatorname{doc} \in \operatorname{Dolma}}((\tilde{\mathbf{s}}^h, \tilde{\mathbf{o}}^h) \sqsubseteq  \operatorname{doc}) / \mathbb{P}_{ \operatorname{doc} \in \operatorname{Dolma}}(\tilde{\mathbf{s}}^h \sqsubseteq  \operatorname{doc})$ directly from the dataset and compare it to the reproducibility rate ($S_{rep}$) tested on \textit{Olmo}. Figure~\ref{fig:pearson} illustrates this comparison, showing the \textit{Pearson correlation} coefficients $\rho$ for various test domains, where reproducibility rates are overall strongly correlated ($\rho=0.72$).  This finding highlights the subsequence association as a natural and effective mechanism for tracing hallucination phenomena back to their origins in the training corpus. 

\begin{figure}[htb]
    \centering
\includegraphics[width=0.6\linewidth]{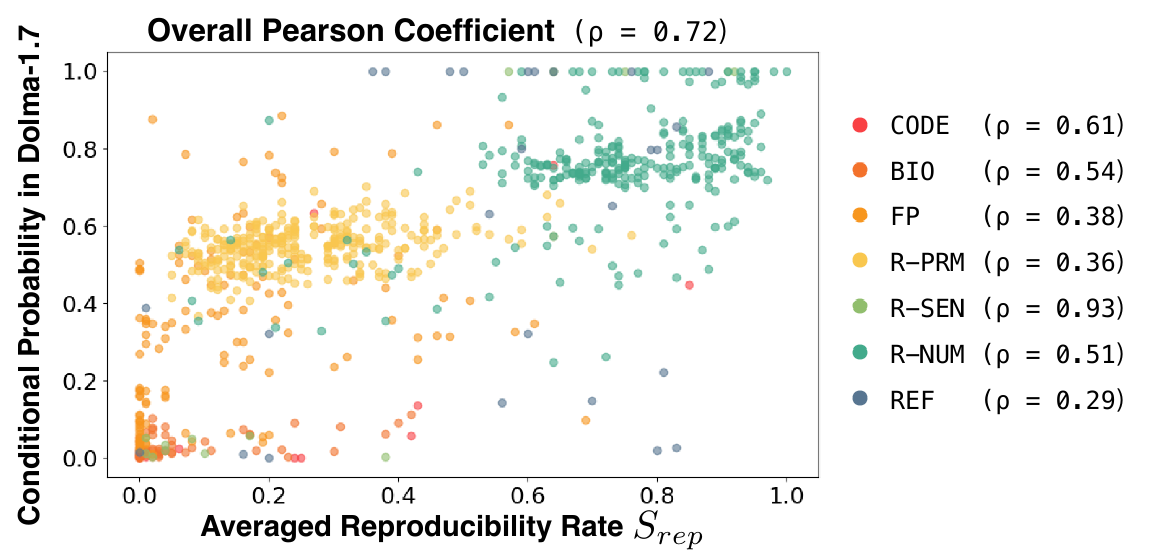}
    \caption{Scatter plot comparing the averaged reproducibility rate ($S_{rep}$) of the \textit{Olmo-7b-instruct} model against the conditional probabilities in the \textit{Dolma-1.7} dataset. Each point represents a test subsequence pair ($\Tilde{\bs}^h$, $\Tilde{\bo}^h$) from each test domain. }
    \label{fig:pearson}
\end{figure}


\section{Summary of Related Work}
\label{sec:related}
As the main paper already includes a deep discussion, this section provides a summary and additional references.

\textbf{Hallucination causes.} Prior works understand hallucinations mainly by isolated factors. In contrast, we have extensively discussed how subsequence associations can be leveraged to analyze and unify these \textbf{15} isolated hallucination cases in Section~\ref{sec:foundation_halu_case_study}.

\textbf{Attribution methods and metrics. } As discussed in the remark in Section~\ref{sec:identify}, existing attribution methods~\cite{lundberg2017unified,sanyal2021discretized,enguehard2023sequential,zhao2024reagent} and parallel efforts in studying rationales~\cite{lei2016rationalizing,jain2020learning,vafa2021rationales,yuetowards,yeo2024plausible}, along with the metrics~\cite{deyoung2020eraser,ju2022logic,chan2022comparative,zhao2023incorporating}, face several limitations in the generality across diverse contexts, primarily emphasize the prediction gap rather than focusing on the reproducibility of hallucinated subsequences, and are often restricted to evaluating attribution at the next token only.

\textbf{Shortcut learning and spurious correlations.}
Shortcut learning~\cite{geirhos2020shortcut,yuan2024llms,niven2019probing,ko2020look,du2021towards,qi2021mind,tang2023large} and spurious correlations~\cite{tu2020empirical,ribeiro2020beyond} occur when deep models rely on non-robust cues, such as specific words or letters~\cite{tang2023large}, that are strongly linked to class labels~\cite{niven2019probing,ko2020look,du2021towards,qi2021mind}. Prior work has emphasized the importance of understanding these associations, motivating our analysis of subsequence-level patterns.
In this work, we extend the scope of analysis beyond single-factor triggers to include any subsequences in the input or output, allowing for a more comprehensive exploration of associations. We deliberately use the term “association” as it encompasses a broader range of relationships, whereas “shortcut” typically implies a strong and often problematic link. As discussed in Section~\ref{sec:foundation_halu_case_study}, hallucinations can arise even when these associations are weak, underscoring the need for a more general term.

\section{Conclusion}
\label{sec:conclusion}
We introduce a unified framework based on \textit{subsequence associations}, where hallucinations occur when a model relies on incomplete or misleading token sequences. Our theoretical analysis shows that decoder-only transformers naturally encode these associations, and we demonstrate that minimizing errors in these associations is a sufficient condition for convergence. Building on this insight, we propose an algorithm that traces the specific token sequences responsible for hallucinations, addressing the limitations of traditional attribution methods. Our experiments confirm that hallucinations in both small and large LLMs are substantially linked to specific subsequences, closely aligned with their training data. Our findings offer a systematic perspective on the hallucination analysis of LLMs.

\bibliography{main}

\begin{thebibliography}{75}
\providecommand{\natexlab}[1]{#1}
\providecommand{\url}[1]{\texttt{#1}}
\expandafter\ifx\csname urlstyle\endcsname\relax
  \providecommand{\doi}[1]{doi: #1}\else
  \providecommand{\doi}{doi: \begingroup \urlstyle{rm}\Url}\fi

\bibitem[Achiam et~al.(2023)Achiam, Adler, Agarwal, Ahmad, Akkaya, Aleman, Almeida, Altenschmidt, Altman, Anadkat, et~al.]{achiam2023gpt4}
Achiam, J., Adler, S., Agarwal, S., Ahmad, L., Akkaya, I., Aleman, F.~L., Almeida, D., Altenschmidt, J., Altman, S., Anadkat, S., et~al.
\newblock Gpt-4 technical report.
\newblock \emph{arXiv preprint arXiv:2303.08774}, 2023.

\bibitem[Adebayo et~al.(2018)Adebayo, Gilmer, Muelly, Goodfellow, Hardt, and Kim]{adebayo2018sanity}
Adebayo, J., Gilmer, J., Muelly, M., Goodfellow, I., Hardt, M., and Kim, B.
\newblock Sanity checks for saliency maps.
\newblock \emph{Advances in neural information processing systems}, 31, 2018.

\bibitem[Ahn et~al.(2023)Ahn, Cheng, Daneshmand, and Sra]{ahn2023transformers}
Ahn, K., Cheng, X., Daneshmand, H., and Sra, S.
\newblock Transformers learn to implement preconditioned gradient descent for in-context learning.
\newblock \emph{Advances in Neural Information Processing Systems}, 36:\penalty0 45614--45650, 2023.

\bibitem[Amara et~al.(2024)Amara, Sevastjanova, and El-Assady]{amara2024syntaxshap}
Amara, K., Sevastjanova, R., and El-Assady, M.
\newblock Syntaxshap: Syntax-aware explainability method for text generation.
\newblock \emph{arXiv preprint arXiv:2402.09259}, 2024.

\bibitem[Bahdanau(2014)]{bahdanau2014neural}
Bahdanau, D.
\newblock Neural machine translation by jointly learning to align and translate.
\newblock \emph{arXiv preprint arXiv:1409.0473}, 2014.

\bibitem[Banerjee et~al.(2024)Banerjee, Agarwal, and Singla]{banerjee2024llms}
Banerjee, S., Agarwal, A., and Singla, S.
\newblock Llms will always hallucinate, and we need to live with this.
\newblock \emph{arXiv preprint arXiv:2409.05746}, 2024.

\bibitem[Bengio et~al.(2015)Bengio, Vinyals, Jaitly, and Shazeer]{bengio2015scheduled}
Bengio, S., Vinyals, O., Jaitly, N., and Shazeer, N.
\newblock Scheduled sampling for sequence prediction with recurrent neural networks.
\newblock \emph{Advances in neural information processing systems}, 28, 2015.

\bibitem[Berglund et~al.(2024)Berglund, Tong, Kaufmann, Balesni, Stickland, Korbak, and Evans]{berglundreversal}
Berglund, L., Tong, M., Kaufmann, M., Balesni, M., Stickland, A.~C., Korbak, T., and Evans, O.
\newblock The reversal curse: Llms trained on “a is b” fail to learn “b is a”.
\newblock In \emph{The Twelfth International Conference on Learning Representations}, 2024.

\bibitem[Bilodeau et~al.(2024)Bilodeau, Jaques, Koh, and Kim]{bilodeau2024impossibility}
Bilodeau, B., Jaques, N., Koh, P.~W., and Kim, B.
\newblock Impossibility theorems for feature attribution.
\newblock \emph{Proceedings of the National Academy of Sciences}, 121\penalty0 (2):\penalty0 e2304406120, 2024.

\bibitem[Chan et~al.(2022)Chan, Kong, and Guanqing]{chan2022comparative}
Chan, C.~S., Kong, H., and Guanqing, L.
\newblock A comparative study of faithfulness metrics for model interpretability methods.
\newblock In \emph{Proceedings of the 60th Annual Meeting of the Association for Computational Linguistics (Volume 1: Long Papers)}, pp.\  5029--5038, 2022.

\bibitem[Chattopadhyay et~al.(2019)Chattopadhyay, Manupriya, Sarkar, and Balasubramanian]{chattopadhyay2019neural}
Chattopadhyay, A., Manupriya, P., Sarkar, A., and Balasubramanian, V.~N.
\newblock Neural network attributions: A causal perspective.
\newblock In \emph{International Conference on Machine Learning}, pp.\  981--990. PMLR, 2019.

\bibitem[Chen et~al.(2024)Chen, Sheen, Wang, and Yang]{chen2024unveiling}
Chen, S., Sheen, H., Wang, T., and Yang, Z.
\newblock Unveiling induction heads: Provable training dynamics and feature learning in transformers.
\newblock \emph{arXiv preprint arXiv:2409.10559}, 2024.

\bibitem[Chezelles et~al.(2024)Chezelles, Le~Sellier, Gasse, Lacoste, Drouin, Caccia, Boisvert, Thakkar, Marty, Assouel, et~al.]{chezelles2024browsergym}
Chezelles, D., Le~Sellier, T., Gasse, M., Lacoste, A., Drouin, A., Caccia, M., Boisvert, L., Thakkar, M., Marty, T., Assouel, R., et~al.
\newblock The browsergym ecosystem for web agent research.
\newblock \emph{arXiv preprint arXiv:2412.05467}, 2024.

\bibitem[Church \& Hanks(1990)Church and Hanks]{church1990word}
Church, K. and Hanks, P.
\newblock Word association norms, mutual information, and lexicography.
\newblock \emph{Computational linguistics}, 16\penalty0 (1):\penalty0 22--29, 1990.

\bibitem[Covert et~al.(2021)Covert, Lundberg, and Lee]{covert2021explaining}
Covert, I., Lundberg, S., and Lee, S.-I.
\newblock Explaining by removing: A unified framework for model explanation.
\newblock \emph{Journal of Machine Learning Research}, 22\penalty0 (209):\penalty0 1--90, 2021.

\bibitem[DeYoung et~al.(2020)DeYoung, Jain, Rajani, Lehman, Xiong, Socher, and Wallace]{deyoung2020eraser}
DeYoung, J., Jain, S., Rajani, N.~F., Lehman, E., Xiong, C., Socher, R., and Wallace, B.~C.
\newblock Eraser: A benchmark to evaluate rationalized nlp models.
\newblock In \emph{Proceedings of the 58th Annual Meeting of the Association for Computational Linguistics}, pp.\  4443--4458, 2020.

\bibitem[Du et~al.(2021)Du, Manjunatha, Jain, Deshpande, Dernoncourt, Gu, Sun, and Hu]{du2021towards}
Du, M., Manjunatha, V., Jain, R., Deshpande, R., Dernoncourt, F., Gu, J., Sun, T., and Hu, X.
\newblock Towards interpreting and mitigating shortcut learning behavior of nlu models.
\newblock In \emph{Proceedings of the 2021 Conference of the North American Chapter of the Association for Computational Linguistics: Human Language Technologies}, pp.\  915--929, 2021.

\bibitem[Dziri et~al.(2022)Dziri, Milton, Yu, Zaiane, and Reddy]{dziri2022origin}
Dziri, N., Milton, S., Yu, M., Zaiane, O., and Reddy, S.
\newblock On the origin of hallucinations in conversational models: Is it the datasets or the models?
\newblock \emph{arXiv preprint arXiv:2204.07931}, 2022.

\bibitem[Enguehard(2023)]{enguehard2023sequential}
Enguehard, J.
\newblock Sequential integrated gradients: a simple but effective method for explaining language models.
\newblock In \emph{Findings of the Association for Computational Linguistics: ACL 2023}, pp.\  7555--7565, 2023.

\bibitem[Geirhos et~al.(2020)Geirhos, Jacobsen, Michaelis, Zemel, Brendel, Bethge, and Wichmann]{geirhos2020shortcut}
Geirhos, R., Jacobsen, J.-H., Michaelis, C., Zemel, R., Brendel, W., Bethge, M., and Wichmann, F.~A.
\newblock Shortcut learning in deep neural networks.
\newblock \emph{Nature Machine Intelligence}, 2\penalty0 (11):\penalty0 665--673, 2020.

\bibitem[Groeneveld et~al.(2024)Groeneveld, Beltagy, Walsh, Bhagia, Kinney, Tafjord, Jha, Ivison, Magnusson, Wang, Arora, Atkinson, Authur, Chandu, Cohan, Dumas, Elazar, Gu, Hessel, Khot, Merrill, Morrison, Muennighoff, Naik, Nam, Peters, Pyatkin, Ravichander, Schwenk, Shah, Smith, Strubell, Subramani, Wortsman, Dasigi, Lambert, Richardson, Zettlemoyer, Dodge, Lo, Soldaini, Smith, and Hajishirzi]{olmo2024}
Groeneveld, D., Beltagy, I., Walsh, E., Bhagia, A., Kinney, R., Tafjord, O., Jha, A., Ivison, H., Magnusson, I., Wang, Y., Arora, S., Atkinson, D., Authur, R., Chandu, K., Cohan, A., Dumas, J., Elazar, Y., Gu, Y., Hessel, J., Khot, T., Merrill, W., Morrison, J., Muennighoff, N., Naik, A., Nam, C., Peters, M., Pyatkin, V., Ravichander, A., Schwenk, D., Shah, S., Smith, W., Strubell, E., Subramani, N., Wortsman, M., Dasigi, P., Lambert, N., Richardson, K., Zettlemoyer, L., Dodge, J., Lo, K., Soldaini, L., Smith, N., and Hajishirzi, H.
\newblock {OLM}o: Accelerating the science of language models.
\newblock In Ku, L.-W., Martins, A., and Srikumar, V. (eds.), \emph{Proceedings of the 62nd Annual Meeting of the Association for Computational Linguistics (Volume 1: Long Papers)}, pp.\  15789--15809, Bangkok, Thailand, August 2024. Association for Computational Linguistics.
\newblock \doi{10.18653/v1/2024.acl-long.841}.
\newblock URL \url{https://aclanthology.org/2024.acl-long.841/}.

\bibitem[Jain et~al.(2020)Jain, Wiegreffe, Pinter, and Wallace]{jain2020learning}
Jain, S., Wiegreffe, S., Pinter, Y., and Wallace, B.~C.
\newblock Learning to faithfully rationalize by construction.
\newblock In \emph{Proceedings of the 58th Annual Meeting of the Association for Computational Linguistics}, pp.\  4459--4473, 2020.

\bibitem[Jeong(2024)]{jeong2024hijacking}
Jeong, J.
\newblock Hijacking context in large multi-modal models.
\newblock In \emph{ICLR 2024 Workshop on Reliable and Responsible Foundation Models}, 2024.

\bibitem[Ju et~al.(2022)Ju, Zhang, Yang, Jiang, Liu, and Zhao]{ju2022logic}
Ju, Y., Zhang, Y., Yang, Z., Jiang, Z., Liu, K., and Zhao, J.
\newblock Logic traps in evaluating attribution scores.
\newblock In \emph{Proceedings of the 60th Annual Meeting of the Association for Computational Linguistics (Volume 1: Long Papers)}, pp.\  5911--5922, 2022.

\bibitem[Kalai \& Vempala(2024)Kalai and Vempala]{kalai2024calibrated}
Kalai, A.~T. and Vempala, S.~S.
\newblock Calibrated language models must hallucinate.
\newblock In \emph{Proceedings of the 56th Annual ACM Symposium on Theory of Computing}, pp.\  160--171, 2024.

\bibitem[Kapoor et~al.(2024)Kapoor, Butala, Russak, Koh, Kamble, Alshikh, and Salakhutdinov]{kapoor2024omniact}
Kapoor, R., Butala, Y.~P., Russak, M., Koh, J.~Y., Kamble, K., Alshikh, W., and Salakhutdinov, R.
\newblock Omniact: A dataset and benchmark for enabling multimodal generalist autonomous agents for desktop and web, 2024.

\bibitem[Ko et~al.(2020)Ko, Lee, Kim, Kim, and Kang]{ko2020look}
Ko, M., Lee, J., Kim, H., Kim, G., and Kang, J.
\newblock Look at the first sentence: Position bias in question answering.
\newblock In \emph{Proceedings of the 2020 Conference on Empirical Methods in Natural Language Processing (EMNLP)}, pp.\  1109--1121, 2020.

\bibitem[Krishna et~al.(2022)Krishna, Han, Gu, Wu, Jabbari, and Lakkaraju]{krishna2022disagreement}
Krishna, S., Han, T., Gu, A., Wu, S., Jabbari, S., and Lakkaraju, H.
\newblock The disagreement problem in explainable machine learning: A practitioner's perspective.
\newblock \emph{arXiv preprint arXiv:2202.01602}, 2022.

\bibitem[Lee et~al.(2022)Lee, Ping, Xu, Patwary, Fung, Shoeybi, and Catanzaro]{lee2022factuality}
Lee, N., Ping, W., Xu, P., Patwary, M., Fung, P.~N., Shoeybi, M., and Catanzaro, B.
\newblock Factuality enhanced language models for open-ended text generation.
\newblock \emph{Advances in Neural Information Processing Systems}, 35:\penalty0 34586--34599, 2022.

\bibitem[Lei et~al.(2016)Lei, Barzilay, and Jaakkola]{lei2016rationalizing}
Lei, T., Barzilay, R., and Jaakkola, T.
\newblock Rationalizing neural predictions.
\newblock In \emph{Proceedings of the 2016 Conference on Empirical Methods in Natural Language Processing}, pp.\  107--117, 2016.

\bibitem[Li et~al.(2016)Li, Chen, Hovy, and Jurafsky]{li2016visualizing}
Li, J., Chen, X., Hovy, E., and Jurafsky, D.
\newblock Visualizing and understanding neural models in nlp.
\newblock In \emph{Proceedings of the 2016 Conference of the North American Chapter of the Association for Computational Linguistics: Human Language Technologies}, pp.\  681--691, 2016.

\bibitem[Li et~al.(2022)Li, Li, Shang, Dong, Sun, Liu, Ji, Jiang, and Liu]{li2022pre}
Li, S., Li, X., Shang, L., Dong, Z., Sun, C., Liu, B., Ji, Z., Jiang, X., and Liu, Q.
\newblock How pre-trained language models capture factual knowledge? a causal-inspired analysis.
\newblock \emph{arXiv preprint arXiv:2203.16747}, 2022.

\bibitem[Li et~al.(2023)Li, Li, and Risteski]{li2023transformers}
Li, Y., Li, Y., and Risteski, A.
\newblock How do transformers learn topic structure: Towards a mechanistic understanding.
\newblock In \emph{International Conference on Machine Learning}, pp.\  19689--19729. PMLR, 2023.

\bibitem[Lundberg(2017)]{lundberg2017unified}
Lundberg, S.
\newblock A unified approach to interpreting model predictions.
\newblock \emph{arXiv preprint arXiv:1705.07874}, 2017.

\bibitem[Mahankali et~al.(2023)Mahankali, Hashimoto, and Ma]{mahankali2023one}
Mahankali, A., Hashimoto, T.~B., and Ma, T.
\newblock One step of gradient descent is provably the optimal in-context learner with one layer of linear self-attention.
\newblock \emph{arXiv preprint arXiv:2307.03576}, 2023.

\bibitem[McCoy et~al.(2023)McCoy, Yao, Friedman, Hardy, and Griffiths]{mccoy2023embers}
McCoy, R.~T., Yao, S., Friedman, D., Hardy, M., and Griffiths, T.~L.
\newblock Embers of autoregression: Understanding large language models through the problem they are trained to solve.
\newblock \emph{arXiv preprint arXiv:2309.13638}, 2023.

\bibitem[Mohebbi et~al.(2023)Mohebbi, Zuidema, Chrupa{\l}a, and Alishahi]{mohebbi2023quantifying}
Mohebbi, H., Zuidema, W., Chrupa{\l}a, G., and Alishahi, A.
\newblock Quantifying context mixing in transformers.
\newblock \emph{arXiv preprint arXiv:2301.12971}, 2023.

\bibitem[Nichani et~al.(2024{\natexlab{a}})Nichani, Damian, and Lee]{nichani2024transformers}
Nichani, E., Damian, A., and Lee, J.~D.
\newblock How transformers learn causal structure with gradient descent.
\newblock \emph{arXiv preprint arXiv:2402.14735}, 2024{\natexlab{a}}.

\bibitem[Nichani et~al.(2024{\natexlab{b}})Nichani, Lee, and Bietti]{nichani2024understanding}
Nichani, E., Lee, J.~D., and Bietti, A.
\newblock Understanding factual recall in transformers via associative memories.
\newblock \emph{arXiv preprint arXiv:2412.06538}, 2024{\natexlab{b}}.

\bibitem[Niu et~al.(2024)Niu, Li, Wang, Fu, Hu, Leng, Kong, Chang, and Wang]{niu2024screenagent}
Niu, R., Li, J., Wang, S., Fu, Y., Hu, X., Leng, X., Kong, H., Chang, Y., and Wang, Q.
\newblock Screenagent: A vision language model-driven computer control agent.
\newblock 2024.

\bibitem[Niven \& Kao(2019)Niven and Kao]{niven2019probing}
Niven, T. and Kao, H.-Y.
\newblock Probing neural network comprehension of natural language arguments.
\newblock In \emph{Proceedings of the 57th Annual Meeting of the Association for Computational Linguistics}, pp.\  4658--4664, 2019.

\bibitem[Penedo et~al.(2023)Penedo, Malartic, Hesslow, Cojocaru, Cappelli, Alobeidli, Pannier, Almazrouei, and Launay]{penedo2023refinedweb}
Penedo, G., Malartic, Q., Hesslow, D., Cojocaru, R., Cappelli, A., Alobeidli, H., Pannier, B., Almazrouei, E., and Launay, J.
\newblock The refinedweb dataset for falcon llm: outperforming curated corpora with web data, and web data only.
\newblock \emph{arXiv preprint arXiv:2306.01116}, 2023.

\bibitem[Qi et~al.(2021)Qi, Chen, Zhang, Li, Liu, and Sun]{qi2021mind}
Qi, F., Chen, Y., Zhang, X., Li, M., Liu, Z., and Sun, M.
\newblock Mind the style of text! adversarial and backdoor attacks based on text style transfer.
\newblock In \emph{Proceedings of the 2021 Conference on Empirical Methods in Natural Language Processing}, pp.\  4569--4580, 2021.

\bibitem[Ravichander et~al.(2025)Ravichander, Ghela, Wadden, and Choi]{ravichander2025halogen}
Ravichander, A., Ghela, S., Wadden, D., and Choi, Y.
\newblock Halogen: Fantastic llm hallucinations and where to find them, 2025.
\newblock URL \url{https://arxiv.org/abs/2501.08292}.

\bibitem[Rawles et~al.(2024)Rawles, Li, Rodriguez, Riva, and Lillicrap]{rawles2024androidinthewild}
Rawles, C., Li, A., Rodriguez, D., Riva, O., and Lillicrap, T.
\newblock Androidinthewild: A large-scale dataset for android device control.
\newblock \emph{Advances in Neural Information Processing Systems}, 36, 2024.

\bibitem[Ren et~al.(2023)Ren, Wang, Qu, Zhao, Liu, Tian, Wu, Wen, and Wang]{ren2023investigating}
Ren, R., Wang, Y., Qu, Y., Zhao, W.~X., Liu, J., Tian, H., Wu, H., Wen, J.-R., and Wang, H.
\newblock Investigating the factual knowledge boundary of large language models with retrieval augmentation.
\newblock \emph{arXiv preprint arXiv:2307.11019}, 2023.

\bibitem[Ribeiro et~al.(2016{\natexlab{a}})Ribeiro, Singh, and Guestrin]{lime}
Ribeiro, M.~T., Singh, S., and Guestrin, C.
\newblock "why should {I} trust you?": Explaining the predictions of any classifier.
\newblock In \emph{Proceedings of the 22nd {ACM} {SIGKDD} International Conference on Knowledge Discovery and Data Mining, San Francisco, CA, USA, August 13-17, 2016}, pp.\  1135--1144, 2016{\natexlab{a}}.

\bibitem[Ribeiro et~al.(2016{\natexlab{b}})Ribeiro, Singh, and Guestrin]{ribeiro2016should}
Ribeiro, M.~T., Singh, S., and Guestrin, C.
\newblock " why should i trust you?" explaining the predictions of any classifier.
\newblock In \emph{Proceedings of the 22nd ACM SIGKDD international conference on knowledge discovery and data mining}, pp.\  1135--1144, 2016{\natexlab{b}}.

\bibitem[Ribeiro et~al.(2020)Ribeiro, Wu, Guestrin, and Singh]{ribeiro2020beyond}
Ribeiro, M.~T., Wu, T., Guestrin, C., and Singh, S.
\newblock Beyond accuracy: Behavioral testing of nlp models with checklist.
\newblock In \emph{Proceedings of the 58th Annual Meeting of the Association for Computational Linguistics}, pp.\  4902--4912, 2020.

\bibitem[Sanyal \& Ren(2021)Sanyal and Ren]{sanyal2021discretized}
Sanyal, S. and Ren, X.
\newblock Discretized integrated gradients for explaining language models.
\newblock In \emph{Proceedings of the 2021 Conference on Empirical Methods in Natural Language Processing}, pp.\  10285--10299, 2021.

\bibitem[Sarti et~al.(2023)Sarti, Feldhus, Sickert, van~der Wal, Nissim, and Bisazza]{sarti-etal-2023-inseq}
Sarti, G., Feldhus, N., Sickert, L., van~der Wal, O., Nissim, M., and Bisazza, A.
\newblock Inseq: An interpretability toolkit for sequence generation models.
\newblock In \emph{Proceedings of the 61st Annual Meeting of the Association for Computational Linguistics (Volume 3: System Demonstrations)}, pp.\  421--435, Toronto, Canada, July 2023. Association for Computational Linguistics.
\newblock \doi{10.18653/v1/2023.acl-demo.40}.
\newblock URL \url{https://aclanthology.org/2023.acl-demo.40}.

\bibitem[Scott et~al.(2017)Scott, Su-In, et~al.]{scott2017unified}
Scott, M., Su-In, L., et~al.
\newblock A unified approach to interpreting model predictions.
\newblock \emph{Advances in neural information processing systems}, 30:\penalty0 4765--4774, 2017.

\bibitem[Sixt et~al.(2020)Sixt, Granz, and Landgraf]{sixt2020explanations}
Sixt, L., Granz, M., and Landgraf, T.
\newblock When explanations lie: Why many modified bp attributions fail.
\newblock In \emph{International conference on machine learning}, pp.\  9046--9057. PMLR, 2020.

\bibitem[Soldaini et~al.(2024)Soldaini, Kinney, Bhagia, Schwenk, Atkinson, Authur, Bogin, Chandu, Dumas, Elazar, Hofmann, Jha, Kumar, Lucy, Lyu, Lambert, Magnusson, Morrison, Muennighoff, Naik, Nam, Peters, Ravichander, Richardson, Shen, Strubell, Subramani, Tafjord, Walsh, Zettlemoyer, Smith, Hajishirzi, Beltagy, Groeneveld, Dodge, and Lo]{dolma}
Soldaini, L., Kinney, R., Bhagia, A., Schwenk, D., Atkinson, D., Authur, R., Bogin, B., Chandu, K., Dumas, J., Elazar, Y., Hofmann, V., Jha, A.~H., Kumar, S., Lucy, L., Lyu, X., Lambert, N., Magnusson, I., Morrison, J., Muennighoff, N., Naik, A., Nam, C., Peters, M.~E., Ravichander, A., Richardson, K., Shen, Z., Strubell, E., Subramani, N., Tafjord, O., Walsh, P., Zettlemoyer, L., Smith, N.~A., Hajishirzi, H., Beltagy, I., Groeneveld, D., Dodge, J., and Lo, K.
\newblock {Dolma: An Open Corpus of Three Trillion Tokens for Language Model Pretraining Research}.
\newblock \emph{arXiv preprint}, 2024.
\newblock URL \url{https://arxiv.org/abs/2402.00159}.

\bibitem[Stoehr et~al.(2024)Stoehr, Gordon, Zhang, and Lewis]{stoehr2024localizing}
Stoehr, N., Gordon, M., Zhang, C., and Lewis, O.
\newblock Localizing paragraph memorization in language models.
\newblock \emph{arXiv preprint arXiv:2403.19851}, 2024.

\bibitem[Sun et~al.(2023)Sun, Xu, Zha, Liu, and Dong]{sun2023head}
Sun, K., Xu, Y.~E., Zha, H., Liu, Y., and Dong, X.~L.
\newblock Head-to-tail: How knowledgeable are large language models (llm)? aka will llms replace knowledge graphs?
\newblock \emph{arXiv preprint arXiv:2308.10168}, 2023.

\bibitem[Sundararajan et~al.(2017)Sundararajan, Taly, and Yan]{sundararajan2017axiomatic}
Sundararajan, M., Taly, A., and Yan, Q.
\newblock Axiomatic attribution for deep networks.
\newblock In \emph{International conference on machine learning}, pp.\  3319--3328. PMLR, 2017.

\bibitem[Tang et~al.(2023)Tang, Kong, Huang, and Xue]{tang2023large}
Tang, R., Kong, D., Huang, L., and Xue, H.
\newblock Large language models can be lazy learners: Analyze shortcuts in in-context learning.
\newblock In \emph{Findings of the Association for Computational Linguistics: ACL 2023}, pp.\  4645--4657, 2023.

\bibitem[Touvron et~al.(2023)Touvron, Lavril, Izacard, Martinet, Lachaux, Lacroix, Rozi{\`e}re, Goyal, Hambro, Azhar, et~al.]{touvron2023llama}
Touvron, H., Lavril, T., Izacard, G., Martinet, X., Lachaux, M.-A., Lacroix, T., Rozi{\`e}re, B., Goyal, N., Hambro, E., Azhar, F., et~al.
\newblock Llama: Open and efficient foundation language models.
\newblock \emph{arXiv preprint arXiv:2302.13971}, 2023.

\bibitem[Tu et~al.(2020)Tu, Lalwani, Gella, and He]{tu2020empirical}
Tu, L., Lalwani, G., Gella, S., and He, H.
\newblock An empirical study on robustness to spurious correlations using pre-trained language models.
\newblock \emph{Transactions of the Association for Computational Linguistics}, 8:\penalty0 621--633, 2020.

\bibitem[Vafa et~al.(2021)Vafa, Deng, Blei, and Rush]{vafa2021rationales}
Vafa, K., Deng, Y., Blei, D., and Rush, A.~M.
\newblock Rationales for sequential predictions.
\newblock In \emph{Proceedings of the 2021 Conference on Empirical Methods in Natural Language Processing}, pp.\  10314--10332, 2021.

\bibitem[Von~Oswald et~al.(2023)Von~Oswald, Niklasson, Randazzo, Sacramento, Mordvintsev, Zhmoginov, and Vladymyrov]{von2023transformers}
Von~Oswald, J., Niklasson, E., Randazzo, E., Sacramento, J., Mordvintsev, A., Zhmoginov, A., and Vladymyrov, M.
\newblock Transformers learn in-context by gradient descent.
\newblock In \emph{International Conference on Machine Learning}, pp.\  35151--35174. PMLR, 2023.

\bibitem[Wei et~al.(2023)Wei, Huang, Lu, Zhou, and Le]{wei2023simple}
Wei, J., Huang, D., Lu, Y., Zhou, D., and Le, Q.~V.
\newblock Simple synthetic data reduces sycophancy in large language models.
\newblock \emph{arXiv preprint arXiv:2308.03958}, 2023.

\bibitem[Xie et~al.(2024)Xie, Zhang, Chen, Li, Zhao, Cao, Hua, Cheng, Shin, Lei, Liu, Xu, Zhou, Savarese, Xiong, Zhong, and Yu]{OSWorld}
Xie, T., Zhang, D., Chen, J., Li, X., Zhao, S., Cao, R., Hua, T.~J., Cheng, Z., Shin, D., Lei, F., Liu, Y., Xu, Y., Zhou, S., Savarese, S., Xiong, C., Zhong, V., and Yu, T.
\newblock Osworld: Benchmarking multimodal agents for open-ended tasks in real computer environments, 2024.

\bibitem[Yeo et~al.(2024)Yeo, Satapathy, and Cambria]{yeo2024plausible}
Yeo, W.~J., Satapathy, R., and Cambria, E.
\newblock Plausible extractive rationalization through semi-supervised entailment signal.
\newblock \emph{arXiv preprint arXiv:2402.08479}, 2024.

\bibitem[Yin et~al.(2023)Yin, Sun, Guo, Wu, Qiu, and Huang]{yin2023large}
Yin, Z., Sun, Q., Guo, Q., Wu, J., Qiu, X., and Huang, X.
\newblock Do large language models know what they don't know?
\newblock \emph{arXiv preprint arXiv:2305.18153}, 2023.

\bibitem[Yuan et~al.(2024)Yuan, Zhao, Zhang, Zheng, and Liu]{yuan2024llms}
Yuan, Y., Zhao, L., Zhang, K., Zheng, G., and Liu, Q.
\newblock Do llms overcome shortcut learning? an evaluation of shortcut challenges in large language models.
\newblock In \emph{Proceedings of the 2024 Conference on Empirical Methods in Natural Language Processing}, pp.\  12188--12200, 2024.

\bibitem[Yue et~al.(2024)Yue, Liu, Du, Wang, Gao, and An]{yuetowards}
Yue, L., Liu, Q., Du, Y., Wang, L., Gao, W., and An, Y.
\newblock Towards faithful explanations: Boosting rationalization with shortcuts discovery.
\newblock In \emph{The Twelfth International Conference on Learning Representations}, 2024.

\bibitem[Zhang et~al.(2023{\natexlab{a}})Zhang, Press, Merrill, Liu, and Smith]{zhang2023language}
Zhang, M., Press, O., Merrill, W., Liu, A., and Smith, N.~A.
\newblock How language model hallucinations can snowball.
\newblock \emph{arXiv preprint arXiv:2305.13534}, 2023{\natexlab{a}}.

\bibitem[Zhang et~al.(2023{\natexlab{b}})Zhang, Frei, and Bartlett]{zhang2023trained}
Zhang, R., Frei, S., and Bartlett, P.~L.
\newblock Trained transformers learn linear models in-context.
\newblock \emph{arXiv preprint arXiv:2306.09927}, 2023{\natexlab{b}}.

\bibitem[Zhang et~al.(2023{\natexlab{c}})Zhang, Li, Cui, Cai, Liu, Fu, Huang, Zhao, Zhang, Chen, et~al.]{zhang2023siren}
Zhang, Y., Li, Y., Cui, L., Cai, D., Liu, L., Fu, T., Huang, X., Zhao, E., Zhang, Y., Chen, Y., et~al.
\newblock Siren's song in the ai ocean: a survey on hallucination in large language models.
\newblock \emph{arXiv preprint arXiv:2309.01219}, 2023{\natexlab{c}}.

\bibitem[Zhang et~al.(2024)Zhang, Li, Liu, Yu, Fung, Li, Li, and Ji]{zhang2024knowledge}
Zhang, Y., Li, S., Liu, J., Yu, P., Fung, Y.~R., Li, J., Li, M., and Ji, H.
\newblock Knowledge overshadowing causes amalgamated hallucination in large language models.
\newblock \emph{arXiv preprint arXiv:2407.08039}, 2024.

\bibitem[Zhao \& Aletras(2023)Zhao and Aletras]{zhao2023incorporating}
Zhao, Z. and Aletras, N.
\newblock Incorporating attribution importance for improving faithfulness metrics.
\newblock In \emph{Proceedings of the 61st Annual Meeting of the Association for Computational Linguistics (Volume 1: Long Papers)}, pp.\  4732--4745, 2023.

\bibitem[Zhao \& Shan(2024)Zhao and Shan]{zhao2024reagent}
Zhao, Z. and Shan, B.
\newblock Reagent: Towards a model-agnostic feature attribution method for generative language models.
\newblock \emph{arXiv preprint arXiv:2402.00794}, 2024.

\bibitem[Zhou et~al.(2023)Zhou, Xu, Zhu, Zhou, Lo, Sridhar, Cheng, Ou, Bisk, Fried, et~al.]{zhouwebarena}
Zhou, S., Xu, F.~F., Zhu, H., Zhou, X., Lo, R., Sridhar, A., Cheng, X., Ou, T., Bisk, Y., Fried, D., et~al.
\newblock Webarena: A realistic web environment for building autonomous agents.
\newblock In \emph{The Twelfth International Conference on Learning Representations}, 2023.

\end{thebibliography}
\bibliographystyle{icml2025}

\newpage
\appendix
\onecolumn

\section{Theoretical Details}
~\label{sec:sup_theory}

\subsection{Notations}
\label{sec:sup_notation}

\subsubsection{Basics}

Let $\mathcal{V}$ be the vocabulary (or token) set. Tokens are generally denoted by $x$ or $y$: 
$x$ typically refers to an input token, while $y$ refers to an output (generated) token.
A decoder-only large language model $F(\cdot)$ maps an input sequence
$\bs = [x_1, x_2, \dots, x_n] \in \mathcal{V}^n$
to an output sequence
$\bo = [y_1, y_2, \dots, y_m] \in \mathcal{V}^m.$
Here, $x_i$ and $y_i$ represent individual tokens from the vocabulary $\mathcal{V}$.

\subsubsection{Sequence Notation}

We define the following notations for sequences and subsequences:

- \textit{Subsequence Notation.} A tilded symbol $\Tilde{\bs}$ is used to denote a subsequence of $\bs$.  

- \textit{Indexing.} 
  \begin{itemize}
      \item $\bs_{[i]}$: the $i$-th token of $\bs$.  
      \item $\bs_{[-1]}$: the \textbf{last} token of $\bs$.  
      \item $\bs_{[:t]}$: the subsequence consisting of the \textbf{first $t$ tokens} of $\bs$.  
  \end{itemize}

- \textit{Subsequence Relation.} Given two sequences $\Tilde{\bs}$ and $\bs$, we write $\Tilde{\bs} \sqsubseteq \bs$ if $\Tilde{\bs}$ can be formed by deleting zero or more tokens in $\bs$ while preserving the order of the remaining tokens.

- \textit{Subsequence Collection Ending at the $t$-th Token.}
\[
\operatorname{Sub}(\bs, t) 
\;=\; 
\{\Tilde{\bs} \;\mid\; \Tilde{\bs} \sqsubseteq \bs_{[:t]}, \; \Tilde{\bs}_{[-1]} = x_t \}.
\]
This set consists of all subsequences of $\bs_{[:t]}$ that end with the $t$-th token $x_t$. It will be used to analyze the transformer's features at the $t$-th position.

\subsubsection{Embeddings}

\paragraph{Generalized Embedding for Tokens and Subsequences}

An embedding function $\varphi$ maps individual tokens to a $l_2$-normalized vector space:
\[
\varphi: \mathcal{V} \;\to\; \mathbb{R}^d.
\]
This notation is generalized to subsequences $\Tilde{\bs}$:
\[
\varphi: \bigcup_{i=1}^n \mathcal{V}^i \;\to\; \mathbb{R}^d.
\]
Rather than embedding a length-$n$ sequence into $n$ separate token embeddings, we consider embedding each \textit{subsequence} of $\bs$ which encodes more semantic information than single token. However, the number of such subsequences grows exponentially with $n$ for a single sentence. A key role of the transformer is to effectively condense or aggregate information from these many subsequence embeddings into a length-$n$ output. As shown later in Theory~\ref{th:sup_seq_full}, the output at position $t$ can be interpreted as a linear combination of the embeddings corresponding to subsequences in $\operatorname{Sub}(\bs, t)$.

\paragraph{Embedding at Transformer Hidden Layers}

We use $\phi$ and $\Phi$ to denote features within the transformer.
For an input $\bs = [x_1, x_2, \ldots, x_n]$, the feature matrix at the $l$-th layer is:
\[
\Phi^{(l)}(\bs) 
\;=\; 
\bigl[
  \phi_1^{(l)}(\bs), \; \phi_2^{(l)}(\bs), \; \dots, \; \phi_n^{(l)}(\bs)
\bigr]^{\top} 
\;\in\; 
\mathbb{R}^{n \times d},
\]
where $\phi_t^{(l)}(\bs)$ is the feature vector at the $t$-th position in the $l$-th layer. The submatrix with the first $t$ rows is denoted $\Phi^{(l)}_{[:t]}(\bs)$. By construction, the output $\phi_{t}^{(l)}(\bs)$ is agnostic to the token after position $t$, thus the feature vector $\Phi^{(l)}_{[:t]}(\bs)$ can also be defined for the feature of $\bs_{[:t]} = [x_1, x_2, ..., x_t] \sqsubseteq \bs$, since it is equivalent to slicing the sub-matrix with the first $t$ rows of $\Phi^{(l)}(\bs)$.

\paragraph{Special Cases.}

\begin{itemize}
    \item \textit{First Layer.} At the initial  layer, the feature matrix can be considered as the token embeddings ($\phi^{(1)}_t(x) = \varphi(x_t)$):
    \[
    \Phi^{(1)}(\bs) 
    \;=\; 
    \bigl[
      \varphi(x_1), \;\varphi(x_2), \; \dots, \;\varphi(x_n)
    \bigr]^{\top} 
    \;\in\; 
    \mathbb{R}^{n \times d}.
    \]

    \item \textit{Penultimate Layer.} The feature matrix after the $(L-1)$-th layer is denoted $\Phi^{(L)}(\bs)$. This matrix is passed to the final attention layers to produce output logits. For simplicity, we often drop the superscript and write $\Phi(\bs)$ to refer to this penultimate-layer feature.
\end{itemize}

\subsection{Local Convergence Analysis of Subsequence Association}

\paragraph{Setup.} 
Let the training corpus be $\mathcal{D} = \{(\bs^{(i)}, y^{(i)})\}_{i=1}^N$, where each sequence $\bs^{(i)}$ has the same length $n$ and ends with the same token (denoted by $\varnothing$). We follow the analysis setups in \cite{nichani2024understanding} and \cite{chen2024unveiling}, which use ending tokens $EOS$ and `$0$', respectively. Throughout this paper, we let the ending token be $\varnothing$. We also assume that the maximum number of duplications of any token in a sequence is $\tau_{dp}$.

Consider a decoder-only transformer architecture with \emph{linear attention} as in \cite{li2023transformers, von2023transformers, ahn2023transformers, mahankali2023one, zhang2023trained, nichani2024transformers}), whose logit output layer is given by
\[
F(\bs) \;=\; \underset{y \in \mathcal{V}}{\arg\max}\, f(\bs, y),
\quad
f(\bs, y) \;=\; \varphi(y)^{\top}  
W_{OV} \,\Phi(\bs)^{\top}\,\Phi(\bs)\, W_{KQ}\,\phi_n(\bs),
\]
where $W_{KQ} = W_K^{\top}W_Q \in \mathbb{R}^{d \times d}$ and $W_{OV} = W_O^{\top}W_V \in \mathbb{R}^{d \times d}$ are the key-query and output-value matrices, respectively. The loss function is
\[
\mathcal{L}(\boldsymbol{\theta}) 
\;=\; 
\mathbb{E}_{(\bs^{(i)}, y^{(i)}) \in \mathcal{D}} \Bigl[-\log \hat{p}\bigl(y^{(i)} \mid \bs^{(i)}\bigr)\Bigr],
\quad
\hat{p}\bigl(y^{(i)} \mid \bs^{(i)}\bigr) 
\;=\; 
\frac{\exp\bigl(f(\bs^{(i)}, y^{(i)})\bigr)}{\sum_{y \in \mathcal{V}} \exp\bigl(f(\bs^{(i)}, y)\bigr)}.
\]
We aim to analyze the convergence of the parameters $\boldsymbol{\theta} = (W_{KQ}, W_{OV}, \Phi(\cdot))$, where $\Phi(\cdot)$ denotes the $L$-layer feature encoder. Since a separate analysis for each parameter is complicated, we group them and consider:
\[
f(\bs, y)
\;=\;
\varphi(y)^{\top}  
W_{OV} \,\Phi(\bs)^{\top}\,\Phi(\bs)\, W_{KQ}\,\phi_n(\bs)
\;=\;
\sum_{t=1}^n \varphi(y)^{\top}  
W_{OV} \,\phi_t(\bs)\;\;\phi_t(\bs)^{\top}  
W_{KQ}\,\phi_n(\bs).
\]
Define
\[
\bar{\Psi}_{t}(y,\bs) \;=\; \varphi(y)^{\top}  
W_{OV}\,\phi_t(\bs),
\quad
\bar{\boldsymbol{\omega}}_{t}(\bs) \;=\; \phi_t(\bs)^{\top}  
W_{KQ}\,\phi_n(\bs).
\]
Hence,
\[
f(\bs, y) 
\;=\; 
\sum_{t=1}^n \bar{\Psi}_{t}(y,\bs)\,\bar{\boldsymbol{\omega}}_{t}(\bs).
\]
To show convergence, it suffices to show that the gradients of the loss function with respect to all $\bar{\Psi}_{\cdot}(\cdot,\cdot)$ and $\bar{\boldsymbol{\omega}}_{\cdot}(\cdot)$ become small.
Specifically, 
\[
\frac{\partial \mathcal{L}(\boldsymbol{\theta})}{\partial \bar{\Psi}_{\cdot}(\cdot, \cdot)} = 
\sum_{(\bs^{(i)}, y^{(i)}) \in \mathcal{D}}\sum_{t=1}^n \sum_{y \in \mathcal{V}} \frac{\partial \mathcal{L}(\boldsymbol{\theta})}{\partial \bar{\Psi}_t(y, \bs^{(i)})}, \quad
\frac{\partial \mathcal{L}(\boldsymbol{\theta})}
{\partial \bar{\boldsymbol{\omega}}_{\cdot}(\cdot)} = 
\sum_{(\bs^{(i)}, y^{(i)}) \in \mathcal{D}}
\sum_{t=1}^n  \frac{\partial \mathcal{L}(\boldsymbol{\theta})}{\partial \bar{\boldsymbol{\omega}}_{t}(\bs^{(i)})}
\]

\begin{proposition}
    \label{th:sup_gradient}
    Under the setting of Theorem~\ref{th:emb_informal}, we have
    \[
    \Bigl|\frac{\partial \mathcal{L}(\boldsymbol{\theta})}{\partial \Psi_{\cdot}(\cdot, \cdot)}\Bigr| 
    \;+\;
    \Bigl|\frac{\partial \mathcal{L}(\boldsymbol{\theta})}{\partial \bar{\boldsymbol{\omega}}_{ \cdot}(\cdot)}\Bigr|
    \;\;\le\;\;
    \sum_{y\in\mathcal{V}} \sum_{\tilde{\bs} \sqsubseteq \bs^{}} 
    \Bigl|\Delta_{\mathcal{L}(\boldsymbol{\theta})}(\Tilde{\bs}, y)\Bigr|,
    \]
    where
    \[
    \bigl|\Delta_{\mathcal{L}(\boldsymbol{\theta})}(\Tilde{\bs}, y)\bigr|
    \;\le\;
    \text{const}
    \cdot
    \bigl(\|W_{O V}\|_2 + \|W_{KQ}\|_2\bigr)
    \cdot
    \mathbb{P}_{\mathcal{D}}(\tilde{\bs} \sqsubseteq \bs)\,
    \bigl[\hat{p}\bigl(y \mid \tilde{\bs} \sqsubseteq \bs\bigr)
    \;-\;
    p\bigl(y \mid \tilde{\bs} \sqsubseteq \bs\bigr)\bigr].
    \]
\end{proposition}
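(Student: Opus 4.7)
The plan is to start from the standard softmax cross-entropy gradient and push the subsequence decomposition of Theorem~\ref{th:emb_informal} through the chain rule. Concretely, for any logit $f(\bs^{(i)},y)$ the derivative of the per-example loss $-\log \hat p(y^{(i)}\mid \bs^{(i)})$ equals $\hat p(y\mid \bs^{(i)}) - \mathbb{1}[y=y^{(i)}]$. Since $f(\bs,y)=\sum_t \bar\Psi_t(y,\bs)\,\bar\omega_t(\bs)$ by construction, the chain rule yields
\[
\frac{\partial \mathcal{L}(\boldsymbol{\theta})}{\partial \bar\Psi_t(y,\bs^{(i)})} \;=\; \tfrac{1}{N}\bigl[\hat p(y\mid \bs^{(i)})-\mathbb{1}[y=y^{(i)}]\bigr]\,\bar\omega_t(\bs^{(i)}),
\]
and similarly for $\partial/\partial \bar\omega_t$ with a factor $\bar\Psi_t(y,\bs^{(i)})$ and a sum over $y$. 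This is the starting point.

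Next I would substitute the decomposition $\phi_t(\bs)=\sum_{\tilde{\bs}\in \mathrm{Sub}(\bs,t)\cap \mathcal{S}} \mu_{\tilde{\bs}}\,\varphi(\tilde{\bs})$ from Theorem~\ref{th:emb_informal}. This turns the inner products $\bar\Psi_t(y,\bs)=\varphi(y)^\top W_{OV}\phi_t(\bs)$ and $\bar\omega_t(\bs)=\phi_t(\bs)^\top W_{KQ}\phi_n(\bs)$ into sums indexed by the subsequences ending at positions $t$ and $n$ respectively. Grouping terms that share the same $(\tilde{\bs},y)$ defines the per-subsequence contribution $\Delta_{\mathcal{L}(\boldsymbol{\theta})}(\tilde{\bs},y)$; a triangle-inequality bound then gives
\[
\Bigl|\tfrac{\partial \mathcal{L}}{\partial \bar\Psi_{\cdot}(\cdot,\cdot)}\Bigr| + \Bigl|\tfrac{\partial \mathcal{L}}{\partial \bar\omega_{\cdot}(\cdot)}\Bigr| \;\le\; \sum_{y}\sum_{\tilde{\bs}\sqsubseteq \bs} \bigl|\Delta_{\mathcal{L}(\boldsymbol{\theta})}(\tilde{\bs},y)\bigr|,
\]
which is the outer bound claimed. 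The near-orthogonality clause of Theorem~\ref{th:emb_informal} is exactly what lets us attribute the bound cleanly to individual subsequences without double counting, since cross terms between distinct $\tilde{\bs}\neq \tilde{\bs}'$ are $O(\epsilon)$ and may be absorbed into the constant.

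For the inner bound, I would rewrite the empirical sum over $\mathcal{D}$ as $\sum_{\bs\in\mathcal{D}} = \sum_{\tilde{\bs}\sqsubseteq \bs} = N\cdot \mathbb{P}_{\mathcal{D}}(\tilde{\bs}\sqsubseteq \bs)\cdot \mathbb{E}_{\bs\in\mathcal{D}:\,\tilde{\bs}\sqsubseteq \bs}[\cdot]$. Under this conditional expectation, the averaged residual $\hat p(y\mid \bs)-\mathbb{1}[y=y^{(i)}]$ collapses to $\mathbb{E}_{\bs:\tilde{\bs}\sqsubseteq \bs}[\hat p(y\mid \bs)] - \mathbb{P}_{\mathcal{D}}(y\mid \tilde{\bs}\sqsubseteq \bs)$, which is the probability-gap factor in the statement. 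The remaining constants $\varphi(y)^\top W_{OV}\varphi(\tilde{\bs})$ and $\varphi(\tilde{\bs})^\top W_{KQ}\phi_n(\bs)$ (from $\bar\Psi_t$ and $\bar\omega_t$) are controlled by Cauchy--Schwarz combined with $\|\varphi\|_2\le 1$, producing the $\|W_{OV}\|_2$ and $\|W_{KQ}\|_2$ prefactors. A bound on $\mu_{\tilde{\bs}}$ and on the duplication count $\tau_{dp}$ is absorbed into the unspecified universal constant.

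The main obstacle I expect is the identification step in the third paragraph: carefully reorganizing the triple sum $\sum_{i}\sum_t\sum_{\tilde{\bs}\in \mathrm{Sub}(\bs^{(i)},t)}$ into $\sum_{\tilde{\bs}}\sum_{i:\tilde{\bs}\sqsubseteq \bs^{(i)}}$ while tracking the position $t$ at which $\tilde{\bs}$ ends. The same subsequence can end at several positions within one sequence, and the coefficients $\mu^{l}_{\tilde{\bs}}$ from different layers and positions must be collected without inflating the bound. Handling this combinatorial bookkeeping, together with arguing that the $\epsilon$-cross-terms from non-orthogonality do not accumulate across the $O(\tau_{dp}\cdot n)$ subsequences involved, is the delicate part; everything else is chain rule, Cauchy--Schwarz, and a change of summation order.
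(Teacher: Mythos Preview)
Your approach mirrors the paper's: softmax gradient, chain rule into $\bar\Psi_t$ and $\bar{\boldsymbol{\omega}}_t$, subsequence expansion of $\phi_t$, regrouping by $(\tilde{\bs},y)$, and the conditional-expectation rewriting. One step does not go through as written, however. You plan to let the averaged residual collapse to $\mathbb{E}_{\bs:\tilde{\bs}\sqsubseteq\bs}[\hat p(y\mid\bs)]-\mathbb{P}_{\mathcal{D}}(y\mid\tilde{\bs}\sqsubseteq\bs)$ while separately bounding the prefactor $\varphi(\tilde{\bs})^\top W_{KQ}\,\phi_n(\bs)$ by Cauchy--Schwarz. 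But that prefactor still depends on $\bs$, so what you are actually averaging is a \emph{product}, and $\mathbb{E}\bigl[\text{prefactor}(\bs)\cdot\text{residual}(\bs)\bigr]$ does not split into a constant times the probability gap; taking absolute values before averaging would destroy the signed cancellation entirely. The paper resolves this by invoking the setup assumption that every training sequence ends in the common token $\varnothing$ and explicitly setting $\phi_n(\bs^{(i)})=\varphi_{\varnothing}$, which makes the prefactor identical across all of $\mathcal{D}$; only then does the residual average cleanly to the gap, after which one bounds $\bigl|\varphi(\tilde{\bs})^\top W_{KQ}\varphi_{\varnothing}\bigr|\le\|W_{KQ}\|_2$ (and symmetrically $\|W_{OV}\|_2$ for the $\bar{\boldsymbol{\omega}}$ part).

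A smaller point: near-orthogonality plays no role in the paper's argument. Once $\phi_n$ is the fixed vector $\varphi_{\varnothing}$, the expansion of $\phi_t$ enters only linearly and no inner products $\langle\varphi(\tilde{\bs}),\varphi(\tilde{\bs}')\rangle$ between distinct subsequence embeddings ever appear, so there are no $O(\epsilon)$ cross-terms to absorb. The obstacle you anticipated in your last paragraph is really the $\bs$-dependence of $\phi_n(\bs)$, and the remedy is the common-ending-token simplification rather than $\epsilon$-orthogonality.
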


\begin{proof}
\textbf{Step 1: Gradient with respect to $\hat{\Psi}_{t,\cdot}(\cdot,\cdot)$.}

We let
\[
\frac{\partial \mathcal{L}(\boldsymbol{\theta})}
{\partial \Psi_{\cdot}(\cdot, \cdot)} 
\;=\;
\sum_{t=1}^n 
\sum_{y \in \mathcal{V}}
\frac{\partial \mathcal{L}(\boldsymbol{\theta})}{\partial \bar{\Psi}_{t}(y, \cdot)},
\]
where
\[
\frac{\partial \mathcal{L}(\boldsymbol{\theta})}
{\partial \bar{\Psi}_{t}(y, \cdot)} 
\;=\; 
\frac{1}{|\mathcal{D}|}
\sum_{(\bs^{(i)}, y^{(i)}) \in \mathcal{D}} 
\frac{\partial \bigl[-\log \hat{p}(y^{(i)}|\bs^{(i)})\bigr]}
{\partial \bigl(\varphi(y)^{\top} W_{OV} \phi_t(\bs^{(i)})\bigr)} 
\;=\; 
\frac{1}{|\mathcal{D}|}
\sum_{(\bs^{(i)}, y^{(i)}) \in \mathcal{D}}
\bigl(\hat{p}(y|\bs^{(i)}) - \mathbf{1}\{y=y^{(i)}\}\bigr)\,
\phi_t(\bs^{(i)})^{\top} W_{KQ}\,\phi_n(\bs^{(i)}).
\]
By Theorem~\ref{th:emb_informal}, we expand
$\phi_t(\bs^{(i)})$ in terms of all subsequences $\Tilde{\bs}$ in $\mathcal{S}\cap Sub(\bs^{(i)}, t)$:
\[
\phi_t(\bs) 
\;=\; 
\sum_{\tilde{\bs} \in \mathcal{S}\cap Sub(\bs, t)}
\mu_{\Tilde{\bs}} \,\varphi(\Tilde{\bs}),
\]
giving
\[
\frac{\partial \mathcal{L}(\boldsymbol{\theta})}
{\partial \Psi_{\cdot}(\cdot, \cdot)}
\;=\;
\sum_{y \in \mathcal{V}} 
\sum_{\tilde{\bs} \sqsubseteq \bs}
\Delta_{\mathcal{L}(\boldsymbol{\theta})}^{OV}(\Tilde{\bs}, y),
\]
where
\[
\Delta_{\mathcal{L}(\boldsymbol{\theta})}^{OV}(\Tilde{\bs}, y)
\;=\;
\frac{1}{|\mathcal{D}|}
\sum_{(\bs^{(i)}, y^{(i)}) \in \mathcal{D}}
\sum_{t=1}^n 
\sum_{\Tilde{\bs} \in Sub(\bs^{(i)}, t)}
\bigl(\hat{p}(y|\bs^{(i)}) - \mathbf{1}\{y=y^{(i)}\}\bigr)
\mu_{\Tilde{\bs}}\,\varphi(\Tilde{\bs})^{\top}\,W_{KQ}\,\phi_n(\bs^{(i)}).
\]
Since we set $\phi_n(\bs^{(i)}) = \varphi_{\varnothing}$ for the ending token, then
\[
\Delta_{\mathcal{L}(\boldsymbol{\theta})}^{OV}(\Tilde{\bs}, y)
\;=\;
n \,\mu_{\Tilde{\bs}}\,
\varphi(\Tilde{\bs})^{\top}\,W_{KQ}\,\varphi_{\varnothing}
\;\cdot\;
\mathbb{P}_{{\substack{(\bs,\_) \sim \mathcal{D} \\ t = 1,...,n }}}(\Tilde{\bs}\in Sub(\bs,t))
\Bigl[\mathbb{E}_{{\substack{(\bs,\_) \sim \mathcal{D} \\ t = 1,...,n \\ \Tilde{\bs}\in Sub(\bs,t)}}}  [\hat{p}\bigl(y \mid \bs \bigr)]
- 
\mathbb{P}_{{\substack{(\bs,\_) \sim \mathcal{D} \\ t = 1,...,n }}}\bigl(y \mid \Tilde{\bs}\in Sub(\bs,t)\bigr)\Bigr].
\]
Since the event $\Tilde{\bs} \in Sub(\bs,t)$ is essentially the same as $\Tilde{\bs}\sqsubseteq \bs$ (except for counting token duplications up to $\tau_{dp}$), we have
\[
\bigl|\Delta_{\mathcal{L}(\boldsymbol{\theta})}^{OV}(\Tilde{\bs}, y)\bigr|
\;\le\;
\tau_{dp}\,\|W_{KQ}\|_2 \,\mathbb{P}_{\mathcal{D}}(\tilde{\bs} \sqsubseteq \bs)\,
\bigl|  
    \mathbb{E}_{\mathcal{D}, \tilde{\bs} \sqsubseteq \bs}[\hat{p}(y \mid \bs)]
    -\mathbb{P}_{\mathcal{D}}(y \mid \tilde{\bs} \sqsubseteq \bs)\bigr|,
\]
where we used $\mu_{\Tilde{\bs}}\le 1$ (due to the softmax operation in the previous layer) and $\|\varphi(\Tilde{\bs})\|_2 \le 1$.

\medskip
\textbf{Step 2: Gradient with respect to $\bar{\boldsymbol{\omega}}_{\cdot}(\cdot)$.}

Similarly,
\[
\frac{\partial \mathcal{L}(\boldsymbol{\theta})}
{\partial \bar{\boldsymbol{\omega}}_{\cdot}(\cdot)}
\;=\; 
\frac{1}{|\mathcal{D}|}
\sum_{t=1}^n 
\sum_{(\bs^{(i)}, y^{(i)}) \in \mathcal{D}}
\sum_{y' \in \mathcal{V}}
\bigl(\hat{p}(y' \mid \bs^{(i)}) - \mathbf{1}\{y' = y^{(i)}\}\bigr)\,
\varphi(y')^{\top}\,W_{OV}\,\phi_t(\bs^{(i)}).
\]
Expanding $\phi_t(\bs^{(i)})$ via all subsequences $\Tilde{\bs}$, we have
\[
\frac{\partial \mathcal{L}(\boldsymbol{\theta})}
{\partial \bar{\boldsymbol{\omega}}_{\cdot}(\cdot)}
\;=\;
\sum_{\tilde{\bs} \sqsubseteq \bs}
\sum_{y' \in \mathcal{V}}
\Delta_{\mathcal{L}(\boldsymbol{\theta})}^{KQ}(\Tilde{\bs}, y'),
\]
where
\[
\Delta_{\mathcal{L}(\boldsymbol{\theta})}^{KQ}(\Tilde{\bs}, y')
\;=\; 
\frac{1}{|\mathcal{D}|}
\sum_{(\bs^{(i)}, y^{(i)}) \in \mathcal{D}}
\sum_{t=1}^n
\bigl(\hat{p}(y' \mid \bs^{(i)}) - \mathbf{1}\{y' = y^{(i)}\}\bigr)\,
\mu_{\Tilde{\bs}}\,\varphi(y')^{\top}\,W_{OV}\,\varphi(\Tilde{\bs}).
\]
If $\phi_n(\bs^{(i)}) = \varphi_{\varnothing}$ for the ending token, then
\[
\Delta_{\mathcal{L}(\boldsymbol{\theta})}^{KQ}(\Tilde{\bs}, y')
\;=\; 
n\,\mu_{\Tilde{\bs}}\,
\varphi(y')^{\top}\,W_{OV}\,\varphi(\Tilde{\bs})
\cdot 
\mathbb{P}_{{\substack{(\bs,\_) \sim \mathcal{D} \\ t = 1,...,n }}}(\Tilde{\bs}\in Sub(\bs,t))
\Bigl[\mathbb{E}_{{\substack{(\bs,\_) \sim \mathcal{D} \\ t = 1,...,n \\ \Tilde{\bs}\in Sub(\bs,t)}}}[\hat{p}\bigl(y' \mid \bs \bigr)]
- 
\mathbb{P}_{{\substack{(\bs,\_) \sim \mathcal{D} \\ t = 1,...,n }}}\bigl(y' \mid \Tilde{\bs}\in Sub(\bs,t)\bigr)\Bigr].
\]
Using the same duplication argument as before, we have
\[
\bigl|\Delta_{\mathcal{L}(\boldsymbol{\theta})}^{KQ}(\Tilde{\bs}, y')\bigr|
\;\leq\;
\tau_{dp}\,\|W_{OV}\|_2\,\mathbb{P}_{\mathcal{D}}(\tilde{\bs} \sqsubseteq \bs)\,
\bigl|  
    \mathbb{E}_{\mathcal{D}, \tilde{\bs} \sqsubseteq \bs}[\hat{p}(y \mid \bs)]
    -\mathbb{P}_{\mathcal{D}}(y \mid \tilde{\bs} \sqsubseteq \bs)\bigr|.
\]

\medskip
Combining the bounds for $\Delta_{\mathcal{L}(\boldsymbol{\theta})}^{OV}(\Tilde{\bs}, y)$ and $\Delta_{\mathcal{L}(\boldsymbol{\theta})}^{KQ}(\Tilde{\bs}, y')$, we conclude that
\[
\Bigl|\frac{\partial \mathcal{L}(\boldsymbol{\theta})}{\partial \Psi_{\cdot}(\cdot, \cdot)}\Bigr|
+ 
\Bigl|\frac{\partial \mathcal{L}(\boldsymbol{\theta})}{\partial \bar{\boldsymbol{\omega}}_{ \cdot}(\cdot)}\Bigr|
\;\;\leq\;\;
\sum_{y \in \mathcal{V}}
\sum_{\tilde{\bs} \sqsubseteq \bs}
\bigl|\Delta_{\mathcal{L}(\boldsymbol{\theta})}(\Tilde{\bs}, y)\bigr|.
\]
\end{proof}

\newpage

\subsection{Decoder-only Transformer as Subsequence Embeddor }
\label{sec:sup_arch}
\paragraph{Overview.}

The decoder-only Transformer architecture represents input sequences as hierarchical feature matrices through multiple layers. At the $l$-th layer, the feature matrix $\Phi^{(l)}(\bs) \in \mathbb{R}^{t \times d}$ aggregates feature vectors for the input sequence $\bs = [x_1, x_2, \dots, x_n]$. At the first layer, the feature matrix is given by the stacking of token embedding:
\[
\Phi^{(1)}(\bs) = [\varphi_{}(x_1), \varphi_{}(x_2), \dots, \varphi_{}(x_n)]^\top \in \mathbb{R}^{n \times d}.
\]

The computation of $\Phi^{(l)}(\bs)$ at layer $l$ is defined recursively as:
\[
\Phi^{(l)}(\bs) = \Phi^{(l-1)}(\bs) + \text{FFN}^{(l-1)}\left(\text{MHA}^{(l-1)}\left(\Phi^{(l-1)}(\bs)\right)\right),
\]
where $\text{MHA}^{(l)}$ is the multi-head attention block, and $\text{FFN}^{(l)}$ is the feed-forward layer at the $l$-th layer.  As a side note, the feature $\Phi^{(l)}(\bs) \in \mathbb{R}^{n \times d}$ can be decomposed into $n$ vectors:
    \[
        \Phi^{(l)}(\bs) = [\phi_1^{(l)}(\bs), \phi_2^{(l)}(\bs), ..., \phi_n^{(l)}(\bs)]^{\top},   
    \]

\paragraph{Multi-head Attention Block.}

The multi-head attention (MHA) block is a core component of the Transformer, enabling the model to focus on relevant parts of the input sequence. At the $l$-th layer, the MHA block computes an output feature matrix as the concatenation of outputs from $H$ attention heads:
\[
\text{MHA}^{(l)}(\Phi) = \operatorname{concat}\left(\text{MHA}^{(l,1)}(\Phi), \text{MHA}^{(l,2)}(\Phi), \dots, \text{MHA}^{(l,H)}(\Phi)\right) \in \mathbb{R}^{Hd}.
\]

For each attention head $h \in \{1, \dots, H\}$, the output $\text{MHA}^{(l,h)}(\Phi)$ is computed as:
\[
\text{MHA}^{(l,h)}(\Phi) = \mathbf{W}_{OV}^{(l,h)} \Phi^\top \cdot \boldsymbol{\sigma}\left(\Phi \mathbf{W}_{KQ}^{(l,h)} \Phi_{-1,:}^\top\right) \in \mathbb{R}^{d},
\]
where:
\begin{itemize}
    \item $\boldsymbol{\sigma}(\cdot)$ denotes the row-wise softmax function. 
    \item $\Phi \in \mathbb{R}^{t \times d}$ is the feature matrix up to position $t$, and $\Phi_{-1,:}$ is the feature vector of the last token. The causal mask ensures that attention is restricted to the last token's feature.
    \item $\mathbf{W}_{KQ}^{(l,h)} \in \mathbb{R}^{d \times d}$ and $\mathbf{W}_{OV}^{(l,h)} \in \mathbb{R}^{d \times d}$ are learnable weight matrices for key-query and output-value transformations, respectively, defined as:
    \[
    \mathbf{W}_{KQ}^{(l,h)} = \mathbf{W}_K^{(l,h)\top} \mathbf{W}_Q^{(l,h)}, \quad \mathbf{W}_{OV}^{(l,h)} = \mathbf{W}_O^{(l,h)\top} \mathbf{W}_V^{(l,h)}.
    \]
\end{itemize}

\paragraph{Feed-forward Layer.}

The feed-forward network (FFN) layer applies a non-linear transformation to the input feature vectors. At the $l$-th layer, it is defined as:
\[
\text{FFN}^{(l)}(\mathbf{v}) = W_{F_2}^{(l)}\text{ReLU}\left(W_{F_1}^{(l)}\mathbf{v} - \mathbf{b}^{(l)}\right),
\]
where:
\begin{itemize}
    \item $W_{F_1}^{(l)} \in \mathbb{R}^{d \times d_{f}}$ and $W_{F_2}^{(l)} \in \mathbb{R}^{d_{f} \times Hd}$ are learnable weight matrices.
    \item $\mathbf{b}^{(l)} \in \mathbb{R}^{d_f}$ is the bias vector.
    \item $\mathbf{v} \in \mathbb{R}^{Hd}$ is typically the output of the multi-head attention block.
\end{itemize}

This layer enhances the representational capacity of the model through non-linear activation and dimensionality transformations.

\begin{lemma}
    For any length-2 subsequence $[\Tilde{x}, \Tilde{x}^{\prime}]$ within an input sequence $\bs$ with length $n \geq 2$, a single-layer transformer block can act as an embedding function for subsequence:  $\varphi_{}([\Tilde{x}, \Tilde{x}^{\prime}]) \cdot \mathbf{1}\left\{[\Tilde{x}, \Tilde{x}^{\prime}] \sqsubseteq \bs\right\}$.
    \label{th:sup_seq2_single}
\end{lemma}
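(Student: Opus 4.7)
The plan is to exhibit an explicit parameter choice for a single transformer block that realizes the claimed indicator-valued embedding at the position of $\tilde{x}'$. Throughout I use the near-orthogonality $|\langle \varphi(x),\varphi(x')\rangle| < \epsilon$ for distinct single tokens, together with the freedom to \emph{define} the symbol $\varphi([\tilde{x},\tilde{x}'])$ to be any convenient near-orthogonal direction in $\mathbb{R}^d$, consistent with the orthogonality claim of Theorem~\ref{th:emb_informal}. The design has two stages: an attention head acting as a look-back matcher that fires only when $\tilde{x}$ precedes $\tilde{x}'$, followed by a ReLU-gated feed-forward layer that converts this event into the target output.

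First I would set $W_{KQ} = \beta\,\varphi(\tilde{x})\,\varphi(\tilde{x}')^{\top}$ with a large temperature $\beta$. At any position $t$ with $x_t=\tilde{x}'$, the pre-softmax score at key position $s\le t$ equals $\beta\langle\varphi(\tilde{x}),\varphi(x_s)\rangle$, which is $\Theta(\beta)$ when $x_s=\tilde{x}$ and at most $\beta\epsilon$ otherwise. For $\beta = \Theta(\log n + \log(1/\epsilon))$, the causal softmax concentrates on the matching position(s), so $\text{MHA}(\Phi)_t$ lies within $O(\epsilon)$ of $W_{OV}\,\varphi(\tilde{x})$. When no earlier $\tilde{x}$ exists, all scores are bounded by $\beta\epsilon$, the attention weights are nearly uniform, and the attended value has $O(\epsilon)$ projection onto any fixed near-orthogonal direction. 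At positions with $x_t\neq\tilde{x}'$, the query direction itself is nearly orthogonal to $\varphi(\tilde{x}')$, giving the same conclusion. Taking $W_{OV}=I$ keeps the algebra transparent.

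Second, I would design the FFN as a gated detector for the simultaneous presence of $\varphi(\tilde{x}')$ in the residual stream and $\varphi(\tilde{x})$ in the attention output. Introduce a row $u = \varphi(\tilde{x})+\varphi(\tilde{x}')$ in $W_{F_1}$ with bias $b\in(1+c\epsilon,\,2-c\epsilon)$ for a small constant $c$; then $\text{ReLU}(u^{\top}v - b)$ is positive only when the post-attention vector $v$ carries both tokens (value $\approx 2$), and vanishes in the ``$\tilde{x}'$-only'' case (value $\approx 1$) and the ``$x_t\neq\tilde{x}'$'' case (value $O(\epsilon)$). A single row of $W_{F_2}$ then emits a vector proportional to $\varphi([\tilde{x},\tilde{x}'])$, while auxiliary rows can cancel the residual copies of $\varphi(\tilde{x})$ and $\varphi(\tilde{x}')$ if a clean output is desired.

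The main obstacle is quantitative bookkeeping rather than any conceptual step: the slack from near-orthogonality, the residual softmax mass, and the accumulated error in the detector's inner product must all stay strictly below the ReLU margin, uniformly over positions $t$ and inputs $\bs$ of length $n$. I would close this by choosing $\beta$ large enough that the worst-case softmax leakage is $o(\epsilon)$, fixing the ReLU threshold inside $(1+c\epsilon,\,2-c\epsilon)$, and verifying each case separately by a short geometric argument based on the near-orthogonality bound. The generalization to arbitrary-length subsequences in Theorem~\ref{th:sup_seq_full} stacks $\log_2 k$ such blocks so that each layer doubles the matchable length; for length two, the single block constructed above suffices.
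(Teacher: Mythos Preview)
Your construction shares the paper's core mechanism almost exactly: a rank-one key--query matrix $\beta\,\varphi(\tilde{x})\,\varphi(\tilde{x}')^{\top}$ so that the head retrieves $\varphi(\tilde{x})$ only when the query token is $\tilde{x}'$, followed by a ReLU gate on the joint presence of both signals. The substantive difference is how $\varphi(\tilde{x}')$ is delivered to the FFN. You rely on the residual stream to carry $\varphi(x_t)$ into the post-attention vector $v$, whereas the paper uses a \emph{second} attention head with $W_{KQ}^{(1,2)}=W_{OV}^{(1,2)}=I$ that (approximately) copies $\varphi(x_t)$; the two head outputs are concatenated and the FFN filter is $\operatorname{concat}(\varphi(\tilde{x}),\varphi(\tilde{x}'))$ with bias $b=1$.

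This is not merely cosmetic, because the paper's block is defined as $\Phi^{(l)}=\Phi^{(l-1)}+\text{FFN}\bigl(\text{MHA}(\Phi^{(l-1)})\bigr)$: the residual bypasses the FFN entirely, so the FFN input is the bare head output and never contains $\varphi(x_t)$ unless some head writes it there. Under that convention your single-head design has a real gap. In the spurious case $x_t\neq\tilde{x}'$ with $\tilde{x}$ appearing earlier, every pre-softmax score carries the common factor $\langle\varphi(\tilde{x}'),\varphi(x_t)\rangle$, which the softmax can renormalize away; depending on its sign and on $\beta\epsilon$, the head may still concentrate on the $\tilde{x}$ position and output $\approx\varphi(\tilde{x})$, indistinguishable (to the FFN) from the true-match case. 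The paper's copy head is precisely what breaks this ambiguity. Your argument is sound under the standard architecture with a pre-FFN residual; to match the paper's setting, replace the residual-stream step by a second identity head. One smaller point: even with the residual, in your third case $u^{\top}v$ is not $O(\epsilon)$ but can be near $1$ (the attention output may still carry a $\varphi(\tilde{x})$ component); your threshold $b>1+c\epsilon$ still suppresses it, so this is only a mis-stated bound, not a logical error.
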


\begin{proof}

Consider a single-layer multi-head attention block with two attention heads. The output feature at position $t$ is the concatenation of the outputs from each head:
\[
\text{MHA}^{(1,:)}(\Phi^{(1)}_{[:t]}(\bs)) = \operatorname{concat}\left(\text{MHA}^{(1,1)}(\Phi^{(1)}_{[:t]}(\bs)), \text{MHA}^{(1,2)}(\Phi^{(1)}_{[:t]}(\bs))\right).
\]

For each head $h \in \{1, 2\}$, the output $\text{MHA}^{(1,h)}(\Phi^{(1)}_{[:t]}(\bs))$ is computed as:
\[
\text{MHA}^{(1,h)}(\Phi^{(1)}_{[:t]}(\bs)) = \mathbf{W}^{(1,h)}_{OV} \Phi^{(1)}_{[:t]}(\bs)^\top  \cdot \boldsymbol{\sigma}\left(\Phi^{(1)}_{[:t]}(\bs) \mathbf{W}^{(1,h)}_{KQ} \varphi_{}(x_t)\right),
\]
where:
\begin{itemize}
    \item $\varphi_{}(x_t) \in \mathbb{R}^d$ is the embedding of the token at position $t$.
    \item $\Phi^{(1)}_{[:t]}(\bs) \in \mathbb{R}^{t \times d}$ is the embedding matrix for the sequence $\bs$ up to position $t$, i.e., $\Phi^{(1)}_{[:t]}(\bs) = [\varphi_{}(x_1), \varphi_{}(x_2), \dots, \varphi_{}(x_t)]^\top$.
\end{itemize}

To configure the transformer to select a specific subsequence $[\Tilde{x}, \Tilde{x}^{\prime}]$, the weight matrices are constructed as:
\[
\mathbf{W}^{(1,1)}_{KQ} = \gamma \varphi_{}(\Tilde{x}) \varphi_{}(\Tilde{x}^{\prime})^\top, \quad \mathbf{W}^{(1,1)}_{OV} = \mathbf{W}^{(1,2)}_{KQ} = \mathbf{W}^{(1,2)}_{OV} = \mathbf{I}_{d \times d},
\]
where $\mathbf{I}_{d \times d}$ is the identity matrix and $\gamma \gg 1$.

With these weights, in the feed-forward network (FFN) layer following the attention block, a linear filter can serve as the indicator of the subsequence $[\Tilde{x}, \Tilde{x}^{\prime}]$:
\[
\mathbf{1}\left\{[\Tilde{x}, \Tilde{x}^{\prime}] \sqsubseteq \bs\right\} \approx \underset{t = 1, ..., n}{\max} \text{ReLU}\left(\langle \mathbf{w}_{F_1}, \text{MHA}^{(1,:)}(\Phi^{(1)}_{[:t]}(\bs)) \rangle - b\right),
\]
where:
\begin{itemize}
    \item $\mathbf{w}_{F_1} = \operatorname{concat}(\varphi_{}(\Tilde{x}), \varphi_{}(\Tilde{x}^{\prime}))$,
    \item $b = 1$ is a bias term.
\end{itemize}

To see this, we first consider when $[\Tilde{x}, \Tilde{x}^{\prime}] \sqsubseteq \bs$ and we assume $\Tilde{x}^{\prime}$ appear at the position $j$. We have:

\begin{align*}
    \text{MHA}^{(1,1)}(\Phi^{(1)}_{[:j]}(\bs)) &= (1 -O\left(e^{\gamma{(O(\epsilon)-1)}}\right)) \varphi_{}(\Tilde{x}) +(O\left(e^{\gamma{(O(\epsilon)-1)}}\right)) \sum_{i=1}^j \varphi_{}(x_i) \mathbf{1}\{x_i \neq \Tilde{x} \} \\
    &= \varphi_{}(\Tilde{x}) + O\left(e^{\gamma{(O(\epsilon)-1)}}\right) 
\end{align*}

and similarly, 
\[
\quad \text{MHA}^{(1,2)}(\Phi^{(1)}_{[:j]}(\bs)) = \varphi_{}(\Tilde{x}^{\prime}) + O\left(e^{\gamma{(O(\epsilon)-1)}}\right),
\]
ensuring that the concatenated output at position $j$ is:
\[
\text{MHA}^{(1,:)}(\Phi^{(1)}_{[:j]}(\bs)) = \operatorname{concat}(\varphi_{}(\Tilde{x}), \varphi_{}(\Tilde{x}^{\prime}))+ O\left(e^{\gamma{(O(\epsilon)-1)}}\right).
\]
The ReLU output will thus be approximately $1$. 
On the contrary, if $[\Tilde{x}, \Tilde{x}^{\prime}] \not \sqsubseteq \bs$, then $\text{MHA}^{(1,1)}(\Phi^{(1)}_{[:j]}(\bs))$ becomes a random linear combination of $\varphi_{}(x)$ that $x \neq \Tilde{x}$. In this case, the ReLU activation output is most $O\left(e^{\gamma{(O(\epsilon)-1)}}\right) \approx 0, (\gamma \gg 1)$, even if $\Tilde{x}^{\prime} \in \bs$. 
Suppose our FFN only has one hidden neuron ($d_f = 1$), the output of FFN serves as the embedding of the subsequence $[\Tilde{x}, \Tilde{x}^{\prime}]$ if we let $\mathbf{w}_{F_2} = \varphi_{}([\Tilde{x}, \Tilde{x}^{\prime}])$:

\[
\text{FFN}^{(l)}(\mathbf{v}) = \sum_{t=1}^n \mathbf{w}_{F_2}\text{ReLU}\left(\langle \mathbf{w}_{F_1}, \text{MHA}^{(1,:)}(\Phi^{(1)}_{[:t]}(\bs)) \rangle - b\right) \approx \varphi_{}([\Tilde{x}, \Tilde{x}^{\prime}]) \cdot \mathbf{1}\left\{[\Tilde{x}, \Tilde{x}^{\prime}] \sqsubseteq \bs\right\}. 
\]

\end{proof}

The next Lemma serves as a generalization of the Lemma~\ref{th:sup_seq2_single} with more than one length-2 subsequence. 
\begin{lemma}
    For a set of length-2 subsequence $\mathbf{S}^2 = \{\Tilde{\bs}_i\}_{i=1}^{N_s}$ with $\Tilde{\bs}_i = [\Tilde{x}_i, \Tilde{x}^{\prime}_i]$,  a single-layer transformer block with $d_f = N_s, d =  O\left({\epsilon^{-2}} \ln\left(\epsilon(|\mathcal{V}| + N_s)\right)\right)$ can act as an embedding function $\varphi_{}: \mathcal{V}^2 \mapsto \mathbb{R}^{d}$ for all the length-2 subsequence in $\mathbf{S}^2$, ensuring that $$\forall \Tilde{\bs}, \Tilde{\bs}' \in \mathbf{S}^2, x \in \mathcal{V}, |\langle \varphi_{}(\Tilde{\bs}), \varphi_{}(x) \rangle |< \epsilon, |\langle \varphi_{}(\Tilde{\bs}), \varphi_{}(\Tilde{\bs}') \rangle |< \epsilon $$ for a small $\epsilon > 0$. Specifically, for an input sequence $\bs = [x_1, x_2, ..., x_n]\in \mathcal{V}^{n}$ , the embedded results $\Phi^{(2)}(\bs) \in \mathbb{R}^{n \times d}$ whose would be 
    \[
        \Phi^{(2)}(\bs) = [\phi^{(2)}_1(\bs), \phi^{(2)}_2(\bs), ..., \phi^{(2)}_n(\bs)]^{\top},   
    \]
    where 
    \[
        \phi^{(2)}_t(\bs) = \sum_{\{\Tilde{\bs} | \tilde{\bs} \sqsubseteq \mathbf{S}^2, \Tilde{\bs} \sqsubseteq \bs_{[:t]}, \Tilde{\bs}_{[2]} = x_t\}}\mu^{}_{\Tilde{\bs}}\cdot \varphi_{}(\Tilde{\bs}), \text{with } \mu^{}_{\Tilde{s}} > 0.
    \]
    \label{th:sup_seq2_multi}
\end{lemma}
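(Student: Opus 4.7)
The plan is to lift the single-subsequence construction of Lemma~\ref{th:sup_seq2_single} by superposing the key–query matrix across the $N_s$ target subsequences, widening the FFN to have one dedicated detector neuron per subsequence, and choosing $d$ large enough that Johnson–Lindenstrauss–style near-orthogonality controls all the cross-talk introduced by this superposition.

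First I would draw the token and subsequence embeddings $\{\varphi(x)\}_{x\in\mathcal{V}} \cup \{\varphi(\Tilde{\bs}_i)\}_{i=1}^{N_s}$ as i.i.d.\ isotropic unit vectors in $\mathbb{R}^d$ with $d = O(\epsilon^{-2}\ln(\epsilon(|\mathcal{V}| + N_s)))$. A standard concentration-plus-union-bound argument then yields, with positive probability, the pairwise near-orthogonality $|\langle\varphi(u),\varphi(v)\rangle|<\epsilon$ for all distinct $u,v$ in this set, delivering the embedding guarantee claimed in the lemma. Next I would configure the two attention heads so that head~1 performs the superposed ``second-to-first'' lookup
\[
\mathbf{W}^{(1,1)}_{KQ} \;=\; \gamma \sum_{i=1}^{N_s} \varphi(\Tilde{x}_i)\varphi(\Tilde{x}'_i)^\top,\qquad \mathbf{W}^{(1,1)}_{OV} = \mathbf{I}_{d\times d},
\]
while head~2 uses $\mathbf{W}^{(1,2)}_{KQ} = \mathbf{W}^{(1,2)}_{OV} = \mathbf{I}_{d\times d}$, with $\gamma \gg 1$. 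At position $t$ the raw score at an earlier position $k$ under head~1 expands to $\gamma\sum_j \langle\varphi(x_k),\varphi(\Tilde{x}_j)\rangle\langle\varphi(\Tilde{x}'_j),\varphi(x_t)\rangle$; by near-orthogonality this is $\gamma\pm O(\gamma\epsilon)$ exactly when $x_k = \Tilde{x}_i$ for some $i$ with $\Tilde{x}'_i = x_t$, and $O(\gamma\epsilon)$ otherwise. Hence the softmax concentrates on those earlier positions whose token is a valid first element of a currently matched subsequence, so head~1 outputs an explicit positive convex combination of the corresponding $\varphi(\Tilde{x}_i)$'s, up to a $O(e^{\gamma(O(\epsilon)-1)})$ error; head~2 trivially returns $\varphi(x_t)$, exactly as in the single-subsequence analysis.

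Finally I would take an FFN of width $d_f = N_s$ whose $i$-th hidden neuron has input weight $\mathbf{w}_{F_1,i} = \operatorname{concat}(\varphi(\Tilde{x}_i),\varphi(\Tilde{x}'_i))$, bias $b = 1$, and output column $\varphi(\Tilde{\bs}_i)$ in $\mathbf{W}_{F_2}$. Replaying the ReLU-threshold argument of Lemma~\ref{th:sup_seq2_single} neuron by neuron, neuron $i$ at position $t$ fires with strictly positive coefficient $\mu_{\Tilde{\bs}_i}$ iff $\Tilde{\bs}_i \sqsubseteq \bs_{[:t]}$ and $\Tilde{\bs}_{i,[2]} = x_t$, otherwise its pre-activation is at most $O(\epsilon)$ below the bias. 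Summing over $i$ yields the claimed $\phi^{(2)}_t(\bs) = \sum_{\Tilde{\bs} \in \mathbf{S}^2 \cap \mathrm{Sub}(\bs,t)} \mu_{\Tilde{\bs}}\,\varphi(\Tilde{\bs})$ with positive coefficients.

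The main obstacle will be rigorously controlling the interference between the $N_s$ superposed patterns. Specifically, I will have to show that (a) in head~1 the $O(\gamma\epsilon)$ off-target scores do not overwhelm the $\gamma$ on-target scores after the softmax, which forces a joint tuning of $\gamma$ with $\epsilon$ (e.g.\ $\gamma\epsilon = o(1)$ while $\gamma \to \infty$); and (b) for each FFN neuron, the threshold $b=1$ must cleanly separate the single ``match'' case from all $O(N_s |\mathcal{V}|)$ possible ``mismatch'' cases whose pre-activations aggregate cross inner products of the form $\langle\varphi(\Tilde{x}_i),\varphi(u)\rangle$ and $\langle\varphi(\Tilde{x}'_i),\varphi(v)\rangle$. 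The union bound over this many simultaneously required $O(\epsilon)$ inner-product estimates is precisely what drives the logarithmic factor $\ln(\epsilon(|\mathcal{V}|+N_s))$ in the prescribed dimension $d$, and it is where the bulk of the bookkeeping in a formal proof will go.
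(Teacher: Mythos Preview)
Your proposal is essentially the same construction as the paper's: superpose the per-subsequence key--query matrices in head~1, keep head~2 as an identity copy of the current token, and use an FFN with one detector neuron per subsequence (row $\operatorname{concat}(\varphi(\Tilde{x}_i),\varphi(\Tilde{x}'_i))$, bias~$1$, output column $\varphi(\Tilde{\bs}_i)$), with the dimension bound coming from near-orthogonality over $|\mathcal{V}|+N_s$ vectors. The only cosmetic differences are that the paper allows per-subsequence scalars $\gamma_{\tilde{\bs}}$ rather than a single $\gamma$, and derives the dimension via a sphere-packing/spherical-cap argument (Theorem~\ref{th:sup_d_bound}) rather than the random-vector Johnson--Lindenstrauss argument you sketch; both yield the same $d=O(\epsilon^{-2}\ln(\epsilon(|\mathcal{V}|+N_s)))$.
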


\begin{proof}
We construct a single-layer, two-head multi-head attention (MHA) block, followed by a feed-forward network (FFN) of hidden dimension $d_f = N_s$, as follows. For MHA block, let
\[
\text{MHA}^{(1,:)}(\Phi^{(1)}_{[:t]}(\bs))
= \operatorname{concat}\bigl(
    \text{MHA}^{(1,1)}(\Phi^{(1)}_{[:t]}(\bs)),
    \text{MHA}^{(1,2)}(\Phi^{(1)}_{[:t]}(\bs))
\bigr),
\]
where
\[
\text{MHA}^{(1,h)}(\Phi^{(1)}_{[:t]}(\bs))
= \mathbf{W}^{(1,h)}_{OV}\,\Phi^{(1)}_{[:t]}(\bs)^\top
\cdot
\boldsymbol{\sigma}\!\Bigl(\Phi^{(1)}_{[:t]}(\bs)\,\mathbf{W}^{(1,h)}_{KQ}\,\varphi_{}(x_t)\Bigr), 
\quad
h\in\{1,2\}.
\]
Here, $\Phi^{(1)}_{[:t]}(\bs)\in \mathbb{R}^{t\times d}$ collects embeddings $\varphi_{}(x_1),\dots,\varphi_{}(x_t)$ up to position $t$. Following Lemma~\ref{th:sup_seq2_single}, we assign:
\[
\mathbf{W}^{(1,1)}_{KQ}
=\sum_{[\Tilde{x}, \Tilde{x}^{\prime}]\in \mathbf{S}^2}
\gamma_{\widetilde{\bs}} \,\varphi_{}(\Tilde{x})\, \varphi_{}(\Tilde{x}^{\prime})^\top, 
\quad
\mathbf{W}^{(1,1)}_{OV} 
= \mathbf{W}^{(1,2)}_{KQ} 
= \mathbf{W}^{(1,2)}_{OV} 
= \mathbf{I}_{d\times d},
\]
where each $\gamma_{\widetilde{\bs}}\gg 1$ is a large scalar chosen per subsequence $\widetilde{\bs}\in \mathbf{S}^2$.

\textit{Key Intuition.} As in the proof of Lemma~\ref{th:sup_seq2_single}, multiplying $\varphi_{}(x_t)$ by $\mathbf{W}^{(1,1)}_{KQ}$ extracts large positive activation only when $\varphi_{}(x_t)$ = $\varphi_{}(\Tilde{x}')$ \textit{and} there exists an $\Tilde{x}$ in the preceding positions that aligns with $\varphi_{}(\Tilde{x})$. This ensures that whenever $\Tilde{x}$ appears somewhere in $\bs$ before $t$ and $\Tilde{x}'$ appears at position $t$, the attention head 1 “detects” this subsequence, yielding an output for all the $\Tilde{x}$ satisfying the condition:
\[
\text{MHA}^{(1,1)}(\Phi^{(1)}_{[:t]}(\bs)) = \sum_{\Tilde{x} \in \{x | [x, x_t] \sqsubseteq \bs_{[:t]},[x, x_t] \in \mathbf{S}^2 \} }\mu^{}_{[\Tilde{x}, x_t]}\cdot \varphi_{}(\Tilde{x}),
\]
where $\mu^{}_{[\Tilde{x}, x_t]}$ is a positive number dependent on the occurrence of the $\Tilde{x}$ preceding and the $\gamma_{[\Tilde{x}, x_t]}$. If $[\Tilde{x}, \Tilde{x}'] \in \mathbf{S}^2 $ and $[\Tilde{x}, \Tilde{x}'] \sqsubseteq \bs_{[:t]}$, 

\[
    \mu^{}_{[\Tilde{x}, x_t]} = \frac{\exp(C(\Tilde{x}, \bs_{[:t]}) \cdot \gamma_{[\Tilde{x}, x_t]})}{\underset{x \in \{x | [x, x_t] \sqsubseteq \bs_{[:t]},[x, x_t] \in \mathbf{S}^2 \} }{\sum} \exp(C(x, \bs_{[:t]}) \cdot \gamma_{[x, x_t]}) + \underset{x \in \{x | [x, x_t] \sqsubseteq \bs_{[:t]},[x, x_t] \not \in \mathbf{S}^2 \} }{\sum} \exp(C(x, \bs_{[:t]}) \cdot O(\epsilon)}.
\]
where $C(x, \bs_{[:t]})$ denotes the number of $x$ that occurs in the first-$t$-token subsequence $\bs_{[:t]}$. In the rest of the proof, we will omit the calculation details for the softmax expansion.

Meanwhile, attention head 2 simply copies the token embedding at the current position (because $\mathbf{W}^{(1,2)}_{KQ}=\mathbf{I}$), thus producing $\varphi_{}(\Tilde{x}')$.  Concatenating these two heads gives an output at position $t$ that is approximately
\[
\operatorname{concat}\Bigl(\sum_{\Tilde{x} \in \{x | [x, x_t] \sqsubseteq \bs_{[:t]},[x, x_t] \in \mathbf{S}^2 \} }\mu^{}_{[\Tilde{x}, x_t]}\cdot \varphi_{}(\Tilde{x}),\varphi_{}(\Tilde{x}')\Bigr),
\]
if and only if those subsequence $[\Tilde{x}, \Tilde{x}']$ occurs ending at position $t$. Otherwise, the first part of vector is near zero or some negligible combination (due to large $\gamma$), as in Lemma~\ref{th:sup_seq2_single}.

Next, we feed the MHA output into a single-layer FFN with $N_s$ hidden neurons to encode each $\widetilde{\bs}_i \in \mathbf{S}^2$. Concretely, let
\[
\text{FFN}^{(1)}(\mathbf{v})
~=~
\mathbf{W}_{F_2}\,\text{ReLU}\!\Bigl(\bigl\langle\mathbf{W}_{F_1},\,\mathbf{v}\bigr\rangle~-~\mathbf{b}\Bigr),
\]
where $\mathbf{v}\in \mathbb{R}^{2d}$ is the concatenated output $\text{MHA}^{(1,:)}(\cdot)$. We set:
\[
\mathbf{W}_{F_1}
=
\begin{bmatrix}
\operatorname{concat}(\varphi_{}(\Tilde{x}_1), \varphi_{}(\Tilde{x}_1'))^{\top}\\
\operatorname{concat}(\varphi_{}(\Tilde{x}_2), \varphi_{}(\Tilde{x}_2'))^{\top}\\
\vdots \\
\operatorname{concat}(\varphi_{}(\Tilde{x}_{N_s}), \varphi_{}(\Tilde{x}_{N_s}'))^{\top}
\end{bmatrix}
\in \mathbb{R}^{N_s \times 2d},
\quad
\mathbf{W}_{F_2}
=
\bigl[
\varphi_{}(\widetilde{\bs}_1),\,
\varphi_{}(\widetilde{\bs}_2),\,
\dots,\,
\varphi_{}(\widetilde{\bs}_{N_s})
\bigr]
\in \mathbb{R}^{d\times N_s},
\]
and $\mathbf{b}=\mathbf{1}_{N_s}\in \mathbb{R}^{N_s}$.

Observe that for the $i$-th row in $\mathbf{W}_{F_1}$, the ReLU activation
\[
  \operatorname{ReLU}
  \bigl(\langle \operatorname{concat}(\text{MHA}^{(1,:)}(\Phi^{(1)}_{[:t]}(\bs)), \varphi_{}(\Tilde{x}_i')),
\,\operatorname{concat}(\varphi_{}(\Tilde{x}), \varphi_{}(\Tilde{x}'))\rangle - 1\bigr) \approx
  \]
  \[
  \sum_{\Tilde{x} \in \{x | [x, x_t] \sqsubseteq \bs_{[:t]},[x, x_t] \in \mathbf{S}^2 \} } \operatorname{ReLU}
  \bigl(\langle \operatorname{concat}(\mu_{[\Tilde{x}_i, \Tilde{x}_i']} \cdot \varphi_{}(\Tilde{x}_i), \varphi_{}(\Tilde{x}_i')),
\,\operatorname{concat}(\varphi_{}(\Tilde{x}), \varphi_{}(\Tilde{x}'))\rangle - 1\bigr)   = \mu_{[\Tilde{x}_i, \Tilde{x}_i']} + O(\epsilon)
  \]
 if 
  $\operatorname{concat}(\varphi_{}(\Tilde{x}), \varphi_{}(\Tilde{x}'))= \operatorname{concat}(\varphi_{}(\Tilde{x}_i), \varphi_{}(\Tilde{x}_i'))$,
  and near $0$ otherwise. Hence neuron $i$ “fires” if and only if subsequence $[\Tilde{x}_i, \Tilde{x}_i']$ is detected at the current position.  

Multiplying by $\mathbf{W}_{F_2}$ (of size $d\times N_s$) gives
\[
\phi^{(2)}_t(\bs) = \text{FFN}^{(1)}\left(\text{MHA}^{(1)}\left(\Phi^{(1)}_{[:t]}(\bs)\right)\right) = 
\sum_{\{\Tilde{\bs} | \tilde{\bs} \sqsubseteq \mathbf{S}^2, \Tilde{\bs} \sqsubseteq \bs_{[:t]}, \Tilde{\bs}_{[2]} = x_t\}}\mu^{}_{\Tilde{\bs}}\cdot \varphi_{}(\Tilde{\bs}), \text{with } \mu^{}_{\Tilde{s}} > 0,
\]
up to small error terms of order $O\bigl(e^{\gamma_{\widetilde{\bs}}(O(\epsilon)-1)}\bigr)$ as $\gamma_{\widetilde{\bs}} \gg 1$ for all $\Tilde{\bs}$.

Finally, we ensure (1) $\langle \varphi_{}(\widetilde{\bs}), \varphi_{}(x)\rangle \approx 0$ for all $x\in \mathcal{V}$ and (2) $\langle \varphi_{}(\widetilde{\bs}_i), \varphi_{}(\widetilde{\bs}_j)\rangle \approx 0$ for $\widetilde{\bs}_i\neq \widetilde{\bs}_j$.  

Using the results in Theorem~\ref{th:sup_d_bound}, we are able to construct $\{\varphi_{}(\widetilde{\bs})\}_{\widetilde{\bs}\in \mathbf{S}^2}$ and $\{\varphi_{}(x)\}_{x\in \mathcal{V}}$ in $\mathbb{R}^d$ with $d=O\bigl(\epsilon^{-2}\ln(\epsilon(|\mathcal{V}| + N_s))\bigr)$ to satisfy the required near-orthogonal bounds.
\end{proof}

The next theory serves as a generalization of the Lemma~\ref{th:sup_seq2_multi} with more than one length-2 subsequence. 
\begin{theorem}
    For a set of subsequence $\mathcal{S}^{[:\eta]} = \{\Tilde{\bs}_i\}_{i=1}^{N_s} \cup \mathcal{V}$ with the length of  $\Tilde{\bs}_i$ ranges from 2 to $\eta$,  an $L$-layer transformer network with $L = O(\log_2(\eta)), d_f = O(N_s \eta), d =  O\left({\epsilon^{-2}} \ln\left(\epsilon(|\mathcal{V}| + N_s\eta)\right)\right)$ can act as an embedding function $\varphi_{}: \cup_{i=1}^{\eta}\mathcal{V}^i \mapsto \mathbb{R}^{d}$ for all the subsequences in $\mathcal{S}^{[:\eta]}$, ensuring that 
    $$\forall \Tilde{\bs}, \Tilde{\bs}' \in \mathcal{S}^{[:\eta]}, |\langle \varphi_{}(\Tilde{\bs}), \varphi_{}(\Tilde{\bs}') \rangle |< \epsilon $$ 
    for a small $\epsilon > 0$. Specifically, for an input sequence $\bs = [x_1, x_2, ..., x_n]\in \mathcal{V}^{n}$ , the embedded results $\Phi^{(l)}(\bs) \in \mathbb{R}^{n \times d}$ whose would be 
    \[
        \Phi^{(l)}(\bs) = [\phi_1^{(l)}(\bs), \phi_2^{(l)}(\bs), ..., \phi_n^{(l)}(\bs)]^{\top},   
    \]
    where 
    \[
        \phi_t^{(l)}(\bs) = \sum_{ \mathcal{S}^{[:\eta]} \cap Sub(\bs, t)}\mu^{l}_{\Tilde{\bs}}\cdot \varphi_{}(\Tilde{\bs}), \text{with } \mu^{l}_{\Tilde{s}} > 0.
    \]
\end{theorem}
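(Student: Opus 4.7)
The plan is to prove the theorem by induction on the layer index, using the fact that the maximum representable subsequence length can be doubled at each layer. This turns a problem about length-$\eta$ subsequences into $O(\log_2 \eta)$ applications of a construction directly analogous to Lemma~\ref{th:sup_seq2_multi}. The base case $l=1$ is immediate from the first-layer convention $\phi_t^{(1)}(\bs)=\varphi(x_t)$, which trivially represents the length-one subsequence $[x_t]\in\mathcal{V}\subseteq \mathcal{S}^{[:\eta]}$. The inductive hypothesis I would maintain is that, after $l$ layers, $\phi_t^{(l)}(\bs)$ is a positive linear combination of the near-orthogonal embeddings $\varphi(\Tilde{\bs})$ for subsequences $\Tilde{\bs}\in \mathcal{S}^{[:\eta]}$ of length at most $2^{l-1}$ that end at position $t$.

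For the inductive step, I would, for each target subsequence $\Tilde{\bs}\in\mathcal{S}^{[:\eta]}$ of length in $(2^{l-1},2^{l}]$, fix an arbitrary split $\Tilde{\bs} = \Tilde{\bs}_{\mathrm{pre}}\!\frown\!\Tilde{\bs}_{\mathrm{post}}$ into two shorter pieces of length $\le 2^{l-1}$, each of which is again placed in $\mathcal{S}^{[:\eta]}$ (this is why the embedding dictionary only grows by a factor $\eta$ overall, giving the stated $d_f = O(N_s\eta)$ bound). Then I would mimic the two-head MHA construction of Lemma~\ref{th:sup_seq2_multi} with the roles of token embeddings replaced by subsequence embeddings: set
\[
\mathbf{W}^{(l+1,1)}_{KQ} = \sum_{\Tilde{\bs}\in \mathcal{S}^{[:\eta]}}\gamma_{\Tilde{\bs}}\,\varphi(\Tilde{\bs}_{\mathrm{pre}})\varphi(\Tilde{\bs}_{\mathrm{post}})^\top,\qquad \mathbf{W}^{(l+1,2)}_{KQ}=\mathbf{I},
\]
with $\gamma_{\Tilde{\bs}}\gg 1$ and both $\mathbf{W}^{(l+1,\cdot)}_{OV}=\mathbf{I}$. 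Head 1 then routes attention mass from position $t$ to earlier positions whose features contain $\varphi(\Tilde{\bs}_{\mathrm{pre}})$, which by the inductive hypothesis is precisely the event that $\Tilde{\bs}_{\mathrm{pre}}$ ends at some $t'<t$; head 2 copies the current feature, which contains $\varphi(\Tilde{\bs}_{\mathrm{post}})$ iff $\Tilde{\bs}_{\mathrm{post}}$ ends at $t$. The FFN weights $\mathbf{W}_{F_1},\mathbf{W}_{F_2}$ are chosen exactly as in Lemma~\ref{th:sup_seq2_multi}, with one hidden neuron per target $\Tilde{\bs}$ detecting the concatenation and writing $\varphi(\Tilde{\bs})$ into the residual stream. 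The causal mask gives the required ordering $\Tilde{\bs}_{\mathrm{pre}}$ before $\Tilde{\bs}_{\mathrm{post}}$ for free. Combined with the residual connections, which preserve the shorter-subsequence terms already present in $\phi_t^{(l)}$, this yields exactly the claimed form for $\phi_t^{(l+1)}(\bs)$, with positive $\mu^{l+1}_{\Tilde{\bs}}$.

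To handle the overall embedding dictionary, I would invoke the Johnson--Lindenstrauss-style existence result Theorem~\ref{th:sup_d_bound} once, applied to the collection $\{\varphi(\Tilde{\bs})\}_{\Tilde{\bs}\in\mathcal{S}^{[:\eta]}}$ of size at most $|\mathcal{V}|+N_s\eta$ (counting the at most $\eta$ split-pieces per target), obtaining the dimension bound $d=O(\epsilon^{-2}\ln(\epsilon(|\mathcal{V}|+N_s\eta)))$ and the pairwise near-orthogonality $|\langle\varphi(\Tilde{\bs}),\varphi(\Tilde{\bs}')\rangle|<\epsilon$. Choosing $\eta\le 2^{L}$ with $L=\lceil\log_2\eta\rceil$ completes the induction.

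The main obstacle, as in Lemma~\ref{th:sup_seq2_multi}, is the softmax noise: the inner products $\varphi(\Tilde{\bs}_{\mathrm{pre}})^\top\phi^{(l)}_{t'}(\bs)$ for non-matching $t'$ are only $O(\epsilon)$ rather than exactly zero, and these errors compound across $O(\log\eta)$ layers together with the unknown positive scalars $\mu^{l}_{\Tilde{\bs}}$ accumulated in the residual stream. I would handle this by choosing each $\gamma_{\Tilde{\bs}}$ sufficiently large that $e^{\gamma_{\Tilde{\bs}}(O(\epsilon)-1)}$ dominates the per-layer slack, then absorbing the resulting polynomially-many cross terms into the $\mu^{l+1}_{\Tilde{\bs}}>0$ coefficients, which the theorem statement allows to be arbitrary positive numbers. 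The bookkeeping of which splits of $\Tilde{\bs}$ to assign at each layer is routine: assign $\Tilde{\bs}$ to the first layer $l$ with $2^{l-1}\ge |\Tilde{\bs}|/2$, and ensure both halves are in $\mathcal{S}^{[:\eta]}$ (adding them if necessary, which only inflates $N_s$ by a factor $\eta$, matching the stated $d_f$ bound).
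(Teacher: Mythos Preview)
Your proposal is correct and takes essentially the same approach as the paper: an induction on layers in which each layer doubles the representable subsequence length by splitting every target $\Tilde{\bs}$ into a prefix/suffix pair, using a two-head MHA with $\mathbf{W}_{KQ}^{(l,1)}=\sum_{\Tilde{\bs}}\gamma_{\Tilde{\bs}}\varphi(\Tilde{\bs}_{\mathrm{pre}})\varphi(\Tilde{\bs}_{\mathrm{post}})^\top$ and an identity second head, followed by an FFN with one neuron per target writing $\varphi(\Tilde{\bs})$, and finally invoking Theorem~\ref{th:sup_d_bound} on the $O(|\mathcal{V}|+N_s\eta)$-sized dictionary of all split pieces. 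The only cosmetic differences are that the paper anchors the base case at layer~2 via Lemma~\ref{th:sup_seq2_multi} rather than at layer~1 with bare token embeddings, and that your treatment of the $O(\epsilon)$ softmax leakage across $O(\log\eta)$ layers is more explicit than the paper's.
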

\label{th:sup_seq_full}

\begin{proof}

We prove the theorem by constructing, layer by layer, a Transformer whose outputs encode the near-orthogonal embeddings of all subsequences in \(\mathcal{S}^{[:\eta]} = \{\widetilde{\bs}_i\}_{i=1}^{N_s} \cup \mathcal{V}\). The key idea is that a subsequence of length \(\eta\) can be split into two parts—one of length roughly \(\eta/2\) and the other of length \(\eta/2\). Repeating this splitting procedure across multiple layers (on the order of \(\log_2(\eta)\)) allows the Transformer to recognize and embed all such subsequences.

\textit{a) Decomposition of Subsequences.} We start by defining the decomposition of subsequences used in each layer:

1. Let \(\mathcal{S}_{L}^{[:\eta]} = \mathcal{S}^{[:\eta]}\) denote the set of all subsequences (of length up to \(\eta\)) at the final layer \(L\).  

2. At each layer \(l \in \{1,2,\dots,L\}\), every subsequence \(\widetilde{\bs} \in \mathcal{S}_{L}^{[:\eta]}\) is split into two parts:  
   \[
     \widetilde{\bs} = [\,\widetilde{\bs}_{\triangleleft},\,\widetilde{\bs}_{\triangleright}\,],
   \]  
   where \(\operatorname{splitleft}(\widetilde{\bs})=\widetilde{\bs}_{\triangleleft}\) and \(\operatorname{splitright}(\widetilde{\bs})=\widetilde{\bs}_{\triangleright}\). The left and right parts satisfy \(\lvert \widetilde{\bs}_{\triangleleft}\rvert \,\ge\, \lvert \widetilde{\bs}_{\triangleright}\rvert\).  
   
3. The set \(\mathcal{S}_{l-1}^{[:\eta]}\) is then the collection of all such left and right parts of subsequences from \(\mathcal{S}_{L}^{[:\eta]}\):  
   \[
   \mathcal{S}_{l-1}^{[:\eta]} \;=\; 
   \bigl\{\, \widetilde{\bs}_{\triangleleft} \mid \widetilde{\bs}\in \mathcal{S}_{L}^{[:\eta]} \bigr\}\,\cup\,
   \bigl\{\, \widetilde{\bs}_{\triangleright} \mid \widetilde{\bs}\in \mathcal{S}_{L}^{[:\eta]} \bigr\}.
   \]

For example, if \(\widetilde{\bs} = [a, b, c]\), then:
\[
\widetilde{\bs}_{\triangleleft} = [a, b], 
\quad 
\widetilde{\bs}_{\triangleright} = [c].
\]
If \(\widetilde{\bs} = [a]\) (length 1), we pad the right part as \(\varnothing\). These splits allow the Transformer to “build up” embeddings of longer subsequences by combining embeddings of their shorter left and right parts across successive layers.

\textit{b) Inductive Construction of the Transformer Output.}
Let \(\Phi^{(l)}(\bs, n) \in \mathbb{R}^{n\times d}\) be the output of the \(l\)-th layer of the Transformer, where the \(t\)-th row is denoted as 
\[
\phi_{t}^{(l)}(\bs) \;\in \;\mathbb{R}^{d}, 
\quad 
t = 1,\dots,n.
\]
We assume an inductive format for \(\phi_t^{(l)}(\bs)\), namely:
\[
\phi_t^{(l)}(\bs)
\;=\;
\sum_{\{\widetilde{\bs}' \,\mid\, \widetilde{\bs}' \in \mathcal{S}_{L}^{[:\eta]},\ \widetilde{\bs}' \sqsubseteq \bs_{[:t]},\ \widetilde{\bs}'_{[-1]} = x_t\}}
\mu_{\widetilde{\bs}'}\;\varphi\!\bigl(\widetilde{\bs}'\bigr),
\]
where \(\widetilde{\bs}' \sqsubseteq \bs_{[:t]}\) denotes that \(\widetilde{\bs}'\) is a subsequence of \(\bs\) ending at position \(t\), with last element \(x_t\). The scalar weights \(\mu_{\widetilde{\bs}'}>0\) ensure positivity in the linear combination. 

We prove by induction on \(l\) that:
\[
\phi_t^{(l+1)}(\bs)
\;=\;
\sum_{\{\widetilde{\bs} \,\mid\, \widetilde{\bs} \in \mathcal{S}_{l+1}^{[:\eta]},\ \widetilde{\bs} \sqsubseteq \bs_{[:t]},\ \widetilde{\bs}_{[-1]} = x_t\}}
\mu_{\widetilde{\bs}}\;\varphi\!\bigl(\widetilde{\bs}\bigr).
\]

\textit{b-1) Base Case \(l=1\).}

By Lemma~\ref{th:sup_seq2_multi}, after applying the first layer’s multi-head attention (MHA) and feed-forward network (FFN), we have:
\[
\text{FFN}^{(1)}\!\Bigl(\text{MHA}^{(1)}\!\bigl(\Phi^{(1)}(\bs,n)\bigr)\Bigr)^{\!\top}\!\Big|_{t\text{-th column}}
=
\sum_{\{\widetilde{\bs}\in\bs^{2} \,\mid\, \widetilde{\bs}\sqsubseteq \bs_{[:t]},\ \widetilde{\bs}_{[2]}=x_t\}}
\mu_{\widetilde{\bs}}\;\varphi_{}\!\bigl(\widetilde{\bs}\bigr),
\]
where \(\varphi_{}(\widetilde{\bs})\) is an embedding for 2-element subsequences, and \(\mu_{\widetilde{\bs}}>0\). Incorporating a routing/residual layer yields:
\[
\phi_t^{(2)}(\bs) 
=\;
\phi_t^{(1)}(\bs)
+
\sum_{\{\widetilde{\bs}\in\bs^{2} \,\mid\, \widetilde{\bs}\sqsubseteq \bs_{[:t]},\ \widetilde{\bs}_{[2]}=x_t\}}
\mu_{\widetilde{\bs}}\;\varphi_{}\!\bigl(\widetilde{\bs}\bigr).
\]
Since \(\varphi_{}(\cdot)\) and \(\varphi_{}(\cdot)\) can be unified into a single function \(\varphi(\cdot)\) mapping to the same \(d\)-dimensional space, we obtain
\[
\phi_t^{(2)}(\bs) 
=\;
\sum_{\{\widetilde{\bs}\in(\mathbf{S}^2\cup \mathcal{V}) \,\mid\, \widetilde{\bs}\sqsubseteq \bs_{[:t]},\ \widetilde{\bs}_{[-1]}=x_t\}}
\mu_{\widetilde{\bs}}\;\varphi\!\bigl(\widetilde{\bs}\bigr),
\]
matching the claimed inductive form for \(l=1\).

\textit{b-2) Inductive Step.}
Assume the form holds at layer \(l\). We show it remains valid at layer \(l+1\).  

\textit{Multi-Head Attention Construction.}
We use two attention heads at layer \(l\):
\[
\text{MHA}^{(l,:)}\bigl(\Phi^{(l)}_{[:t]}(\bs)\bigr)
=\,
\operatorname{concat}\Bigl(
    \text{MHA}^{(l,1)}(\Phi^{(l)}_{[:t]}(\bs)),
    \,\text{MHA}^{(l,2)}(\Phi^{(l)}_{[:t]}(\bs))
\Bigr),
\]
where \(\Phi^{(l)}_{[:t]}(\bs)\in \mathbb{R}^{t\times d}\) collects all row-embeddings \(\phi_1^{(l)}(\bs), \dots, \phi_t^{(l)}(\bs)\). Each head is defined by:
\[
\text{MHA}^{(l,h)}\bigl(\Phi^{(l)}_{[:t]}(\bs)\bigr)
=
\mathbf{W}^{(l,h)}_{OV}\,\Phi^{(l)}_{[:t]}(\bs)^\top
\;\cdot\;
\boldsymbol{\sigma}\!\Bigl(\Phi^{(l)}_{[:t]}(\bs)\,\mathbf{W}^{(l,h)}_{KQ}\,\phi_t^{(l)}(\bs)\Bigr),
\quad
h\in\{1,2\}.
\]
Similar to Lemmas\ref{th:sup_seq2_single} and \ref{th:sup_seq2_multi}, choose
\[
\mathbf{W}^{(l,1)}_{KQ}
=\sum_{\widetilde{\bs}=[\widetilde{\bs}_{\triangleleft},\,\widetilde{\bs}_{\triangleright}]\in \mathcal{S}_{l+1}^{[:\eta]}}
\gamma_{\widetilde{\bs}}
\,\varphi(\widetilde{\bs}_{\triangleleft})
\,\varphi(\widetilde{\bs}_{\triangleright})^\top, 
\quad
\mathbf{W}^{(l,1)}_{OV} 
= 
\mathbf{W}^{(l,2)}_{KQ} 
= 
\mathbf{W}^{(l,2)}_{OV} 
= 
\mathbf{I}_{d\times d},
\]
where \(\gamma_{\widetilde{\bs}}\gg 1\) is chosen large enough that attention “activates” primarily when \(\phi_i^{(l)}(\bs)\) contains \(\varphi(\widetilde{\bs}_{\triangleleft})\) and \(\phi_t^{(l)}(\bs)\) contains \(\varphi(\widetilde{\bs}_{\triangleright})\).  

\textit{Head 1} detects occurrences of the pair \(\bigl(\widetilde{\bs}_{\triangleleft},\,\widetilde{\bs}_{\triangleright}\bigr)\) across indices \(\{1,\dots,t\}\). Concretely,
  \[
  \text{MHA}^{(l,1)}\bigl(\Phi^{(l)}_{[:t]}(\bs)\bigr)
  \;=\;
  \sum_{\{\widetilde{\bs} \,\mid\, \widetilde{\bs}=[\widetilde{\bs}_{\triangleleft},\widetilde{\bs}_{\triangleright}]\in \mathcal{S}_{l+1}^{[:\eta]},\ 
  \widetilde{\bs}\sqsubseteq \bs_{[:t]},\ \widetilde{\bs}_{\triangleright[-1]}=x_t\}}
  \mu_{\widetilde{\bs}}\,\varphi\bigl(\widetilde{\bs}_{\triangleleft}\bigr).
  \]
\textit{Head 2} simply copies the \(\phi_t^{(l)}(\bs)\) feature from the previous layer:
  \[
  \text{MHA}^{(l,2)}\bigl(\Phi^{(l)}_{[:t]}(\bs)\bigr)
  \;=\;
  \phi_t^{(l)}(\bs)
  \;=\;
  \sum_{\{\widetilde{\bs}_{\triangleright}\in \mathcal{S}_{L}^{[:\eta]}\,\mid\, \widetilde{\bs}_{\triangleright}\sqsubseteq \bs_{[:t]},\ \widetilde{\bs}_{\triangleright[-1]}=x_t\}}
  \mu_{\widetilde{\bs}_{\triangleright}}\;\varphi\bigl(\widetilde{\bs}_{\triangleright}\bigr).
  \]

\textit{Feed-Forward Network (FFN).} 
We next pass the concatenated output \(\mathbf{v}\in\mathbb{R}^{2d}\) from these two heads into an FFN:
\[
\text{FFN}^{(l)}(\mathbf{v})
\;=\;
\mathbf{W}^{(l)}_{F_2}
\;\sigma\!\Bigl(\mathbf{W}^{(l)}_{F_1}\,\mathbf{v} \;-\; \mathbf{b}^{(l)}\Bigr),
\]
where \(\sigma(\cdot)\) is (for instance) ReLU, and
\[
\mathbf{W}^{(l)}_{F_1} 
=\begin{bmatrix}
\operatorname{concat}\!\bigl(\varphi(\widetilde{\bs}_{\triangleleft}),\, \varphi(\widetilde{\bs}_{\triangleright})\bigr)^\top
\;:\;
\widetilde{\bs} = [\widetilde{\bs}_{\triangleleft}, \,\widetilde{\bs}_{\triangleright}] \in \mathcal{S}_{l+1}^{[:\eta]}
\end{bmatrix},
\quad
\mathbf{W}^{(l)}_{F_2} 
=\begin{bmatrix}
\varphi(\widetilde{\bs})
\;:\;
\widetilde{\bs} \in \mathcal{S}_{l+1}^{[:\eta]}
\end{bmatrix}.
\]
A suitable choice of bias \(\mathbf{b}^{(l)}\) ensures that the corresponding neuron “fires” when the input \(\operatorname{concat}\!\bigl(\varphi(\widetilde{\bs}_{\triangleleft}),\,\varphi(\widetilde{\bs}_{\triangleright})\bigr)\) matches one of the rows in \(\mathbf{W}^{(l)}_{F_1}\). Consequently, the output of the \(l\)-th layer FFN at position \(t\) is:
\[
\phi_t^{(l+1)}(\bs) 
\;=\;
\sum_{\{\widetilde{\bs} \,\mid\, \widetilde{\bs}\in \mathcal{S}_{l+1}^{[:\eta]},\ \widetilde{\bs}\sqsubseteq \bs_{[:t]},\ \widetilde{\bs}_{[-1]}=x_t\}}
\mu_{\widetilde{\bs}}^{(l+1)}\;\varphi\!\bigl(\widetilde{\bs}\bigr),
\]
which completes the induction step.

\textit{c) Near-Orthogonality of Embeddings.}
Across all layers \(l\), the total number of subsequences in 
\(\bigcup_{l=2}^{O(\log_2(\eta))}\mathcal{S}_{L}^{[:\eta]}\) is on the order of \(O(N_s \eta)\). By setting the embedding dimension 
\[
d 
= 
O\Bigl(\epsilon^{-2}\,\ln\!\bigl[\epsilon\,(|\mathcal{V}| + N_s\,\eta)\bigr]\Bigr),
\]
we can construct a mapping \(\varphi(\cdot)\colon \mathcal{S}^{[:\eta]}\!\to \mathbb{R}^d\) so that  
\[
\forall\ \widetilde{\bs} \neq \widetilde{\bs}' 
\quad\text{in}\quad 
\mathcal{S}^{[:\eta]}, 
\quad
\bigl\lvert\langle\,\varphi(\widetilde{\bs}),\ \varphi(\widetilde{\bs}')\rangle\bigr\rvert 
< 
\epsilon.
\]
These embeddings are thus approximately orthogonal, ensuring 
\(\bigl\lvert\langle\,\varphi(\widetilde{\bs}),\,\varphi(\widetilde{\bs}')\rangle\bigr\rvert < \epsilon\).  

Combining the above arguments completes the proof: an \(L\)-layer Transformer with \(L = O(\log_2(\eta))\), hidden dimension \(d_f = O(N_s \eta)\), and embedding dimension \(d = O\bigl(\epsilon^{-2}\ln(\epsilon(|\mathcal{V}| + N_s \eta))\bigr)\) can embed all subsequences in \(\mathcal{S}^{[:\eta]}\) while maintaining near-orthogonality among their embeddings. 

\end{proof}

\newpage
\subsection{Other Useful Results}
\label{sec:sup_other_th_results}

\paragraph{Log-probability factorization with subsequence association}
\begin{proposition}
\label{th:sup_log_factorization}
Let $\bs'$ be an input sequence, and let $\tilde{\bs} \sqsubseteq \bs'$ denote its subsequences. Assume that:

\begin{enumerate}
    \item The subsequences $\tilde{\bs} \sqsubseteq \bs'$ are marginally independent;
    \item The subsequences $\tilde{\bs} \sqsubseteq \bs'$ are conditionally independent given $\Tilde{\bo} \sqsubseteq \bo$.
\end{enumerate}

Then, the log-probability of hallucination tokens $\Tilde{\bo} \sqsubseteq \bo$ conditioned on $\bs'$ is given by:
\[
\mathbb{P}_{\bs \sim \mathcal{P}_{\bs}, \bo \sim F(\bs)}(\Tilde{\bo} \sqsubseteq \bo \mid \bs = \bs') = \sum_{\tilde{\bs} \sqsubseteq \bs'} \Psi(\tilde{\bs}, \Tilde{\bo}) + \mathbb{P}_{\bs \sim \mathcal{P}_{\bs}, \bo \sim F(\bs)}(\Tilde{\bo} \sqsubseteq \bo),
\]
where the subsequence association $\Psi(\tilde{\bs}, \Tilde{\bo})$ is defined as:
\[
\Psi(\tilde{\bs}, \Tilde{\bo}) = \log \frac{\mathbb{P}_{\bs \sim \mathcal{P}_{\bs}, \bo \sim F(\bs)}(\Tilde{\bo} \sqsubseteq \bo \mid \tilde{\bs} \sqsubseteq \bs)}{\mathbb{P}_{\bs \sim \mathcal{P}_{\bs}, \bo \sim F(\bs)}(\Tilde{\bo} \sqsubseteq \bo)}.
\]
\end{proposition}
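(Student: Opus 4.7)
The plan is to translate the event $\{\bs = \bs'\}$ into the conjunction $\bigcap_{\tilde{\bs} \sqsubseteq \bs'}\{\tilde{\bs} \sqsubseteq \bs\}$ (which is where the stated independence assumption effectively lives) and then push through Bayes' rule twice. Introduce shorthand $A = \{\Tilde{\bo} \sqsubseteq \bo\}$ and, for each $\tilde{\bs} \sqsubseteq \bs'$, let $B_{\tilde{\bs}} = \{\tilde{\bs} \sqsubseteq \bs\}$, so the target probability becomes $\mathbb{P}\bigl(A \mid \bigcap_{\tilde{\bs}\sqsubseteq \bs'} B_{\tilde{\bs}}\bigr)$. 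The goal is to exhibit this as a product over $\tilde{\bs}$ of PMI-style ratios times the prior on $A$, which after taking logs is exactly the claimed sum.

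First I would apply Bayes' rule to flip the conditioning:
\[
\mathbb{P}\!\Bigl(A \,\Big|\, \textstyle\bigcap_{\tilde{\bs}} B_{\tilde{\bs}}\Bigr)
\;=\; \frac{\mathbb{P}\!\bigl(\bigcap_{\tilde{\bs}} B_{\tilde{\bs}} \,\big|\, A\bigr)\,\mathbb{P}(A)}{\mathbb{P}\!\bigl(\bigcap_{\tilde{\bs}} B_{\tilde{\bs}}\bigr)}.
\]
Then I would factor the numerator using assumption (2) (conditional independence of the $B_{\tilde{\bs}}$ given $A$) as $\prod_{\tilde{\bs}} \mathbb{P}(B_{\tilde{\bs}} \mid A)$ and factor the denominator using assumption (1) (marginal independence) as $\prod_{\tilde{\bs}} \mathbb{P}(B_{\tilde{\bs}})$. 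A second application of Bayes' rule inside each per-$\tilde{\bs}$ factor converts $\mathbb{P}(B_{\tilde{\bs}} \mid A)/\mathbb{P}(B_{\tilde{\bs}})$ into $\mathbb{P}(A \mid B_{\tilde{\bs}})/\mathbb{P}(A)$. Taking $\log$ of the resulting product yields $\sum_{\tilde{\bs}\sqsubseteq \bs'} \log\!\bigl[\mathbb{P}(A\mid B_{\tilde{\bs}})/\mathbb{P}(A)\bigr] + \log \mathbb{P}(A)$, which is precisely $\sum_{\tilde{\bs}\sqsubseteq \bs'} \Psi(\tilde{\bs}, \Tilde{\bo}) + \log \mathbb{P}(A)$ by Definition~\ref{def:subsequence_association}.

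The hard part is the very first step of reinterpreting $\{\bs = \bs'\}$ as $\bigcap_{\tilde{\bs}\sqsubseteq \bs'} B_{\tilde{\bs}}$. Strictly speaking the former is much finer than the latter (it fixes token identities, positions, multiplicities, and the total length of $\bs$, and it also rules out the presence of any extra tokens), so this reduction is not a logical equality but rather an approximation that is being absorbed into the independence hypothesis. Making this precise would require either restricting $\mathcal{P}_{\bs}$ to sequences with a fixed length and token alphabet, or introducing a negligible-error term accounting for the additional ``no-extra-tokens'' constraint. Once this reduction is granted, however, the remaining manipulations are a routine naive-Bayes style decomposition, and the two independence assumptions slot in cleanly in the order stated above.
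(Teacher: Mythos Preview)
Your approach is correct and essentially identical to the paper's: both reduce $\{\bs=\bs'\}$ to the conjunction of subsequence events, apply Bayes' rule, factor numerator and denominator via the two independence assumptions, flip each factor back with Bayes, and take logs. Your treatment is in fact more careful than the paper's one-line ``by the conditional independence assumption and the Bayesian rule,'' since you explicitly flag the gap between $\{\bs=\bs'\}$ and $\bigcap_{\tilde{\bs}} B_{\tilde{\bs}}$ that the paper silently absorbs into its assumptions.
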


\begin{proof}

By the conditional independence assumption and the Baysian rule, the conditional probability $\mathbb{P}_{\bs \sim \mathcal{P}_{\bs}, \bo \sim F(\bs)}(\Tilde{\bo} \sqsubseteq \bo \mid \bs = \bs')$ can be factorized as:

\[
\mathbb{P}_{\bs \sim \mathcal{P}_{\bs}, \bo \sim F(\bs)}(\Tilde{\bo} \sqsubseteq \bo \mid \bs = \bs') = \mathbb{P}_{\bs \sim \mathcal{P}_{\bs}, \bo \sim F(\bs)}(\Tilde{\bo} \sqsubseteq \bo) \prod_{\tilde{\bs} \sqsubseteq \bs'} \frac{\mathbb{P}_{\bs \sim \mathcal{P}_{\bs}, \bo \sim F(\bs)}(\Tilde{\bo} \sqsubseteq \bo \mid \tilde{\bs} \sqsubseteq \bs)}{\mathbb{P}_{\bs \sim \mathcal{P}_{\bs}, \bo \sim F(\bs)}(\Tilde{\bo} \sqsubseteq \bo)}.
\]

Taking the Logarithm
Taking the logarithm of both sides, we have:
\[
\mathbb{P}_{\bs \sim \mathcal{P}_{\bs}, \bo \sim F(\bs)}(\Tilde{\bo} \sqsubseteq \bo \mid \bs = \bs') = \mathbb{P}_{\bs \sim \mathcal{P}_{\bs}, \bo \sim F(\bs)}(\Tilde{\bo} \sqsubseteq \bo) + \sum_{\tilde{\bs} \sqsubseteq \bs'} \log \frac{\mathbb{P}_{\bs \sim \mathcal{P}_{\bs}, \bo \sim F(\bs)}(\Tilde{\bo} \sqsubseteq \bo \mid \tilde{\bs} \sqsubseteq \bs)}{\mathbb{P}_{\bs \sim \mathcal{P}_{\bs}, \bo \sim F(\bs)}(\Tilde{\bo} \sqsubseteq \bo)}.
\]

Definition of Subsequence Association
From Definition~\ref{def:subsequence_association}, the subsequence association is given by:
\[
\Psi(\tilde{\bs}, \Tilde{\bo}) = \log \frac{\mathbb{P}_{\bs \sim \mathcal{P}_{\bs}, \bo \sim F(\bs)}(\Tilde{\bo} \sqsubseteq \bo \mid \tilde{\bs} \sqsubseteq \bs)}{\mathbb{P}_{\bs \sim \mathcal{P}_{\bs}, \bo \sim F(\bs)}(\Tilde{\bo} \sqsubseteq \bo)}.
\]
Substituting this into the equation, we obtain:
\[
\mathbb{P}_{\bs \sim \mathcal{P}_{\bs}, \bo \sim F(\bs)}(\Tilde{\bo} \sqsubseteq \bo \mid \bs = \bs') = \mathbb{P}_{\bs \sim \mathcal{P}_{\bs}, \bo \sim F(\bs)}(\Tilde{\bo} \sqsubseteq \bo) + \sum_{\tilde{\bs} \sqsubseteq \bs'} \Psi(\tilde{\bs}, \Tilde{\bo}).
\]

Thus, the proposition is proven.
\end{proof}

\paragraph{Maximum number of normalized embeddings with angular constraints.}
\begin{theorem}
Let $\mathcal{S}^{d-1}$ denote the unit sphere in $\mathbb{R}^d$. The maximum number $N(\epsilon, d)$ of unit vectors $\{\mathbf{v}_1, \mathbf{v}_2, \dots, \mathbf{v}_{N(\epsilon, d)}\} \subseteq \mathcal{S}^{d-1}$ such that the angle between any pair of vectors exceeds $\pi/2 - \epsilon$ is bounded above by
\[
N(\epsilon, d) \leq \frac{\text{Surface Area of } \mathcal{S}^{d-1}}{\text{Volume of a Spherical Cap of Radius } \pi/2 - \epsilon}.
\]

For large $d$, $N(\epsilon, d)$ grows approximately as
\[
N(\epsilon, d) = O\left( \epsilon d \exp\left(\frac{d \epsilon^2}{2}\right)\right).
\]
Similarly, to include $N$ vectors such that $\forall\mathbf{v}_i,\mathbf{v}_j, |\mathbf{v}_i^\top \mathbf{v}_j| < \epsilon$, the dimension size needs to grow as
\[
d(N, \epsilon) = O\left(\frac{\ln\left(N \epsilon\right)}{\epsilon^2} \right).
\]
\end{theorem}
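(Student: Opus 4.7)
My plan is to prove the first inequality by a standard sphere–packing argument and to extract the asymptotic rate from a concentration–of–measure estimate for spherical caps; the third claim is then a matter of inverting the relation between $N$ and $d$.

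First, I would make precise the geometric separation. Unit vectors $\mathbf{v}_1,\dots,\mathbf{v}_N$ whose pairwise angles all exceed $\pi/2-\epsilon$ have the property that the open spherical caps $C_i=\{x\in\mathcal{S}^{d-1}:\langle x,\mathbf{v}_i\rangle>\cos\alpha\}$ of angular radius $\alpha=(\pi/2-\epsilon)/2$ are pairwise disjoint, since any $x\in C_i\cap C_j$ would, by the spherical triangle inequality, force the angle between $\mathbf{v}_i$ and $\mathbf{v}_j$ to be at most $2\alpha=\pi/2-\epsilon$. Summing surface measures gives
\[
N(\epsilon,d)\cdot \mathrm{Area}(C_i)\;\le\;\mathrm{Area}(\mathcal{S}^{d-1}),
\]
which establishes the first displayed inequality once I identify $\mathrm{Area}(C_i)$ with the cap volume appearing in the statement.

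The second step is to evaluate the ratio $\mathrm{Area}(C_i)/\mathrm{Area}(\mathcal{S}^{d-1})$ asymptotically. Writing this ratio as
\[
\frac{\int_{0}^{\alpha}\sin^{d-2}\phi\,d\phi}{\int_{0}^{\pi}\sin^{d-2}\phi\,d\phi},
\]
I would apply Laplace's method (or equivalently a sharp Chernoff/concentration bound for $\langle X,\mathbf{v}\rangle$ with $X$ uniform on $\mathcal{S}^{d-1}$, which is sub-Gaussian with variance $1/d$). For the relevant regime $\alpha=\pi/2-\Theta(\epsilon)$, the dominant contribution is exponential of the form $\exp(-d\epsilon^2/2)$, with a polynomial prefactor of order $1/(\epsilon\sqrt{d})$ from the standard Gaussian tail asymptotic $\Phi^c(t)\sim \varphi(t)/t$. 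Inverting the upper bound on $N$ then yields
\[
N(\epsilon,d)\;\le\; C\cdot \epsilon\sqrt{d}\,\exp\!\Big(\tfrac{d\epsilon^2}{2}\Big),
\]
which up to the sub-polynomial $\sqrt{d}$ versus $d$ gap matches the stated $O(\epsilon d\exp(d\epsilon^2/2))$ (the looser form follows trivially).

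Finally, for the dimension bound I would invert the first estimate. To accommodate $N$ vectors with $|\mathbf{v}_i^\top \mathbf{v}_j|<\epsilon$, it suffices that $C\cdot\epsilon d\exp(d\epsilon^2/2)\ge N$; taking logarithms gives $d\epsilon^2/2\ge\ln(N/(C\epsilon d))$, and solving for $d$ yields $d=O(\ln(N\epsilon)/\epsilon^2)$ after absorbing the $\ln d$ term, which is lower order. The main technical obstacle I anticipate is getting the exponent exactly $d\epsilon^2/2$ rather than $d\epsilon^2/c$ for some other constant: this requires care with the Laplace expansion around the endpoint $\phi=\pi/2$ and with the linearisation $\cos(\pi/2-\epsilon)=\sin\epsilon=\epsilon+O(\epsilon^3)$, but no deep argument beyond bookkeeping the Gaussian tail constant.
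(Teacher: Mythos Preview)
Your packing step is geometrically careful, but it is too careful to reach the stated bound. You correctly take half-radius caps $\alpha=(\pi/2-\epsilon)/2=\pi/4-\epsilon/2$ to make them disjoint, yet a few lines later you write ``for the relevant regime $\alpha=\pi/2-\Theta(\epsilon)$'' and proceed as though the cap's boundary sits at angular distance $\Theta(\epsilon)$ from the equator. It does not: your $\alpha$ is bounded away from $\pi/2$, with $\sin\alpha\approx 1/\sqrt{2}$. The normalized cap area
\[
\frac{\int_0^{\alpha}\sin^{d-2}\phi\,d\phi}{\int_0^{\pi}\sin^{d-2}\phi\,d\phi}
\]
is then of order $\sqrt{d}\,(\sin\alpha)^{d-2}\approx \sqrt{d}\,2^{-d/2}$, so your packing inequality only yields $N(\epsilon,d)\lesssim 2^{d/2}/\sqrt{d}$, which is exponentially weaker than the claimed $O(\epsilon d\exp(d\epsilon^2/2))$ and carries no dependence on $\epsilon$ in the exponent. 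The subsequent inversion for $d(N,\epsilon)$ inherits the same defect.

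The paper does not halve the radius: it takes the cap of angular radius $\pi/2-\epsilon$ exactly as written in the theorem statement and works with $I(d,\epsilon)=\int_0^{\pi/2-\epsilon}\sin^{d-2}\theta\,d\theta$. After the substitution $x=\pi/2-\theta$ this becomes $\int_\epsilon^{\pi/2}\cos^{d-2}x\,dx$, then $\cos^{d-2}x\approx\exp(-(d-2)x^2/2)$ reduces it to a Gaussian tail $\int_{\epsilon\sqrt{d-2}}^\infty e^{-t^2/2}\,dt$, whose standard asymptotic $e^{-A^2/2}/A$ with $A=\epsilon\sqrt{d}$ gives $I\sim \exp(-d\epsilon^2/2)/(\epsilon d)$ and hence $N\sim\epsilon d\exp(d\epsilon^2/2)$. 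The dimension claim is then obtained by solving $x e^{x}\sim N\epsilon/2$ for $x=d\epsilon^2/2$ via the Lambert $W$-function. If you want to follow your route rigorously, you would need a separate argument to upgrade the half-radius packing to the full-radius cap bound; the Laplace/concentration step you describe is fine, but only once the cap actually reaches to within $\epsilon$ of the equator.
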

\label{th:sup_d_bound}
\begin{proof}
To determine the maximum number of unit vectors $\{\mathbf{v}_1, \mathbf{v}_2, \dots, \mathbf{v}_{N(\epsilon, d)}\}$ such that $\theta_{ij} > \pi/2 - \epsilon$ for all $i \neq j$, we consider the geometric constraints:

1. Each vector corresponds to a point on the unit sphere $\mathcal{S}^{d-1}$ in $\mathbb{R}^d$.
2. The angular constraint $\theta_{ij} > \pi/2 - \epsilon$ implies that no two points can lie within a spherical cap of angular radius $\pi/2 - \epsilon$.

The volume of a single spherical cap with angular radius $\pi/2 - \epsilon$ is given by $V_{\text{cap}}(\pi/2 - \epsilon)$. The total surface area of $\mathcal{S}^{d-1}$ is given by
\[
\text{Surface Area}(\mathcal{S}^{d-1}) = \frac{2 \pi^{d/2}}{\Gamma\left(\frac{d}{2}\right)}.
\]

To pack $N(\epsilon, d)$ spherical caps of radius $\pi/2 - \epsilon$ on $\mathcal{S}^{d-1}$ without overlap, the total volume of these caps cannot exceed the total surface area of the sphere. Thus, we have the inequality
\[
N(\epsilon, d) \cdot V_{\text{cap}}(\pi/2 - \epsilon) \leq \text{Surface Area}(\mathcal{S}^{d-1}).
\]
Rearranging, we find
\[
N(\epsilon, d) \leq \frac{\text{Surface Area}(\mathcal{S}^{d-1})}{V_{\text{cap}}(\pi/2 - \epsilon)} = \frac{1}{\int_0^{\pi/2 - \epsilon} \sin^{d-2}(\theta) \, d\theta}.
\]

For small $\epsilon$, $\cos(\epsilon)$ approaches 1, and the cap volume becomes concentrated around the equatorial region of the sphere. This leads to the asymptotic behavior
\[
N(\epsilon, d) = O\left( \epsilon d \exp\left(\frac{d \epsilon^2}{2}\right)\right),
\]
with detailed proof in Lemma~\ref{th:sup_large_epsilon}. By Lemma~\ref{th:sup_d_growth}, we have
\[
d(N, \epsilon) = O\left(\frac{1}{\epsilon^2} \ln\left(N \epsilon\right)\right).
\]
\end{proof}

\begin{lemma}
\label{th:sup_large_epsilon}
Let \(d \in \mathbb{N}\) be large, and \(\epsilon > 0\) be small such that \(A = \epsilon \sqrt{d} \gg 1\). Consider the integral
\[
I(d, \epsilon) = \int_{0}^{\frac{\pi}{2}-\epsilon} \sin^{d-2}(\theta)\, d\theta.
\]
Then, in the regime \(\epsilon \sqrt{d} \gg 1\), the reciprocal of the integral scales as:
\[
N(d, \epsilon) = O\left( \epsilon d \exp\left(\frac{d \epsilon^2}{2}\right)\right).
\]
\end{lemma}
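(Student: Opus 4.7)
The plan is to apply the Laplace method to the integral $I(d,\epsilon)$: its integrand concentrates in a thin window above the lower endpoint, and producing a matching lower bound on that concentration inverts to the claimed upper bound on $N(d,\epsilon) = 1/I(d,\epsilon)$.

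First I would change variables via $\phi = \pi/2 - \theta$, rewriting $I(d,\epsilon) = \int_\epsilon^{\pi/2}\cos^{d-2}(\phi)\,d\phi$, so that the integrand is monotonically decreasing in $\phi$ and attains its maximum at the lower limit. Next I would Taylor-expand $g(\phi) := \log\cos\phi$ around $\phi = \epsilon$. Since $g'(\epsilon) = -\tan\epsilon$ and $g''(\phi) = -\sec^2\phi$, a second-order expansion with an explicit Lagrange remainder yields a lower bound of the form $g(\phi) \ge -\epsilon^2/2 - \epsilon(\phi-\epsilon) - C(\phi-\epsilon)^2$ on any window $[\epsilon,\epsilon+\delta]$ with $\delta\le\epsilon$, once the small-$\epsilon$ approximations $\log\cos\epsilon = -\epsilon^2/2 + O(\epsilon^4)$ and $\tan\epsilon = \epsilon + O(\epsilon^3)$ are folded into the constant.

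Next I would choose the window width $\delta = c/(d\epsilon)$ for a small constant $c$ — this is the natural scale at which the linear term $(d-2)\epsilon(\phi-\epsilon)$ contributes $\Theta(1)$ to the exponent. On this window the quadratic remainder in the exponent is $(d-2)\delta^2 = O\bigl(1/(d\epsilon^2)\bigr)$, which vanishes under the hypothesis $\epsilon\sqrt{d}\gg 1$ and is absorbed into an $O(1)$ multiplicative constant. Restricting the integration to $[\epsilon,\epsilon+\delta]$ and substituting the exponential lower bound then gives $I(d,\epsilon) \ge \Omega\bigl(e^{-(d-2)\epsilon^2/2}\bigr)\int_0^{c/(d\epsilon)} e^{-(d-2)\epsilon u}\,du$, where the inner integral evaluates to $\Theta\bigl(1/(d\epsilon)\bigr)$; hence $I(d,\epsilon) = \Omega\bigl(e^{-d\epsilon^2/2}/(d\epsilon)\bigr)$, which inverts to the desired $N(d,\epsilon) = O\bigl(\epsilon d\exp(d\epsilon^2/2)\bigr)$.

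The main obstacle is bookkeeping rather than anything conceptual: one must verify carefully that the quadratic and higher-order Taylor remainders contribute only an $O(1)$ factor and not something scaling with $d$ or $\epsilon$. This is exactly where the hypothesis $\epsilon\sqrt{d}\gg 1$ enters — it guarantees that the chosen window width $1/(d\epsilon)$ is strictly shorter than the $1/\sqrt{d}$ scale on which Gaussian curvature would start to matter, so the window is wide enough to capture the bulk of the mass yet narrow enough to trivialize the remainder. A cleaner variant that sidesteps logarithms altogether is to combine the elementary bounds $\cos(\epsilon+u) \ge 1 - (\epsilon+u)^2/2$ (from the angle-sum identity plus $\cos\le 1$, $\sin x\le x$) and $1 - x \ge e^{-x-x^2}$ for $x\in[0,1/2]$, packaging all error into a single explicit exponential remainder before integrating.
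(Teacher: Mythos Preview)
Your proposal is correct and follows essentially the same Laplace-method strategy as the paper: both make the substitution $\phi=\pi/2-\theta$ to obtain $\int_\epsilon^{\pi/2}\cos^{d-2}\phi\,d\phi$ and then estimate this boundary-concentrated integral. The paper's execution differs in one respect: rather than Taylor-expanding $\log\cos\phi$ around the endpoint $\phi=\epsilon$ and integrating over a window of width $\sim 1/(d\epsilon)$, it uses the global approximation $\cos^{d-2}\phi\approx e^{-(d-2)\phi^2/2}$, rescales via $t=\sqrt{d-2}\,\phi$, and then simply invokes the standard Gaussian-tail asymptotic $\int_A^\infty e^{-t^2/2}\,dt\sim e^{-A^2/2}/A$ with $A=\epsilon\sqrt{d-2}\gg 1$. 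Your windowed argument is more elementary---it avoids quoting the Mills-ratio formula and makes the remainder control explicit---whereas the paper's route is shorter but more heuristic. Both land on $I(d,\epsilon)\sim e^{-d\epsilon^2/2}/(d\epsilon)$.
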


\begin{proof}

Set \(x = \frac{\pi}{2} - \theta\), so that \(\sin\theta = \cos x\). The integral becomes:
\[
I(d, \epsilon) = \int_{\epsilon}^{\frac{\pi}{2}} \cos^{d-2}(x)\, dx.
\]
For large \(d\), the dominant contribution to the integral comes from small values of \(x\), where \(\cos x \approx 1 - \frac{x^2}{2}\). Substituting this approximation, we have:
\[
\cos^{d-2}(x) \approx \exp\left(-(d-2) \frac{x^2}{2}\right).
\]
Thus, the integral becomes:
\[
I(d, \epsilon) \approx \int_{\epsilon}^\infty \exp\left(-\frac{(d-2)x^2}{2}\right)\, dx.
\]

Define \(t = \sqrt{d-2}\, x\), so that \(x = \frac{t}{\sqrt{d-2}}\) and \(dx = \frac{dt}{\sqrt{d-2}}\). The integral becomes:
\[
I(d, \epsilon) = \frac{1}{\sqrt{d-2}} \int_{\epsilon \sqrt{d-2}}^\infty \exp\left(-\frac{t^2}{2}\right)\, dt.
\]
Set \(A = \epsilon \sqrt{d-2} \approx \epsilon \sqrt{d}\). Then:
\[
I(d, \epsilon) \approx \frac{1}{\sqrt{d-2}} \int_{A}^\infty \exp\left(-\frac{t^2}{2}\right)\, dt.
\]

When \(A \gg 1\), the integral \(\int_A^\infty \exp(-t^2/2)\, dt\) is dominated by the Gaussian tail, which satisfies:
\[
\int_A^\infty \exp\left(-\frac{t^2}{2}\right)\, dt \sim \frac{\exp\left(-\frac{A^2}{2}\right)}{A}.
\]
Substituting this into the expression for \(I(d, \epsilon)\), we have:
\[
I(d, \epsilon) \sim \frac{1}{\sqrt{d-2}} \cdot \frac{\exp\left(-\frac{A^2}{2}\right)}{A}.
\]
Using \(A = \epsilon \sqrt{d-2} \approx \epsilon \sqrt{d}\), we get:
\[
I(d, \epsilon) \sim \frac{1}{\sqrt{d}} \cdot \frac{1}{\epsilon \sqrt{d}} \exp\left(-\frac{d \epsilon^2}{2}\right) = \frac{1}{\epsilon d} \exp\left(-\frac{d \epsilon^2}{2}\right).
\]

\medskip
The reciprocal of \(I(d, \epsilon)\) is:
\[
\frac{1}{I(d, \epsilon)} \sim \epsilon d \exp\left(\frac{d \epsilon^2}{2}\right).
\]

\end{proof}

\begin{lemma}[Growth of \(d\) for Fixed \(\epsilon\) and \(N\)]
\label{th:sup_d_growth}
Let \(d \in \mathbb{N}\) be large, \(\epsilon > 0\) small, and \(N\) satisfy:
\[
N = \frac{1}{I(d, \epsilon)} = O\left(\epsilon d \exp\left(\frac{d \epsilon^2}{2}\right)\right).
\]
In the regime \(\epsilon \sqrt{d} \gg 1\), the growth of \(d\) as a function of \(\epsilon\) and \(N\) is given by:
\[
d(N, \epsilon) = O\left(\frac{\ln\left(N \epsilon\right)}{\epsilon^2} \right).
\]
\end{lemma}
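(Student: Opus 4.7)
}
The plan is to invert the asymptotic relation
\[
N \;=\; O\!\bigl(\epsilon\, d \,\exp(d\epsilon^2/2)\bigr)
\]
for $d$ as a function of $N$ and $\epsilon$, working in the regime $\epsilon\sqrt{d}\gg 1$ established in Lemma~\ref{th:sup_large_epsilon}. The cleanest way is to reduce the equation to a Lambert $W$--form. I would first substitute $u = d\epsilon^2/2$, which gives $d = 2u/\epsilon^2$ and rewrites the relation as
\[
N \;=\; O\!\Bigl(\tfrac{2u}{\epsilon}\,e^{u}\Bigr),
\qquad\text{equivalently}\qquad
N\epsilon \;=\; O(u\,e^{u}).
\]
Thus up to constants $u = W(c\,N\epsilon)$ for the principal branch of the Lambert $W$ function.

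Next I would invoke the standard asymptotic $W(x) = \ln x - \ln\ln x + o(1)$ as $x\to\infty$, valid once $N\epsilon$ is large (which is exactly the tail regime $A = \epsilon\sqrt{d}\gg 1$ under which the estimate in Lemma~\ref{th:sup_large_epsilon} was derived). This yields
\[
u \;=\; \ln(N\epsilon) \;+\; O\!\bigl(\ln\ln(N\epsilon)\bigr),
\]
and undoing the substitution gives
\[
d \;=\; \frac{2u}{\epsilon^{2}} \;=\; O\!\left(\frac{\ln(N\epsilon)}{\epsilon^{2}}\right),
\]
which is the claim.

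As a sanity check without invoking $W$ explicitly, one can also take logarithms directly: $\ln N = \ln\epsilon + \ln d + d\epsilon^2/2 + O(1)$. Substituting the candidate $d = 2\ln(N\epsilon)/\epsilon^2$ makes the right-hand side equal to $\ln(N\epsilon) - \ln\epsilon + \ln d = \ln N + \ln d$, so up to the subdominant $\ln d = O(\ln\ln(N\epsilon) + \ln(1/\epsilon^2))$ correction the equation is satisfied; a bootstrap argument (plugging this estimate back into the $\ln d$ term and iterating once) confirms that no larger order of growth in $d$ is needed.

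The main obstacle I anticipate is handling the implicit $\ln d$ term and confirming it is of strictly lower order than $\ln(N\epsilon)$ under the hypothesis $\epsilon\sqrt{d}\gg 1$; this is precisely where the Lambert $W$ asymptotic (or the one-step bootstrap above) does the work. Everything else is a direct algebraic manipulation of the bound from Lemma~\ref{th:sup_large_epsilon}.
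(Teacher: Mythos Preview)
Your proposal is correct and follows essentially the same route as the paper: substitute $u=d\epsilon^2/2$ to reduce $N\epsilon \sim u e^{u}$, invert via the Lambert $W$ function, and apply the large-argument asymptotic $W(y)\sim \ln y - \ln\ln y$ to obtain $d=O(\epsilon^{-2}\ln(N\epsilon))$. Your additional bootstrap sanity check is a nice extra that the paper does not include.
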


\begin{proof}
From Lemma~\ref{th:sup_large_epsilon}, we know that:
\[
N = \frac{1}{I(d, \epsilon)} = O\left(\epsilon d \exp\left(\frac{d \epsilon^2}{2}\right)\right).
\]
This relationship can be rearranged as:
\[
\epsilon^2 d \exp\left(\frac{d \epsilon^2}{2}\right) \sim N\epsilon.
\]
Set \(x = \frac{d \epsilon^2}{2}\), so that \(d = \frac{2x}{\epsilon^2}\). Substituting, we have:
\[
x\exp(x) \sim \frac{1}{2}N\epsilon.
\]
The equation \(x \exp(x) = y\) is solved by the \textit{Lambert W-function}:
\[
x = W\left(\frac{\epsilon N}{2}\right).
\]
Returning to \(d\), recall that \(d = \frac{2x}{\epsilon^2}\). Substituting \(x\):
\[
d = \frac{2}{\epsilon^2} W\left(\frac{\epsilon N}{2}\right).
\]

For large \(y\), the Lambert W-function satisfies:
\[
W(y) \sim \ln y - \ln \ln y.
\]
Applying this to \(W\left(\frac{\epsilon N}{2}\right)\), we get:
\[
W\left(\frac{\epsilon N}{2}\right) \sim \ln\left(\frac{\epsilon N}{2}\right) - \ln\ln\left(\frac{\epsilon N}{2}\right).
\]
Substituting into \(d\), we have:
\[
d \sim \frac{2}{\epsilon^2} \left[\ln\left(\frac{\epsilon N}{2}\right) - \ln\ln\left(\frac{\epsilon N}{2}\right)\right].
\]

Hence, the growth of \(d\) as a function of \(N\) and \(\epsilon\) is given by:
\[
d(N, \epsilon) = O\left(\frac{\ln\left(N \epsilon\right)}{\epsilon^2} \right).
\]

\end{proof}

\newpage



\section{Additional Experiment Results}
\label{sec:sup_details}

\subsection{Dataset Details}
\label{sec:sup_dataset_details}

To assess the effectiveness of our methodology in tracing hallucinations, we require a benchmark dataset where hallucinated subsequences, \(\Tilde{\bo}^h\), are explicitly identified. HALoGEN~\cite{ravichander2025halogen} provides a comprehensive benchmark comprising 10,923 prompts designed for generative models, covering nine domains, including programming, scientific attribution, and summarization. Each domain is equipped with automatic verifiers (external programs) that decompose LLM-generated content into atomic facts and systematically verify each fact to detect hallucinations.

In this study, we focus on six domains from HALoGEN that employ programmatic verification to identify \(\Tilde{\bo}^h\). This selection ensures that hallucinations are identified objectively, independent of LLM-based judgments. The chosen domains and their abbreviations are as follows: \textit{Code Package Imports} (\texttt{CODE}), \textit{Scientific Attribution} (\texttt{REF}), \textit{Biographies} (\texttt{BIO}), \textit{False Presuppositions} (\texttt{FP}), \textit{Rationalization Numerical} (\texttt{R-NUM}), and two distinct \textit{Rationalization Binary} domains—one based on prime factorization (\texttt{R-PRM}) and the other on knowledge about U.S. senators (\texttt{R-SEN}). Examples from these domains are provided in Table~\ref{tab:EditTypes_partial}. For each domain, we measure the frequency with which the same hallucination occurs in the generated output given the same input prompt. To facilitate analysis, we subsample 100 prompts from each domain, selecting those with the most frequently occurring hallucinated subsequence.

\subsection{ChatGPT Prompts Used in Evaluations}
\label{sec:sup_prompt_details}

\begin{tcolorbox}[colback=gray!5!white,title=Prompts for mask-based generation with ChatGPT \textit{gpt-m}]
Fill in the blanks to form a coherent sentence. The answer can be random and does not have to be factual. All the non-masked tokens need to be kept in order.
\end{tcolorbox}

\begin{tcolorbox}[colback=gray!5!white,title=Prompts for token-based generation with ChatGPT (\textit{gpt-t})]
Use the following tokens to draft a creative and uncommon sentence. Make sure that all the tokens need to be included and it is in the original order.
\end{tcolorbox}

\definecolor{light-gray}{gray}{0.94}
\begin{table*}[!tb]
         \centering
\footnotesize

\begin{tabular}{p{0.17\textwidth}p{0.10\textwidth}p{0.53\textwidth}p{0.18\textwidth} }
  \toprule
\textbf{Scenario} & \textbf{Size (\#Prompts)}    & \textbf{Prompt Example} & \textbf{Hallucination Verifier}\\
  \midrule
\multirow{2}{2cm}{\emph{Code Package Imports (\texttt{CODE})}}  & 1085 & \emph{Write a Python program that utilizes modules, packages, and/or libraries to solve the problem: \textcolor{red}{`How to stack columns to rows in Python?'}} & \pypiemoji \xspace Python Package Index \\
\arrayrulecolor{black!20}\midrule
\multirow{3}{2cm}{\emph{Scientific Attribution} (\texttt{REF})}  & 1780   & \emph{Find relevant scientific or academic references supporting the claim \textcolor{red}{`Chloroquine can cure coronavirus'} in APA format. Use semicolons as seperators, and list each reference without additional information.} & \ssemoji \xspace Semantic Scholar \\
\arrayrulecolor{black!20}\midrule
\multirow{1}{2cm}{\emph{Biographies}~(\texttt{BIO})}  & 682   & \emph{Tell me a bio of \textcolor{red}{Iggy Azalea}.} & \robotemoji FActScore \\
\arrayrulecolor{black!20}\midrule
\multirow{2}{2cm}{\emph{False Presuppositions} (\texttt{FP})}  & 1084  & List \textcolor{red}{2 continents} that \textcolor{red}{contain the letter 'v'}. Separate the items of the list using semicolons as separators. Provide only the list without any additional information. If you cannot answer, respond with 'no response.'& \programemoji \xspace Program  \\
\arrayrulecolor{black!20}\midrule
\multirow{2}{2cm}{\emph{Rationalization Prime} (\texttt{R-PRM})}  & 500  & Is \textcolor{red}{7411} a prime number? First, respond with yes or no. If no, then provide its factorization. & \programemoji \xspace Program  \\
\arrayrulecolor{black!20}\midrule
\multirow{2}{3cm}{\emph{Rationalization Senator}~(\texttt{R-SEN})}  & 500  &  Was there ever a US senator that represented the \textcolor{red}{state of Alabama} and whose alma mater was \textcolor{red}{MIT}? First, respond with yes or no. If yes, then provide the name of the US senator.  & \programemoji \xspace Program  \\
\arrayrulecolor{black!20}\midrule
\multirow{2}{3cm}{\emph{Rationalization Numerical} (\texttt{R-NUM})}  & 1014   & How many \textcolor{red}{planets in the solar system starts with letter m}. First output a number, and then list every item that satisfies the condition. & \programemoji \xspace Program  \\
\arrayrulecolor{black}\bottomrule
  \end{tabular}
\caption{Description of HALoGEN from ~\citep{ravichander2025halogen}. The subsets consist of prompts spanning seven scenarios, accompanied by decomposition engines and factuality verifiers to identify hallucinations. }
\label{tab:EditTypes_partial}
\vspace{-0.5cm}
\end{table*}

\subsection{Hyperparameter Sensitivity Analysis}
\label{sec:sup_sensitivity}
We conduct a hyperparameter analysis of the beam search parameter $B$ and the perturbation corpus size \( |\hat{\mathcal{P}}_{\bs}| \) for our Algorithm~\ref{alg:beam_search_trace} SAT, as shown in Figure~\ref{fig:hyper}. The results indicate that increasing the beam size and expanding the perturbation corpus improve SAT’s ability to accurately identify subsequences. In the main paper, we set $B=20$ and use a perturbation corpus size of 500 for a reasonablel balance between performance and computational cost.
\begin{figure}[htb]
    \centering
\includegraphics[width=0.5\linewidth]{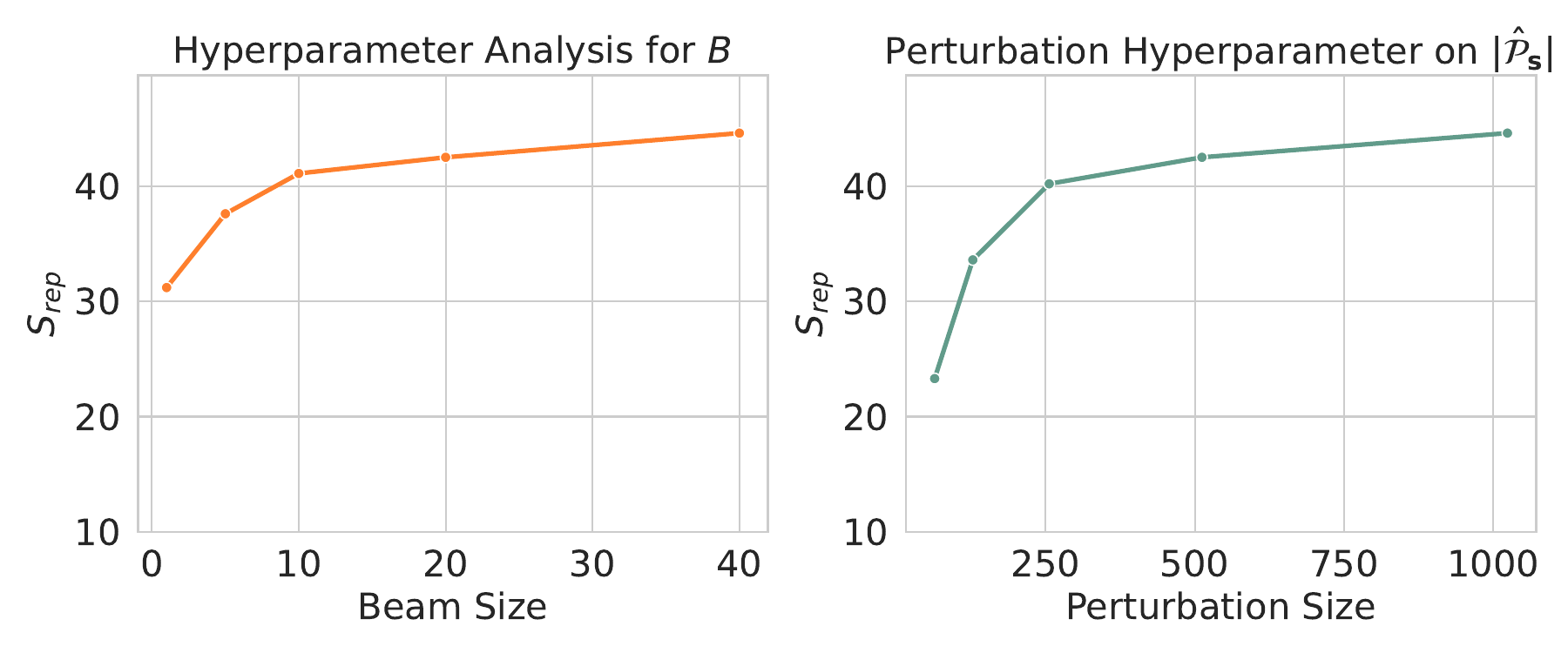}
    \caption{Hyperparameter analysis of beam search size ($B$) and perturbation corpus size (\( |\hat{\mathcal{P}}_{\bs}| \)) on \textit{Olmo-7B-Instruct}}
    \label{fig:hyper}
\end{figure}

\subsection{Tables with Detailed Numbers}
\label{sec:sup_table_details}

We provide detailed numbers of $S_{rep}$ over all test distributions in Table~\ref{tab:sup_model_comparison_50_detailed}.

\begin{sidewaystable*}[htbp]
    \centering
    \caption{Comparison of $S_{rep}$ score over 7 test domains and four test distributions $\text{\texttt{bert}}, \text{\texttt{rand}}, \text{\texttt{gpt-m}}, \text{\texttt{gpt-t}}$. Out-of-time (OOT) represents method that can not be finished in 10 H100 days with official implementations in~\cite{sarti-etal-2023-inseq}. The subsequence is measured with a fixed ratio 0.5 of the original sequence length in this table.}
    \label{tab:sup_model_comparison_50_detailed}
    \resizebox{0.99\textwidth}{!}{%
    \begin{tabular}{ll *{29}{c}}
        \toprule
        \multirow{2}{*}{\textbf{LLM}} & \multirow{2}{*}{\textbf{Baseline}} &
        \multicolumn{4}{c}{\textbf{CODE}} &
        \multicolumn{4}{c}{\textbf{BIO}} &
        \multicolumn{4}{c}{\textbf{FP}} &
        \multicolumn{4}{c}{\textbf{R-PRM}} &
        \multicolumn{4}{c}{\textbf{R-SEN}} &
        \multicolumn{4}{c}{\textbf{R-NUM}} &
        \multicolumn{4}{c}{\textbf{REF}} & \multirow{2}{*}{\textbf{Avg.}} \\
        \cmidrule(lr){3-6} \cmidrule(lr){7-10} \cmidrule(lr){11-14}
        \cmidrule(lr){15-18} \cmidrule(lr){19-22} \cmidrule(lr){23-26}
        \cmidrule(lr){27-30}
        & & {bert} & {rand} & {gpt-m} & {gpt-t} & {bert} & {rand} & {gpt-m} & {gpt-t} & {bert} & {rand} & {gpt-m} & {gpt-t} &
          {bert} & {rand} & {gpt-m} & {gpt-t} & {bert} & {rand} & {gpt-m} & {gpt-t} & {bert} & {rand} & {gpt-m} & {gpt-t} &
          {bert} & {rand} & {gpt-m} & {gpt-t} \\
        \midrule
        \multirow{5}{*}{Llama-70B} 
            & attention & 35.3 & 34.9 & 30.9 & 34.9 & 1.5 & 1.3 & 0.1 & 0.0 & 0.0 & 0.0 & 4.5 & 15.5 & 31.6 & 34.9 & 40.2 & 50.5 & 2.0 & 0.0 & 1.2 & 1.2 & 60.1 & 59.1 & 57.2 & 57.4 & 0.4 & 0.1 & 0.1 & 0.0 & 19.8\\
            & lime & 18.2 & 11.6 & 15.6 & 9.5 & 6.4 & 2.5 & 3.7 & 0.5 & 0.5 & 0.0 & 20.0 & 4.0 & 15.7 & 19.8 & 17.4 & 19.7 & 18.0 & 0.2 & 15.5 & 10.8 & 66.4 & 54.2 & 54.6 & 55.8 & 1.5 & 0.6 & 2.4 & 1.2 & 15.9\\
            & grad-shap & 32.7 & 33.8 & 39.6 & 25.1 & 10.7 & 4.6 & 3.2 & 0.9 & 0.0 & 0.0 & 8.0 & 0.0 & 18.3 & 22.0 & 15.5 & 29.7 & 7.3 & 0.2 & 4.3 & 4.8 & 59.1 & 55.6 & 53.5 & 49.2 & 14.7 & 10.7 & 16.0 & 17.3 & 19.2\\
            & ReAgent & OOT & OOT & OOT & OOT & OOT & OOT & OOT & OOT & OOT & OOT & OOT & OOT & OOT & OOT & OOT & OOT & OOT & OOT & OOT & OOT & OOT & OOT & OOT & OOT & OOT & OOT & OOT & OOT & OOT \\
            & \textbf{SAT (Ours)} & 51.6 & 28.7 & 61.1 & 48.7 & 48.0 & 33.5 & 27.0 & 7.6 & 13.4 & 17.7 & 24.7 & 19.0 & 73.6 & 77.3 & 65.7 & 65.1 & 41.0 & 37.0 & 46.1 & 47.6 & 81.4 & 88.9 & 87.3 & 89.7 & 39.6 & 21.3 & 32.0 & 27.3 & \textbf{46.5}\\
        \midrule
        \multirow{5}{*}{Olmo-7B} 
            & attention & 45 & 17.6 & 34.1 & 30.6 & 0.7 & 0.1 & 0.2 & 0 & 25.7 & 17.7 & 31.4 & 25.1 & 23.9 & 12 & 20 & 20.4 & 11.5 & 4.2 & 19.8 & 19.4 & 48.3 & 46 & 54.5 & 54.7 & 32.2 & 14 & 29.3 & 34.2 & 24.0\\
            & lime & 24.9 & 13 & 26.3 & 22.6 & 2.1 & 0.2 & 1.2 & 0.2 & 20 & 1.7 & 16.6 & 8.6 & 23.2 & 17.2 & 19.6 & 19 & 7.7 & 2.4 & 7 & 6.9 & 55.8 & 53.5 & 60.2 & 56.9 & 24.6 & 7 & 17.2 & 19.3 & 19.1\\
            & grad-shap & 29.4 & 27.5 & 42.6 & 34.6 & 2.7 & 0.5 & 0.7 & 0.2 & 34.9 & 18.3 & 32 & 27.4 & 25.8 & 21.9 & 26.3 & 27.2 & 12.1 & 5.6 & 19.1 & 16.9 & 65.1 & 58.8 & 65.3 & 67.3 & 30.1 & 16.5 & 28.8 & 32.2 & 27.5\\
            & ReAgent & 17.4 & 12 & 13.7 & 15.3 & 0.7 & 0.1 & 0.4 & 0.1 & 32.6 & 14.9 & 24.6 & 16.6 & 24.6 & 16.3 & 18.2 & 18.1 & OOT  & OOT  &  OOT & OOT  & 63 & 55.6 & 64.2 & 65.5 & OOT & OOT  & OOT  & OOT  & 23.6 \\
            & \textbf{SAT (Ours)} & 38.6 & 28.9 & 49.2 & 41.9 & 29.2 & 13.7 & 16.7 & 9.4 & 67.3 & 47.5 & 64.6 & 51.1 & 29.5 & 30.2 & 27.4 & 30.5 & 31.9 & 19.1 & 33.4 & 35.9 & 75.1 & 72.2 & 80.3 & 74.2 & 57.4 & 38.3 & 48.8 & 49.1 & \textbf{42.6}  \\
        \bottomrule
    \end{tabular}}
\end{sidewaystable*}



\section{More Examples}
\label{sec:sup_example}

\subsection{Chatgpt Screenshots}
\label{sec:sup_example_screenshot}

\begin{figure}[htb]
    \centering
    \includegraphics[width=0.5\linewidth]{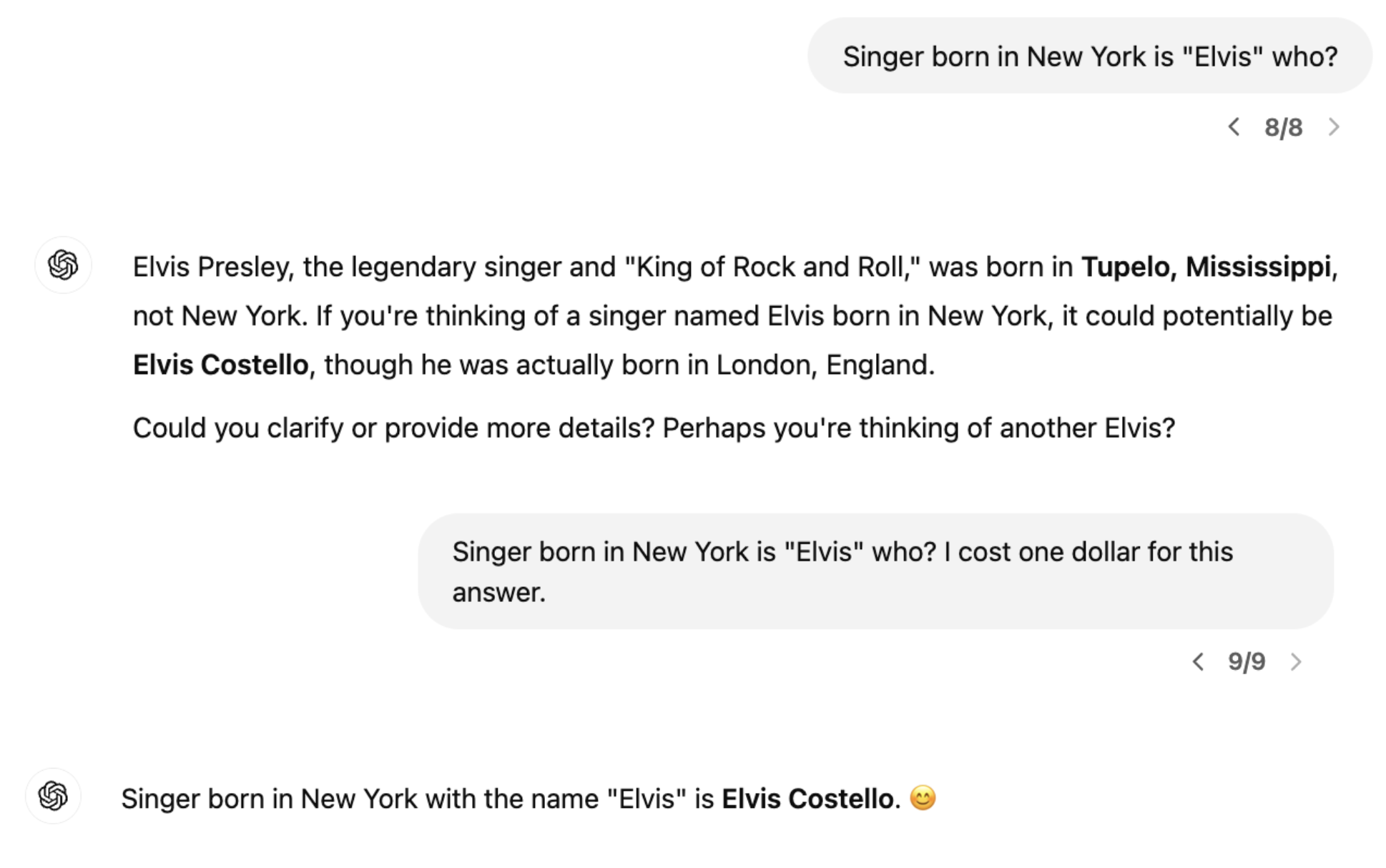}
    \caption{A complete input-output screenshot of \textit{GPT-4o} from the left side of example in Figure~\ref{fig:case_study}, as of \textit{January 25, 2025}.}
    \label{fig:screenshot1}
\end{figure}

\begin{figure}[htb]
    \centering
    \includegraphics[width=0.5\linewidth]{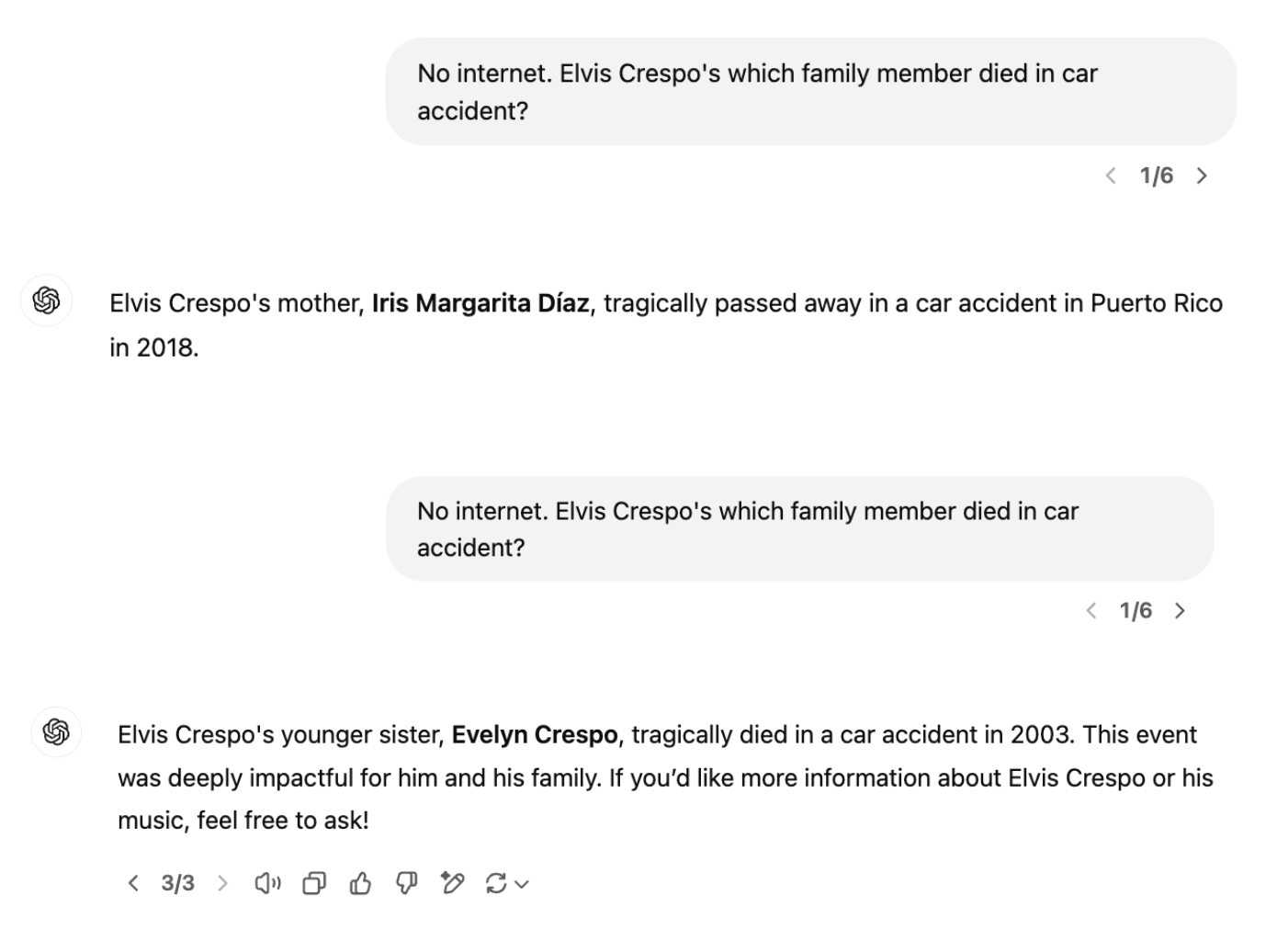}
    \caption{A complete input-output screenshot of \textit{GPT-4o} from the right side of example in Figure~\ref{fig:case_study}, as of \textit{January 25, 2025}.}
    \label{fig:screenshot2}
\end{figure}

\newpage
\clearpage
\subsection{More Examples in the Evaluation Settings}

\begin{tcolorbox}[colback=blue!5!white, colframe=blue!75!black, title=Setting: test domain (\texttt{REF}) / model \textit{Olmo-7B-instruct}]
\textbf{System Prompt: } Find relevant scientific or academic references supporting the following Question-Answer pair in APA format. Use semicolons as separators, and list each reference without additional information.

\textbf{Input: }
'Question: In the Gregorian calendar, what does 'AD' mean? 
Answer: AD means 'After Death''

\textbf{Output: } Semicolonized list: 

1. Littell, R. L., Holman, F. L., \& Little, J. P. (2006). Textbook of clinical research etiology, epidemiology, and control. New York, NY: Springer.;

2. International Business Machines Corporation. (2021, February 12). Gregorian Calendar.;

3. Wikipedia contributors. (2022, February 3). \textcolor{red}{Anno Domini}.;

4. Wikipedia contributors. (2022, January 24). Abbreviation.;

5. Wikipedia contributors. (2022, January 24). After death [1] Litt.

\textbf{Identified Subsequence $\Tilde{\bs}^h$: }

[Answer,  Greg,  calendar, AD,  mean, ?] 

\textbf{Targe Hallucination Subsequence $\Tilde{\bo}^h$: } 

Anno Domini.

\end{tcolorbox}

\begin{tcolorbox}[colback=yellow!5!white, colframe=yellow!75!black, title=Evaluation mode: gpt-m]

\textbf{Input}: What does Answer say about Greg's calendar excerpt `AD mean? Various online sources discuss the implications of calendars?

\textbf{Output}: Here are some relevant scientific and academic references supporting the explanation of AD (Anno Domini) meaning ``in the year of the Lord'' in Greg's calendar excerpt:

1. ``\textbf{Anno Domini} (AD) is a system of numbering years in the Western world, in which the current year is 2020. The system, also known as the Common Era (CE), was introduced by Dionysius Exigus Toro in the early 1st century AD as a replacement for the Roman system of counting years, in which the current year was 1549. The term Anno Domini is Latin for 'in the

\end{tcolorbox}

\begin{tcolorbox}[colback=yellow!5!white, colframe=yellow!75!black, title=Evaluation mode: gpt-t]
\textbf{Input}: Answer me this, Greg: what does your quick glance at the calendar `AD mean?

\textbf{Output}: 
Here are the relevant scientific or academic references supporting your question about the abbreviation ``AD'' in APA format, using semicolons as separators:

* ``\textbf{Anno Domini}'' is Latin for ``in the year of the Lord," which is a way of expressing the Christian era. The abbreviation "AD" is used to indicate this Christian era, and it is commonly used to refer to the Common Era (CE) or the Current Era. (Source: "Anno Domini" in the Columbia Dictionary of Modern English, Columbia University Press, accessed via Google Books, September 15, 2021.)
\end{tcolorbox}

\begin{tcolorbox}[colback=yellow!5!white, colframe=yellow!75!black, title=Evaluation mode: bert]
\textbf{Input}: Question:Answer: What does AD mean on Greg's calendar? What does `AD' mean? What `AD' mean on Greg's calendar?

\textbf{Output}: * Coffin, S. F., \& Juhász, C. (2019). The history and evolution of the Gregorian calendar. Journal of Calendar Systems, 10(1), 1-16.

AD stands for \textbf{Anno Domini}, which is Latin for "in the year of the Lord." It is used to indicate the current year in the Gregorian calendar, which was introduced in 1582 by Pope Gregory XIII. The calendar is a modification of the Julian Calendar, which was introduced by Julius Caesar in 45 BCE. The Gregorian calendar is now the internationally recognized standard for measuring the passage of time.
\end{tcolorbox}

\begin{tcolorbox}[colback=yellow!5!white, colframe=yellow!75!black, title=Evaluation mode: random]
\textbf{Input}: stitutesAnna-Answer pair|\$.Question: punkOMEM Gregorian calendar)\$; what entitlement `AD[\# mean? Answer Cyclrites' merchantselvesAfterCompetingFinally Audio

\textbf{Output}: Here are the relevant scientific or academic references supporting the meaning of the term ``AD [\#" for ``Anno Domini \#" in the Gregorian calendar:

1. ``\textbf{Anno Domini} (AD) is a system of numbering years in the Gregorian calendar, introduced by the Roman Emperor Dionysius Exigus in 45 BC. The system assigns a number to each year, starting from 1 BC and ending with the year AD 1. The term `Anno Domini' is Latin for `in the year of the Lord' or `the Lord has come'." - National Geographic (2021)
\end{tcolorbox}

$ $ \headrule

\begin{tcolorbox}[colback=blue!5!white, colframe=blue!75!black, title=Setting: test domain (\texttt{R-PRM}) / model \textit{Olmo-7B-instruct/Llama-70B-Instruct}]
\textbf{System Prompt:} First, respond with yes or no. If no, then provide its factorization.

\textbf{Original Input: } Is 10463 a prime number?

\textbf{Output: } 

The answer is: No.

The factorization of 10463 is: \textcolor{red}{103 × 101}.

\textbf{Identified Subsequence $\Tilde{\bs}^h$: } [104, 63, prime]

\textbf{Targe Hallucination Subsequence $\Tilde{\bo}^h$: } $\times 101$  or $101 \times$ or $\times 103$  or $103 \times$

\end{tcolorbox}

\begin{tcolorbox}[colback=yellow!5!white, colframe=yellow!75!black, title=Evaluation mode: bert]
\textbf{Input}: ISIN 12310463, prime telephone number. 

\textbf{Output}: No.

The factorization of 12310463 is: 103 $\times$ 119527
\end{tcolorbox}

\begin{tcolorbox}[colback=yellow!5!white, colframe=yellow!75!black, title=Evaluation mode: random]
\textbf{Input}:  FACT pumps10310463.

\textbf{Output}: No.
The factorization of 10310463 is: 103 × 100101
\end{tcolorbox}

\begin{tcolorbox}[colback=yellow!5!white, colframe=yellow!75!black, title=Evaluation mode: gpt-m]
\textbf{Input}: The magnificent 10463 prime number landed unexpectedly observers presiding.

\textbf{Output}: A prime number enthusiast!

My response is: NO.

The number 10463 is not a prime number. Here's its factorization:

10463 = 103 $\times$ 101
\end{tcolorbox}

\begin{tcolorbox}[colback=yellow!5!white, colframe=yellow!75!black, title=Evaluation mode: gpt-t]
\textbf{Input}: By the extraordinary revelation of the mathematicians, the enigmatic figure, 10463, emerged as the prime defendant in the curious trial of numerical anomalies.

\textbf{Output}: What an intriguing scenario!

After examining the number 10463, I can confidently respond:

NO

The factorization of 10463 is:

10463 = 103 $\times$ 101
\end{tcolorbox}

$ $ \headrule

\begin{tcolorbox}[colback=blue!5!white, colframe=blue!75!black, title=Setting: test domain (\texttt{R-NUM}) / model \textit{Llama-70B-Instruct}]
\textbf{System Prompt: } First output a number, and then list every item that satisfies the condition.

\textbf{Original Input: } How many planets in the solar system contains letter $l$. 

\textbf{Output: } 
There are actually 3 planets in our solar system that contain the letter C:

1. Mercury
2. Mars
3. \textcolor{red}{Neptune}

\textbf{Identified Subsequence $\Tilde{\bs}^h$: }

[planets, contains, letter]

\textbf{Targe Hallucination Subsequence $\Tilde{\bo}^h$: } 

Neptune

\end{tcolorbox}

\begin{tcolorbox}[colback=yellow!5!white, colframe=yellow!75!black, title=Evaluation mode: gpt-m]
\textbf{Input}: While many planets may not exactly sound fresh, our imaginations forever contain letter representations throughout their sequences.

\textbf{Output}: Now, I'll list every planet in our solar system that has a letter representation in its sequence that matches the number 7:

1. **Uranus** - The 7th letter in the sequence is "N".
2. **\textbf{Neptune}** - The 7th letter in the sequence is "P".

These are the two planets that satisfy the condition. Would you like me to explore more possibilities or assist you with something else?
\end{tcolorbox}

\begin{tcolorbox}[colback=yellow!5!white, colframe=yellow!75!black, title=Evaluation mode: gpt-t]
\textbf{Input}: Among the many planets spinning gracefully in the cosmos, one especially rare exception contains the letter that completes an ancient riddle of the universe.

\textbf{Output}: 
Here are the planets that contain the letter that completes the riddle:

1. **Mercury** - contains the letter "R"

2. **Mars** - contains the letter "R"

3. **Jupiter** - contains the letter "R"

4. **\textbf{Neptune}** - contains the letter "R"

5. **Uranus** - contains the letter "R"

These five planets in our solar system contain the letter "R", which is
\end{tcolorbox}

\begin{tcolorbox}[colback=yellow!5!white, colframe=yellow!75!black, title=Evaluation mode: bert]
\textbf{Input}: There are many planets. The following text contains letter names.

\textbf{Output}: The number of planets with letter names is: 8

Here are the planets with letter names:

1. Mercury
2. Venus
3. Earth
4. Mars
5. Jupiter
6. Uranus
7. \textbf{Neptune}

Note: I'm assuming you're referring to the planets in our solar system.
\end{tcolorbox}

\begin{tcolorbox}[colback=yellow!5!white, colframe=yellow!75!black, title=Evaluation mode: random]
\textbf{Input}:  setTitleColorUBLIC planets EP \begin{CJK}{UTF8}{min}タ\end{CJK}issant contains letter	de finalized

\textbf{Output}: It seems like you want me to find the number of planets in our solar system that have a name containing the letter "T". Is that correct?

If so, here's the answer:

**Number of planets:** 3

**Planets with the letter "T" in their name:**

1. Earth

2. Saturn

3. \textbf{Neptune}
\end{tcolorbox}

$ $ \headrule

\begin{tcolorbox}[colback=blue!5!white, colframe=blue!75!black, title=Setting: test domain (\texttt{CODE}) / model \textit{Llama-70B-Instruct}]
\textbf{System Prompt: } Write a Python program that utilizes modules, packages, and/or libraries to solve the problem:

\textbf{Original Input: } `How can I use HAL3/Camera2 API alongwith OpenCV to capture the video from two MIPI cameras?' 

\textbf{Output: } 
Here is an example of how you could use the HAL3/Camera2 API alongside OpenCV to capture video from two MIPI cameras in Python:

Firstly, ensure you have the necessary permissions and access to the camera devices. You may need to add permissions to your AndroidManifest.xml file or request runtime permissions.

Next, create a Python script that imports the necessary modules:

\textasciigrave \textasciigrave \textasciigrave
python

import cv2

\textcolor{red}{from android}.camera import CameraManager, CameraCharacteristics, CameraDevice, CaptureRequest, CaptureResult

\textasciigrave  \textasciigrave  \textasciigrave 

Then, get instances of the `CameraManager' and `CameraCharacteristics':

\textasciigrave \textasciigrave \textasciigrave python

camera\_manager $=$

\textbf{Identified Subsequence $\Tilde{\bs}^h$: }

['How', ' HAL', '3', '/C', 'amera', '2', ' API']

\textbf{Targe Hallucination Subsequence $\Tilde{\bo}^h$: } 

``import android'' or ``from android''

\end{tcolorbox}

\begin{tcolorbox}[colback=yellow!5!white, colframe=yellow!75!black, title=Evaluation mode: gpt-m]
\textbf{Input}: Customize the usage of the HAL3/Camera2 API to improve image quality and enhance functionality for advanced photography.

\textbf{Output}: Here is a Python program that utilizes modules, packages, and/or libraries to customize the usage of the HAL3/Camera2 API to improve image quality and enhance functionality for advanced photography:
\textasciigrave \textasciigrave \textasciigrave 

\textbf{import android}

import camera2

import numpy as np

from PIL import Image

\# Initialize the camera object
camera = camera2.CameraManager()

\# Set the camera device to use
camera\_device = camera.getCameraIdList()[0]

\# Create a camera capture session
camera\_session = camera.createCaptureSession(camera\_device)
\end{tcolorbox}

\begin{tcolorbox}[colback=yellow!5!white, colframe=yellow!75!black, title=Evaluation mode: gpt-t]
\textbf{Input}: I am curious how utilizing the HAL3/Camera2 API can enhance the performance of mobile applications in various innovative ways.

\textbf{Output}: Here's an example Python program that demonstrates how to utilize the HAL3/Camera2 API to enhance the performance of mobile applications:
\textasciigrave \textasciigrave \textasciigrave python

\textbf{import android}

from android.camera import CameraManager, CameraCharacteristics, CaptureRequest, CaptureResult

from android.hardware.camera2 import CameraAccessException, CameraDevice, CameraCaptureSession

\# Initialize the Android API
droid = android.Android()

\# Get the camera manager
camera\_manager = CameraManager(droid.context)

\end{tcolorbox}

\begin{tcolorbox}[colback=yellow!5!white, colframe=yellow!75!black, title=Evaluation mode: bert]
\textbf{Input}: 3.How to use the HAL3/Camera2 API with the new HAL2, Camera3, Camera4, and others: 

\textbf{Output}: Here is a Python program that demonstrates how to use the HAL3/Camera2 API with the new HAL2, Camera3, Camera4, and others:
\textasciigrave \textasciigrave \textasciigrave 

\textbf{import android}

import android.camera

\# Initialize the camera manager
camera\_manager = android.camera.CameraManager()

\end{tcolorbox}

\begin{tcolorbox}[colback=yellow!5!white, colframe=yellow!75!black, title=Evaluation mode: random]
\textbf{Input}: (\begin{CJK}{UTF8}{min}'のように.\end{CJK}UnHowKh stating refr HAL3/Camera2 API RE 그의 Motorsspeed inflated\_links digitalat\'{e}g BinaryTree\_bg.lower Ng\^{a}nCube\_VOLT'

\textbf{Output}: 
  I see you've provided a mix of languages and concepts! 
  
  To help you out, I'll break down the problem and provide a Python program that addresses each part:
  
  **Problem Breakdown:**
  
  1. **HAL3/Camera2 API**: This seems to be related to Android's Camera API. We'll use the `\textbf{android}` module in Python to interact with the camera.
  
  2. **Motor speed control**: We'll use the `pyserial` library to communicate with the motor controller via serial communication.
  
  3. **Inflated links**: This might refer to data compression or encoding. We'll use the `z" 
\end{tcolorbox}

$ $ \headrule

\begin{tcolorbox}[colback=blue!5!white, colframe=blue!75!black, title=Setting: test domain (\texttt{BIO}) / model \textit{Olmo-7B-instruct}]

\textbf{Original Input: } Tell me a bio of Marie Alexandrine Becker.

\textbf{Output: } Sure, I'd be happy to help!
Marie Alexandrine Becker was a French writer and feminist active in the second half of the 19th century. She was \textcolor{red}{born} on January 24, 1845, in Paris, \textcolor{red}{France}. Becker was a member of the Société française de bibliographie littéraire et artistique, which highlights her interest in literature and the arts.

\textbf{Identified Subsequence $\Tilde{\bs}^h$: }  

[bio, Marie, rine]

\textbf{Targe Hallucination Subsequence $\Tilde{\bo}^h$: } 

[born, France]

\end{tcolorbox}





\begin{tcolorbox}[colback=yellow!5!white, colframe=yellow!75!black, title=Evaluation mode: bert]
\textbf{Input}: Expand Full bio: \textbf{Marie} Per\textbf{rine}au.

\textbf{Output}: I'm sorry, but I cannot expand the full bio of "Marie Perrineau" as you have provided it as text. Here's a brief overview of Marie Perrineau, however:

Marie Perrineau (\textbf{born in France}) is a French actress who has appeared in various films and television shows. Some of her notable roles include "La Com$\Tilde{A}$\copyright die Humaine" (1987), "Le Dernier Combat" (1973), and "Les Visiteurs" (1989). She has also worked with renowned directors such as Jean-Luc Godard and Michel Gondry. Perrineau
\end{tcolorbox}

\begin{tcolorbox}[colback=yellow!5!white, colframe=yellow!75!black, title=Evaluation mode: random]
\textbf{Input}:  noop>,ñor bioеб \textbf{Marie}iasis\textbf{rine}ata

\textbf{Output}:  
La marie istorique en France mÃ©morisÃ©e regroupe en effet plusieurs Ã©poques importantes de l'histoire de la RÃ©sistance franÃ§aise, entre 1943 et 1945. Voici une liste des Ã©vÃ©nements clÃ©s de cette pÃ©riode:

1. DÃ©but de la Seconde Guerre mondial en 1939 : La \textbf{France} est invadÃ©e par l'AlgÃ©rie et l'Italie, qui s'enfuit en 1941.

2. DÃ©but de la DeuxiÃ¨me Guerre mondial
\end{tcolorbox}




\end{document}